%% file: main.tex
\documentclass{article}
\usepackage{microtype}
\usepackage{graphicx}
\usepackage{subcaption}
\usepackage{stfloats}
\usepackage{booktabs} 
\usepackage{wrapfig}
\usepackage{xcolor}
\definecolor{blu}{RGB}{45,80,160}
\newcommand{\blu}[1]{{\color{blu}#1}}

\usepackage[most]{tcolorbox}
\usepackage{hyperref}

\usepackage[preprint]{icml2026}

\usepackage{amsmath}
\usepackage{amssymb}
\usepackage{mathtools}
\usepackage{amsthm}
\input{macros}
\newtcolorbox{findingbox}[1][]{%
  enhanced,
  colback=pink!15,
  colframe=black,
  width=0.99\linewidth,
  arc=5pt,
  boxrule=0.75pt,
  left=5pt,right=5pt,top=5pt,bottom=5pt,
  boxsep=2pt,
  before upper={\textbf{#1}:\ },
}

\usepackage[capitalize]{cleveref}
\usepackage[suppress]{color-edits}
\addauthor[B]{bv}{magenta}
\addauthor[P]{pd}{orange}
\addauthor[A]{ab}{red}
\addauthor[V]{vs}{pink}
\addauthor[C]{ct}{blue}
\usepackage{enumerate}

\theoremstyle{plain}

\theoremstyle{definition}

\theoremstyle{remark}

\usepackage[textsize=tiny]{todonotes}

\icmltitlerunning{Understanding Contextual Recall in Transformers}

\begin{document}

\twocolumn[
  \icmltitle{Understanding Contextual Recall in Transformers: \\How Finetuning Enables In-Context Reasoning over Pretraining Knowledge}



  \icmlsetsymbol{equal}{*}

  \begin{icmlauthorlist}
    \icmlauthor{Bhavya Vasudeva}{USC}
    \icmlauthor{Puneesh Deora}{UBC}
    \icmlauthor{Alberto Bietti}{Flatiron Institute}
    \icmlauthor{Vatsal Sharan}{USC}
    \icmlauthor{Christos Thrampoulidis}{UBC}
  \end{icmlauthorlist}

  \icmlaffiliation{USC}{University of Southern California}
  \icmlaffiliation{UBC}{University of British Columbia}
  \icmlaffiliation{Flatiron Institute}{Flatiron Institute}

  \icmlcorrespondingauthor{Bhavya Vasudeva}{bvasudev@usc.edu}

  \icmlkeywords{Machine Learning, ICML}

  \vskip 0.3in
]



\printAffiliationsAndNotice{}  

\begin{abstract} 
Transformer-based language models excel at in-context learning (ICL), where they can adapt to new tasks based on contextual examples, without parameter updates. In a specific form of ICL, which we refer to as \textit{contextual recall}, models pretrained on open-ended text leverage pairwise examples to recall specific facts in novel prompt formats.
We investigate whether contextual recall emerges from pretraining alone, what finetuning is required, and what mechanisms drive the necessary representations. For this, we introduce a controlled synthetic framework where pretraining sequences consist of subject-grammar-attribute tuples, with attribute types tied to grammar statistics. We demonstrate that while such pretraining successfully yields factual knowledge, it is insufficient for contextual recall: models fail to implicitly infer attribute types when the grammar statistics are removed in ICL prompts. However, we show that finetuning on tasks requiring implicit inference, distinct from the ICL evaluation, using a subset of subjects, triggers the emergence of contextual recall across all subjects. This transition is accompanied by the formation of low-dimensional latent encodings of the shared attribute type. For mechanistic insight, we derive a construction for an attention-only transformer that replicates the transition from factual to contextual recall, corroborated by empirical validation.
\end{abstract}
\input{sections/intro}

\input{sections/setup}

\input{sections/expts}
\input{sections/construction}

\input{sections/conclusion}

\vspace{-2.5mm}
\section*{Acknowledgments}
\vspace{-1mm}
The authors acknowledge use of the Discovery cluster by USC CARC, and thank Tina Behnia for helpful discussions in the early phases of the work. PD was supported by the UBC 4YF Doctoral Fellowship. VS was supported by NSF CAREER Award CCF-2239265, an Amazon Research Award, a Google Research Scholar Award and
an Okawa Foundation Research Grant. CT and PD were partially supported by the NSERC Discovery Grant No. 2021-03677 and the Alliance Grant ALLRP 581098-22.

\bibliography{refs}
\bibliographystyle{icml2026}

\newpage
\onecolumn
\appendix
\input{sections/app}


\end{document}

%% file: macros.tex
\usepackage{graphicx,amsmath,
amsfonts,amssymb,
bm,
url,breakurl,epsf,color,mathbbol,
fmtcount,semtrans,caption,multirow,comment,boldline,microtype}
\usepackage{cancel}
\usepackage{hyperref}
\usepackage{amssymb}
\usepackage{mathrsfs}
\usepackage{nicematrix}
\usepackage{mathtools}

\hypersetup{
    colorlinks=true,
    linkcolor=[rgb]{1,0.0,0.5},
    filecolor=magenta,      
    urlcolor=[rgb]{0.3,0.75,0.7},
    citecolor=[rgb]{0.1,0.3,0.88}
    }
\usepackage{titlesec}

\usepackage{tikz}
\usepackage{pgfplots}
\usetikzlibrary{pgfplots.groupplots}

\usepackage{subcaption}

\usepackage[utf8]{inputenc} 
\usepackage[T1]{fontenc}    
\usepackage{url}            
\usepackage{booktabs}       
\usepackage{nicefrac}       
\usepackage{microtype}      
\usepackage{dsfont}
\usepackage{xspace}

\usepackage[bottom,hang,flushmargin,multiple,symbol]{footmisc} 
\usepackage{fnpct}

\newcommand{\vb}{{\vct{v}}}

\newcommand{\Ib}{{\mtx{I}}}

\newcommand{\emb}[1]{\bm{h}(#1)}

\newcommand{\obj}{s}
\newcommand{\objb}{\mathcal{S}}
\newcommand{\atr}{a}
\newcommand{\atrb}{\mathcal{A}}
\newcommand{\uatr}{u}
\newcommand{\uatrb}{\mathcal{U}}
\newcommand{\rel}{r}
\newcommand{\relb}{\mathcal{R}}
\newcommand{\gr}{g}
\newcommand{\grm}{\tilde g}
\newcommand{\grmb}{\mathcal{G}}
\newcommand{\sep}{[\text{sep}]}
\newcommand{\pe}{\pb}
\newcommand{\seq}{X}

\newcommand{\gh}{g_h}

\newcommand{\Ntr}{N_{\text{tr}}}
\newcommand{\Nft}{N_{\text{ft}}}
\newcommand{\Sft}{S_{\text{ft}}}

\newcommand{\ellN}{\ell{\Ntr}}

\newcommand{\object}{\text{subject}}

\newcommand{\MC}{\text{TM}}
\newcommand{\MCD}{\texttt{Div}}

\newcommand{\Bio}{\texttt{PT}\,\,}

\newcommand{\ICL}{\texttt{ICL}\,\,}
\newcommand{\ICLG}{\texttt{ICL+Grm}\,\,}
\newcommand{\cc}{||}

\newcommand{\sft}[1]{\bm{\varphi}(#1)}

\newcommand{\X}{{\mtx{X}}}

\newcommand{\Pbb}{{\mtx{P}}}

\newcommand{\Tau}{\mathrm{T}}

\newcommand{\vct}[1]{\bm{#1}}
\newcommand{\mtx}[1]{\bm{#1}}

\newcommand{\cut}[1]{\textcolor{red}{}}
\newcommand{\W}{\vct{W}}

\newcommand{\pb}{\vct{p}}

\newcommand{\ub}{\vct{u}}

\newcommand{\beq}{\begin{equation}}
\newcommand{\eeq}{\end{equation}}
\newcommand{\bea}{\begin{align}}
\newcommand{\eea}{\end{align}}

\newcommand{\vsn}{\vspace{0mm}}
\newcommand{\R}{\mathbb{R}}
\newcommand{\E}{\mathbb{E}}

\newcommand{\nn}{\notag}

\newcommand{\ind}[1]{\mathds{1}\left[#1\right]}

\DeclarePairedDelimiterX{\inp}[2]{\langle}{\rangle}{#1, #2}
\newcommand{\inpb}[2]{\left\langle #1, #2 \right\rangle}

\providecommand{\norm}[1]{\lVert#1\rVert}

\providecommand{\Unif}[1]{\operatorname{Unif}(#1)}

\newcommand{\xb}{{\vct{x}}}

\newcommand{\calM}{\mathcal{M}}

\newtheorem{theorem}{Theorem}
\newtheorem{lemma}{Lemma}[section]
\newtheorem{proposition}{Proposition}



%% file: sections/intro.tex
\vspace{-3.5mm}
\section{Introduction}
\vspace{-1mm}
\label{sec:intro}
Transformer-based large language models (LLMs) exhibit remarkable abilities to extrapolate far beyond  tasks seen during training. A notable instance of this extrapolation is in-context learning (ICL) \citep{brown2020language}, where models can adapt to new tasks based on contextual examples, without parameter updates. 

In this paper, we investigate a specific form of ICL, which we refer to as \emph{contextual recall}. Here, a model trained on open-ended text acquires factual knowledge and is later able to recall specific facts when prompted with example pairs in an unseen format.  To illustrate, pretraining data might include descriptions of various landmarks, from which the model learns multiple attributes for each—such as the country where it is located, the year it was built, or its architectural style. At test time, the model receives a prompt of the form \texttt{[Niagara Falls, Canada. Colosseum, Italy. Parthenon, ]} and must generate \texttt{[Greece]}. The prompt contains no explicit indication that the relevant attribute is ``country'' rather than, say, ``year built''; the model must infer the attribute type from the in-context examples and recall the corresponding fact for the query subject.

The ability to succeed at contextual recall therefore requires both the acquisition of factual knowledge and adaptability to novel prompt formats, since the model may never have seen these facts presented as implicit subject-attribute pairs during pretraining. In this work, we investigate the origins of this capability: 
\vspace{-1.5mm}
\begin{center}
    \textit{Does the ability to do contextual recall emerge naturally from pretraining, or does it necessitate specific finetuning? Furthermore, what mechanisms within the Transformer architecture enable the emergence of this ability? 
    }
\end{center}
\vspace{-1.5mm}

\begin{figure*}[h!]
    \centering
    \includegraphics[width=0.98\textwidth]{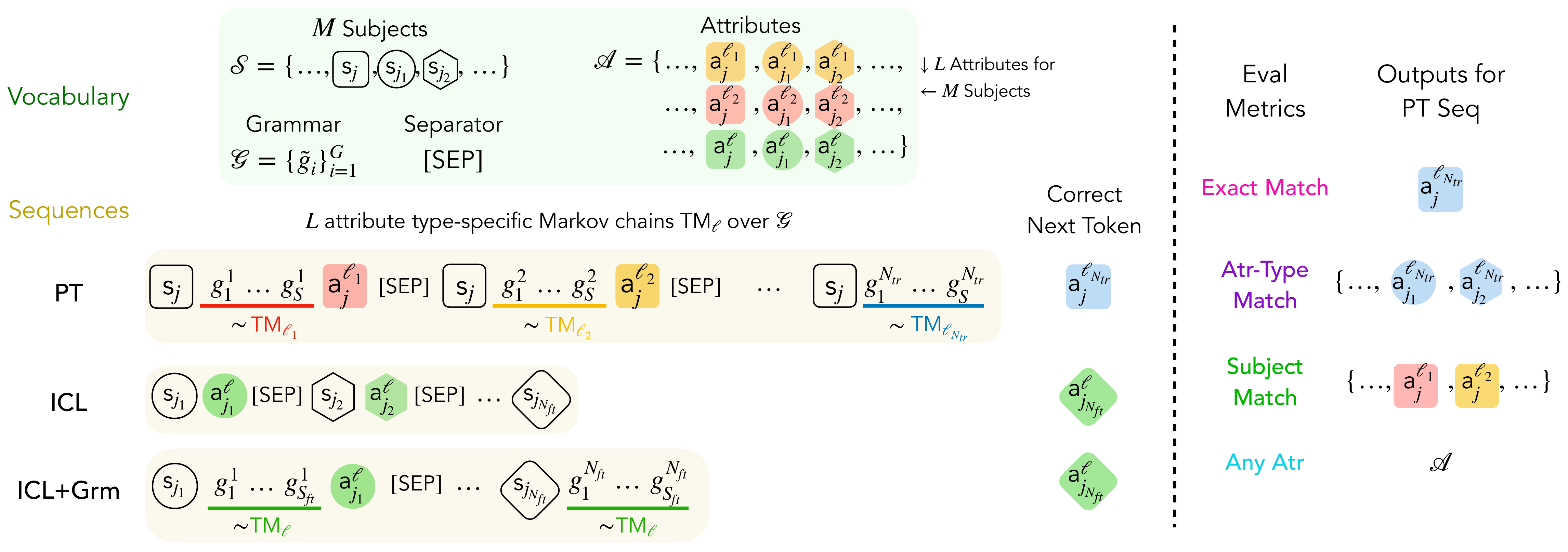}
    \caption{Illustration of the data generation process and the evaluation metrics. The vocabulary consists of $M$ subjects with $L$ attributes each (where the attribute tokens may be shared across the subjects), $G$ grammar tokens and a separator token.\abcomment{Are there $LM$ attributes in total or can they be shared across subjects/relations?} We sample $L$ Markov chains, one for each attribute type. We consider three types of sequences (see \cref{sec:mc_dgp} for details): \Bio sequences contain subject-grammar-attribute tuples, with attribute type information encoded in the grammar sequence statistics, for one subject, \ICL sequences contain pairwise subject-attribute examples for a shared attribute type, but no grammar, and \ICLG sequences are analogous to \ICL sequences, but contain subject-grammar-attribute tuples. We evaluate whether the model's prediction (on \Bio sequences) matches the ground truth attribute (exact match), any attribute of the same type as the last subsequence, any attribute of the same subject as the sequence, or any attribute token.}
    \label{fig:sequences}
\end{figure*}

Prior work has leveraged controlled, synthetic settings to understand both ICL and factual recall in transformers trained from scratch. For ICL, various studies have considered well-defined tasks, such as  linear regression \citep{garg_icl, akyurek_icl, vonOswald_icl, zhang2023trained, fu2024transformers}, discrete functions \citep{bhattamishra2024understanding}, and Markov processes \citep{bietti2023birth,statistical_induction_heads, rajaraman2024transformers, algorithmic_phases,deora2025incontext} that involve function induction, but do not require recalling learned factual information. Importantly, and in contrast to our contextual recall task, the train and test data share the sequence formats in these settings. 
On the other hand, in studies focused on question-answering and recall tasks \citep{allen2023physics,behnia2025factsstatsimpactspretraining,nichani2024understanding,zucchet2025language}, the context contains explicit information (e.g., specific words, or linguistic patterns) required to recall specific learned facts, and does not require the model to perform any implicit inference (e.g., what fact is relevant based on the in-context examples) that is characteristic of ICL. (See \cref{app:rel_work} for a detailed discussion on related work.)

\vspace{-3.5mm}
\paragraph{Contributions.} We summarize our contributions below. 

First, in \cref{sec:mc_dgp}, we introduce a controlled synthetic framework to study the emergence of contextual recall in transformers. The pretraining (\Bio\!\!) sequences, used to instill factual knowledge in the model, contain multiple attributes of a single subject, interspersed with grammar tokens that encode information about each attribute type. The \ICL sequences, used to evaluate contextual recall ability, consist of subject-attribute pairs sharing a common attribute type across different subjects, without any grammar (see \cref{fig:sequences} for an illustration of the data generation process).

In \cref{sec:pt_expts}, we show that transformers trained with \Bio sequences succeed on factual recall, but fail to generalize to \ICL sequences, which require the model to implicitly infer the attribute type from the in-context examples. However, we find that finetuning the model on sequences that are distinct from \ICL sequences, but require implicit inference, using a subset of subjects, enables out-of-distribution generalization on \ICL sequences on the held-out subjects (\cref{sec:ft_expts}). We also probe the effect of some key dataset parameters on the model's performance on \Bio and \ICL sequences in \cref{sec:ablations}. 

In \cref{sec:rep_analysis}, we analyze model representations and find that finetuning induces the formation of low-dimensional encodings of the shared attribute type (task vectors) based on the in-context examples. These representations become more disentangled as the number of in-context demonstrations increases. 

Finally, in \cref{sec:attn_only}, for mechanistic insight, we consider a simpler synthetic task and present constructions for an attention-only model that succeeds on factual recall after pretraining and contextual recall after finetuning, and corroborate them with an analysis of the training dynamics for finetuning as well as empirical validation.

%% file: sections/setup.tex
\vspace{-3mm}
\section{Data Generation Process}
\label{sec:mc_dgp}
\vspace{-1.5mm}

To study contextual recall in transformers, we define three different types of sequences for pretraining, finetuning, and/or evaluation. The first, \Bio\!\!, is designed to instill factual knowledge, while the other two, \ICL and \ICLG\!\!, are used to evaluate or facilitate how the model leverages such knowledge for in-context reasoning tasks.

We first introduce some useful notation. Let $\objb=\{\obj_j\}_{j=1}^M$ 
denote the set of $M$ subjects (e.g., landmarks such as \texttt{Parthenon} ($j\!=\!1$), \texttt{Colosseum} ($j\!=\!2$)). Let the set of unique attributes be denoted as $\uatrb\!=\!\cup_{\ell=1}^L\uatrb_\ell$, where $\uatrb_\ell\!=\!\{\uatr_i^\ell\}_{i=1}^{M_\ell}$, and 
$\ell$ indexes the attribute type (e.g., $\ell\!=\!1$ for ``country'', $\ell\!=\!2$ for ``year built''), and $M_{\ell}$ denotes the number of unique values for index $\ell$ (e.g., \{\texttt{Greece}, \texttt{Italy}, \texttt{Canada}, \dots\} for ``country''). Next, let 
$\atrb=\cup_{j=1}^M\{\atr_j^1,\dots,\atr_j^L\}$ denote the set of attributes assigned to the subjects, where  $\atr_j^\ell$ is the type-$\ell$ attribute of subject $j$ (e.g., $\atr_{2}^1 \!=\! \texttt{Italy}$). Each subject $s_j$ has one attribute per type, giving $L$ attributes per subject. Let $\grmb\!=\!\{\grm_1,\dots,\grm_G\}$ denote a set of $G$ grammar tokens, and $\sep$ denote the separator token. The full vocabulary is $\mathcal{V}\!=\!\objb\cup\uatrb\cup\grmb\cup\{\sep\}$, with size $V\!=\!M\!+\!\sum_{\ell=1}^L M_\ell\!+\!G\!+\!1$. Let $\Unif{\cdot}$ denote the uniform distribution, and $\cc$ denote concatenation.

With this notation established, we now describe the data generation process. The key building block is a \emph{\object-grammar-attribute subsequence}. For subject $\obj_j$, attribute type $\ell$, and grammar sequence length $S$, this subsequence is denoted as $\seq_j^\ell = \bigl[\obj_j , \gr_1,\dots,\gr_S , \atr^{\ell}_j\bigr]$ (e.g., $[\texttt{Parthenon}, \textit{one}, \textit{of}, \textit{the}, \textit{wonders}, \dots, \textit{is in}, \texttt{Greece}]$). 
The grammar sequence $\gr_{1:S}$ is generated using a first-order Markov chain specific to attribute type $\ell$. That is, $p(\gr_t = \grm |\gr_{t-1},\dots, \gr_1) = p(\gr_t = \grm|\gr_{t-1})$ for all $\grm\in\grmb$, with the first token drawn uniformly at random. For each attribute type $\ell\in[L]$, we sample a row-stochastic transition matrix $\MC_\ell \in \R^{G \times G}$, where each row is drawn independently from a Dirichlet prior with parameter $\boldsymbol{\alpha}$. Note that since each attribute type has its own Markov chain, the bigram statistics of the grammar sequence $\gr_{1:S}$ implicitly encode information about the attribute type $\ell$.

We now define the three types of sequences used for pretraining, finetuning, and/or evaluation; see \cref{fig:sequences} for an illustration. 

\vspace{-3mm}
\paragraph{\Bio Sequences (Pretraining).} To instill factual knowledge in the model, we use \Bio sequences which contain information about a specific \object~and its associated attributes (analogous to a short encyclopedia entry for a given landmark). Let $\Ntr$ denote the number of subsequences (subject-grammar-attribute tuples) in each sequence. To generate a \Bio sequence, we first sample a subject $j\!\sim\! \Unif{[M]}$, then draw $\Ntr$ attribute types $\ell_{1:\Ntr}\!\sim\!\Unif{[L]}^{\Ntr}$, and sample each grammar subsequence $\gr^i_{1:S}$ from the corresponding Markov chain $\MC_{\ell_i}$. The resulting sequence is $\tilde\seq\!=\!\seq_j^{\ell_1}\cc \sep\cc \seq_j^{\ell_2}\dots \seq_j^{\ell_{\Ntr}}$, with sequence length $T\!=\!(S+3)\Ntr\!-\!1$. 
To model generic text that does not convey factual information, we also include grammar-only subsequences: with probability $p_G$, each subject-grammar-attribute subsequence is replaced by a grammar-only subsequence $\gr_{1:S+2}$, generated from a separate fixed Markov chain (with transition matrix sampled from a Dirichlet prior). This gives the final sequence $\seq$.
 
\vspace{-3mm}
\paragraph{\ICL Sequences (Finetuning and Evaluation).} To test for contextual recall, we use \ICL sequences that contain \object-attribute pairs with a shared attribute type across different subjects—mirroring the evaluation format from the introduction (e.g., [\texttt{Niagara Falls}, \texttt{Canada}, \sep, \texttt{Colosseum}, \texttt{Italy}, \sep, \texttt{Parthenon}, ]). Let $N$ denote the number of in-context demonstrations. First, we sample an attribute type $\ell\sim\Unif{[L]}$, and $N+1$ subjects $j_{1:N+1}\sim \Unif{[M]}^{N+1}$. The resulting \ICL sequence is $\seq=[\obj_{j_1},\atr_{j_1}^\ell,\sep,\dots,\obj_{j_{N+1}},\atr_{j_{N+1}}^\ell]$. At evaluation, the model observes $N$ subject-attribute pairs and must predict the attribute $\atr_{j_{N+1}}^\ell$ for the final subject $\obj_{j_{N+1}}$, without any grammar cues indicating the attribute type. 
\vspace{-3mm}
\paragraph{\ICLG Sequences (Finetuning).} We also use \ICLG sequences for finetuning.  As the name suggests, these sequences are similar to \ICL sequences (subject-attribute pairs sharing a common attribute type across different subjects), but they also contain grammar tokens between the subject and attribute. (As we will see in \cref{sec:ft_expts}, these sequences help instill the implicit inference ability required to succeed at contextual recall on \ICL sequences.) Let $\Sft$ denote the grammar sequence length used in each subsequence, and $\Nft$ denote the number of subsequences. We sample attribute type $\ell\sim\Unif{[L]}$ and \object s $j_{1:\Nft+1}\sim \Unif{[M]}^{\Nft+1}$, then generate grammar subsequences $\gr^i_{1:\Sft}$ from the corresponding Markov chain $\MC_{\ell}$. The resulting \ICLG sequence is $\seq=\seq_{j_1}^\ell\cc\sep\cc\seq_{j_2}^\ell\dots\seq_{j_{\Nft+1}}^\ell$. Note that \ICL sequences are a special case of \ICLG with $\Sft=0$.

Here, we note that several prior works leverage controlled/synthetic settings to study factual recall or ICL in transformers. In particular, closest to our work are \citet{nichani2024understanding,behnia2025factsstatsimpactspretraining}, which adopt abstracted setups with subject-grammar-attribute triplets to study factual recall. \citet{nichani2024understanding} uses a setup where the relation (or attribute type) is a single, dedicated token, while other tokens in the grammar sequence are sampled randomly (in \cref{sec:attn_only}, we use a similar setup for mechanistic analysis of contextual recall for an attention-only model). In contrast, \citet{behnia2025factsstatsimpactspretraining} propose a setup to more faithfully capture the statistical structure in the grammar sequences, by modeling each template as a randomly sampled Markov chain, but omit the relation token. Our synthetic framework bridges these approaches: inspired by \citet{behnia2025factsstatsimpactspretraining}, we model templates using Markov chains, but we retain the relation or attribute type information by associating the Markov chains with specific attribute types. Crucially, as discussed in \cref{sec:intro}, both \citet{nichani2024understanding} and 
\citet{behnia2025factsstatsimpactspretraining} focus on \emph{factual recall}, 
where the context contains explicit cues about the relevant 
attribute type, whereas our focus is on \emph{contextual recall}, which requires the model to infer the attribute type implicitly from in-context examples, that is not captured in prior frameworks. (See App. \ref{app:rel_work} for a detailed discussion on related work.)

%% file: sections/expts.tex
\vspace{-3mm}
\section{Experimental Results}
In this section, we present the experimental results for pretraining and finetuning.
\vspace{-2.5mm}
\subsection{Pretraining on \Bio Sequences} \label{sec:pt_expts}
\vspace{-1mm}
\textbf{Setup.} We train a two-layer, single-head, decoder-only transformer on \Bio sequences to minimize the standard next-token prediction objective. We use an online training setup: at each iteration, we generate a fresh batch of \Bio sequences using the process described in Sec.~\ref{sec:mc_dgp}. Unless stated otherwise, we fix $M\!=\!256$ subjects, $L\!=\!8$ attribute types, and $S\!=\!80$ grammar length (see App. ~\ref{app:expts} for further details). 

\begin{figure}[h!]
    \centering  \includegraphics[width=0.5\linewidth]{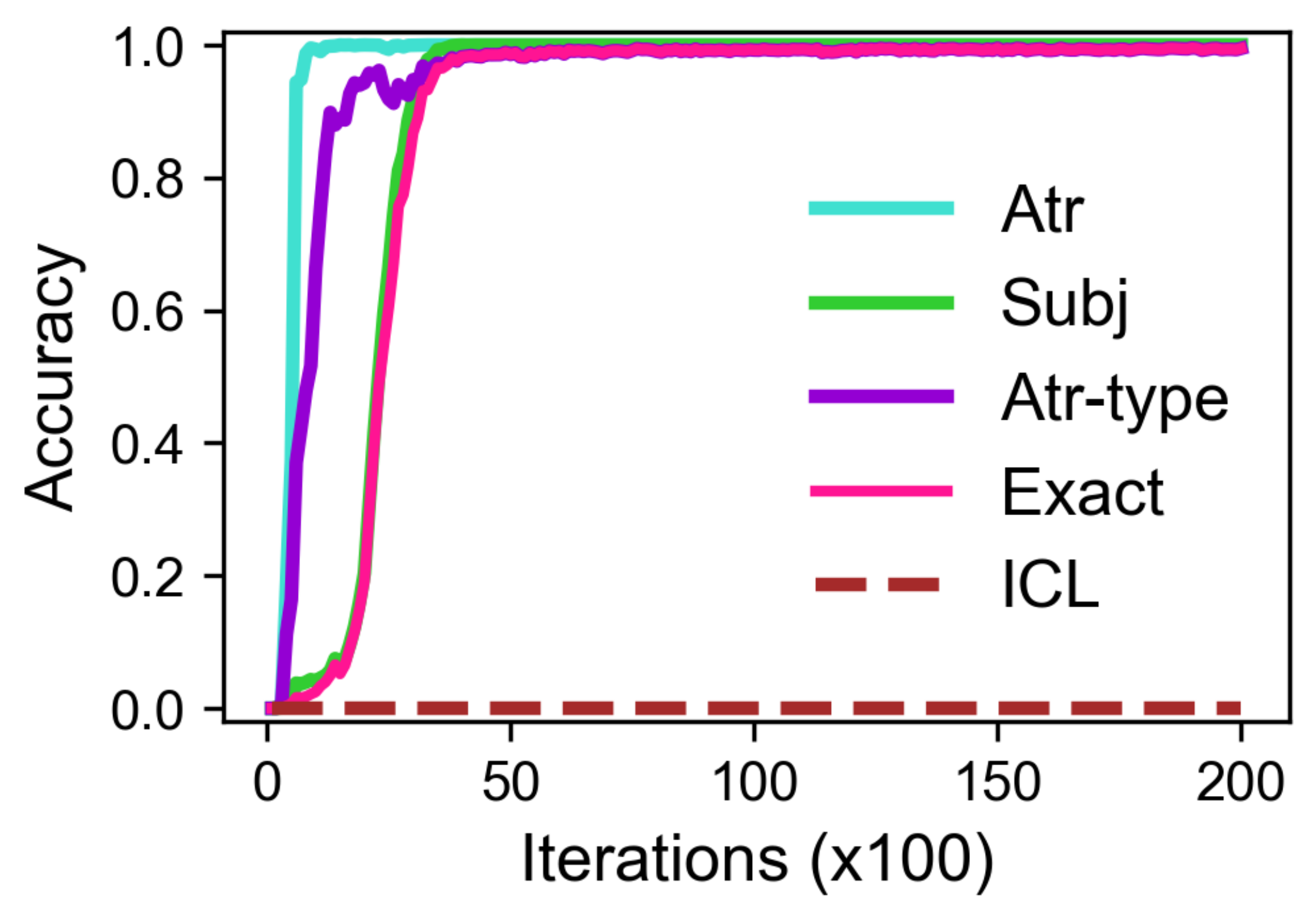}
    \vsn
    \caption{Transformer trained on \Bio sequences performs well on \Bio sequences, but does not generalize to \ICL sequences.}
    \label{fig:pt}
    \vspace{-0.2in}
\end{figure}
\paragraph{Evaluation.} We evaluate the model on two distinct held-out sets: i) \Bio sequences, to evaluate factual recall, and ii) \ICL sequences, to evaluate contextual recall capabilities. For both, we measure the accuracy of predicting the final token given the  preceding context. We report four metrics: 
\emph{Exact Match} (predicted attribute matches the ground truth), \emph{Attribute-Type Match} (predicted attribute has the correct type), \emph{Subject Match} (predicted attribute belongs to the correct subject), and \emph{Attribute Match} (any attribute token is predicted). See \cref{fig:sequences} for an illustration.

\cref{fig:pt} illustrates the performance as pretraining progresses. On the \Bio sequences (solid lines), we observe a stage-wise learning process: the model first learns to predict the attribute type, and then the exact attribute token. Crucially, however, we find that high performance on \Bio sequences does not transfer to \ICL sequences, with the model achieving near-zero accuracy on \ICL sequences. This shows that despite learning the factual associations, the model relies on \emph{explicit} grammar statistics for retrieval and cannot \emph{implicitly} infer the attribute type from in-context examples alone. We summarize this in our first main finding.

\vspace{-2mm}
\begin{center}
\begin{findingbox}[Finding 1]
Pretraining on \Bio sequences does \emph{not} suffice for good \ICL performance.
\end{findingbox}
\end{center}
\vspace{-2mm} 
\subsection{Finetuning Experiments}
\label{sec:ft_expts}
\vspace{-2mm}
Since pretraining on \Bio sequences alone does not suffice for good \ICL performance, we investigate whether finetuning can bridge this gap. Specifically, we ask: can a model originally trained to rely on explicit grammar-based cues be adapted to perform implicit inference from in-context examples? To answer this, we finetune the pretrained model using the standard next-token prediction objective on a new data distribution (detailed below). Crucially, to test for generalization, we finetune on only a subset of subjects, reserving the remaining subjects as a held-out set. We then evaluate the model on \ICL sequences (as defined in \cref{sec:mc_dgp}) where the query subject belongs to this held-out set. Unless otherwise stated, we set the number of demonstrations $\Nft=16$, and use $50\%$ of the total \object s for finetuning.

\begin{figure}[h!]
    \centering  \includegraphics[width=0.9\linewidth]{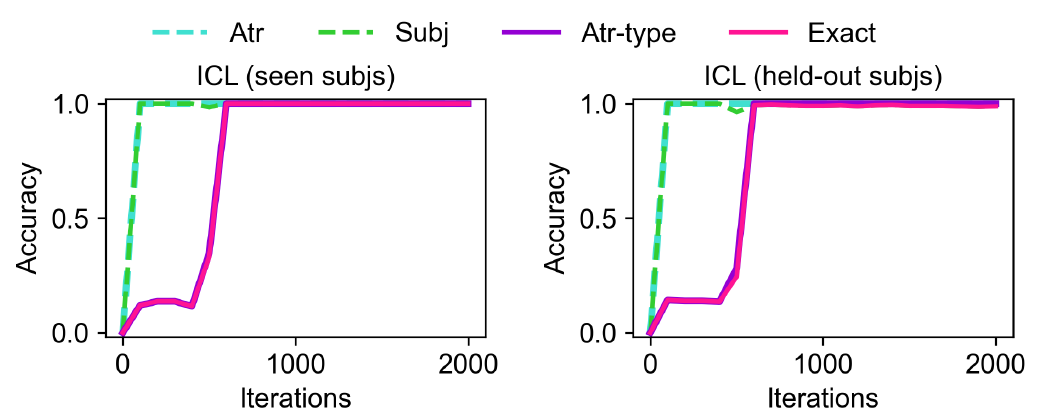}
    \vspace{-0.1in}
    \caption{Finetuning the model pretrained on \Bio sequences, using \ICL sequences with a subset of \object s, leads to good performance on \ICL sequences with held-out query \object s. \abcomment{Hard to see the curves other than green and purple}}
    \label{fig:ft_icl}
    \vspace{-0.15in}
\end{figure}
\vspace{-1mm}
\paragraph{FT on \ICL using a subset of \object s.} 
We first consider the most direct approach: finetuning the pretrained model directly on \ICL sequences. This serves as an upper bound, since the finetuning and evaluation formats are identical, \textit{i.e.}, there is no distribution shift in prompt format. \cref{fig:ft_icl} shows  performance on \ICL sequences with seen and held-out query subjects. As finetuning progresses, the model successfully learns to perform contextual recall, generalizing to held-out subjects. This is our second main finding.

\vspace{-1mm}
\begin{center}
\begin{findingbox}[Finding 2]
Finetuning on \ICL sequences with a subset of \object s leads to generalization on \ICL sequences with held-out \object s.
\end{findingbox}
\end{center}

\vspace{-2mm}
\paragraph{Mechanism of Transfer.} 
Why does finetuning on a subset of subjects enable generalization to the rest? We posit that pretraining and finetuning play distinct but complementary roles. During pretraining, the model \emph{acquires factual knowledge}: given a \Bio subsequence containing $\obj_i$, the model learns to decode the attribute type $\ell$ from the grammar statistics and predict the corresponding attribute $\atr_i^\ell$. During finetuning, the model does not learn new subject-attribute associations; rather, it learns a new \emph{access mechanism:} inferring the attribute type implicitly from the attribute tokens in the context, rather than from explicit grammar cues. Because there is a shared structure between every $\obj_i$ and its type-$\ell$ attribute $\atr_i^\ell$ (across $i\in[M]$), the model can be finetuned on a subset of subjects to learn this implicit inference mechanism and generalize to held-out subjects. We present additional evidence for this mechanism in \cref{sec:attn_only} using a simpler setting.

\vspace{-2mm}

\begin{figure}[h!]
    \centering
\includegraphics[width=0.85\linewidth]{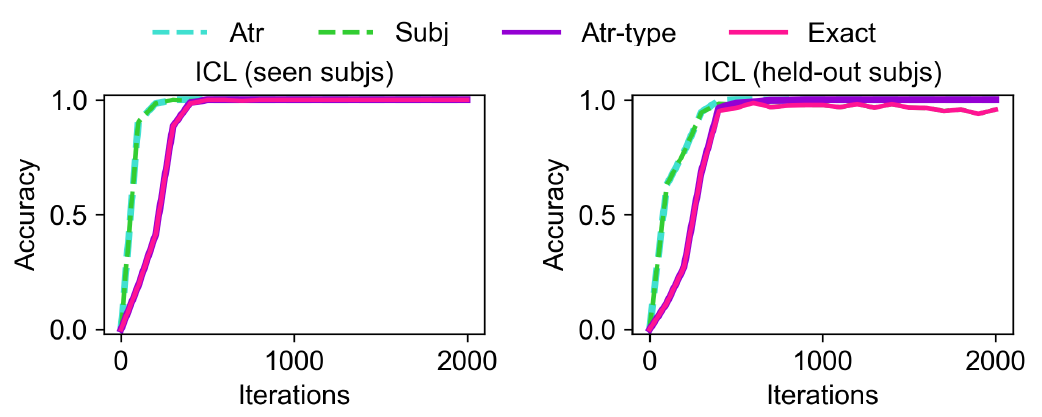}
    \caption{Finetuning the model pretrained on \Bio sequences, using \ICLG sequences with short, variable grammar length with a subset of \object s, leads to good performance on \ICL sequences with held-out query \object s.}\vspace{-0.2in}
    \label{fig:ft_iclwgrm}
\end{figure}

\vspace{-2mm}
\paragraph{FT on \ICLG using a subset of \object s.} We next ask whether finetuning on \ICL sequences is necessary to induce implicit attribute-type inference, or whether other distributions can achieve the same effect. To investigate this, we finetune the pretrained model on \ICLG sequences with short, variable grammar length. Specifically, we use a randomly sampled $\Sft\!\in\!\{1,\dots,5\}$ to generate each \ICLG sequence. Due to the short grammar length, the sequence statistics are insufficient to reliably encode the attribute type, encouraging the model to instead infer it implicitly from the attribute tokens in the context. In this sense, these sequences serve the same purpose as \ICL sequences. 

However, there remains a format distribution shift: during finetuning, attributes always follow a grammar token, whereas in evaluation \ICL sequences, attributes directly follow subjects. The model must therefore learn to bridge this format gap at test time.Using variable grammar lengths prevents the model from overfitting to a fixed positional offset, encouraging it to rely on the context to infer the position of the next attribute token. \cref{fig:ft_iclwgrm} shows that the model successfully generalizes to \ICL sequences for both seen and held-out subjects, as it is finetuned on \ICLG sequences.

\vspace{-2mm}
\begin{center}
\begin{findingbox}[Finding 3]
Finetuning on \ICLG sequences with short, variable grammar length using a subset of \object s leads to out-of-distribution generalization on \ICL sequences with held-out \object s.
\end{findingbox}
\end{center}
\vspace{-2mm}
\begin{figure}[h!]
    \centering  
    \includegraphics[width=0.425\linewidth]{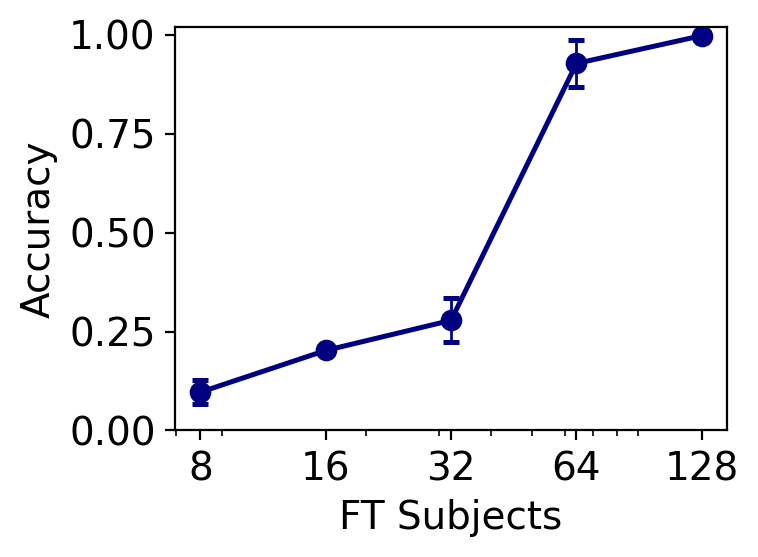}\hspace{1mm}\includegraphics[width=0.425\linewidth]{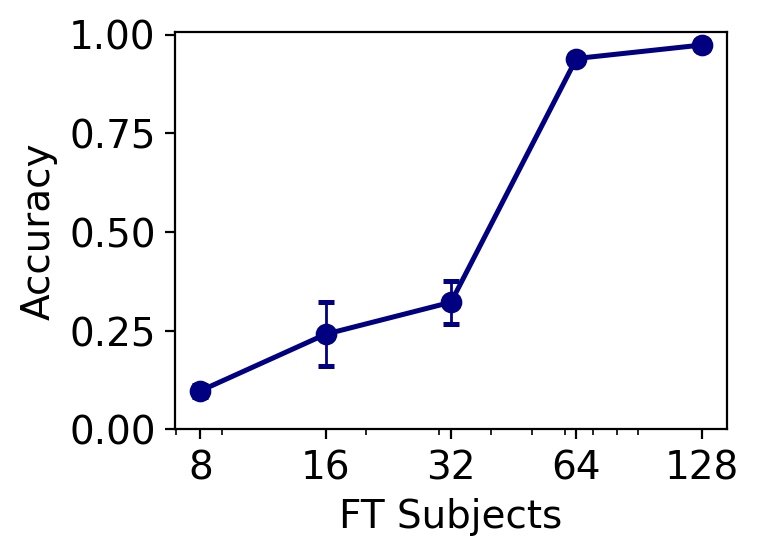}
    \vsn
    \caption{Increasing the number of finetuning \object s with \ICL (left) or \ICLG (right) sequences improves the model's performance on \ICL sequences with held-out query \object s.}
    \label{fig:subj_ft_plot}
    \vspace{-0.125in}
\end{figure}
Next, we investigate whether the format distribution shift inherent in finetuning with \ICLG sequences incurs a cost in terms of sample efficiency compared to direct finetuning with \ICL sequences. In \cref{fig:subj_ft_plot}, we compare the \ICL performance on held-out subjects as we vary the number of finetuning subjects. Interestingly, we find that the sample complexity is comparable across both settings, with performance improving monotonically as the number of finetuning subjects increases.

\vspace{-2mm}
\subsection{Effect of data distribution}\label{sec:ablations}Having established that finetuning on \ICLG enables contextual recall, we now investigate how the properties of the pretraining data influence this capability. Unless stated otherwise, we use the same parameters as in \cref{sec:pt_expts,sec:ft_expts} and use \ICLG sequences with short, variable grammar length for finetuning. 
\vspace{-2mm}

\begin{figure}[h!]
    \centering  \includegraphics[width=0.95\linewidth]{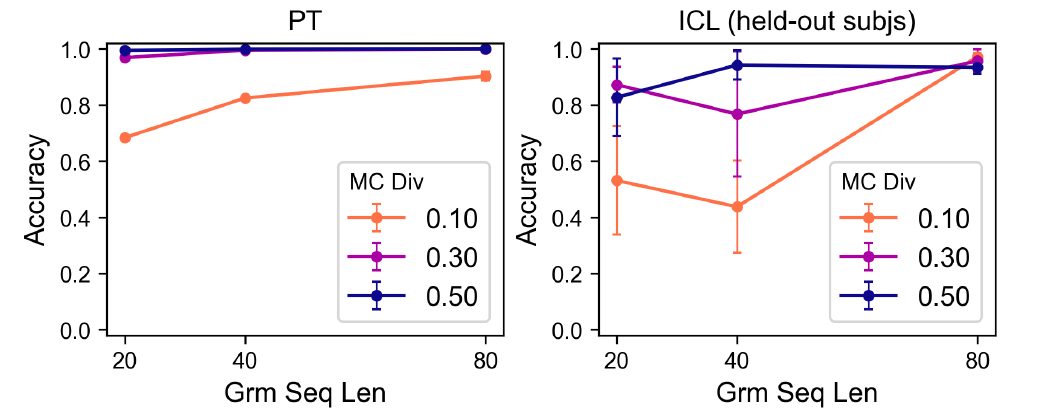}
    \vspace{-0.1in}
    \caption{Increasing the sequence length $S$ and/or Markov chain diversity $\MCD$ used while pretraining improves the model's performance on \Bio sequences as well as on \ICL sequences with held-out \object s after finetuning with \ICLG sequences.}
    \label{fig:div_plot}
    \vspace{-0.15in}
\end{figure}

Keeping other parameters fixed, we first consider the effect of varying the sequence length $S$ and the diversity between the Markov chains for different attribute types, quantified as $\MCD:=\min_{\ell,\ell'}\norm{\MC_\ell-\MC_{\ell'}}_1$, where $\norm{\cdot}_1$ denotes the $\ell_1$-norm. Intuitively, a higher $\MCD$ for a fixed sequence length implies that the grammar statistics for different attribute types are more distinct, making the attribute type easier to distinguish during pretraining. As shown in \cref{fig:div_plot}, increasing the sequence length $S$ and/or the Markov chain diversity $\MCD$ used while pretraining, improves the model's performance both on the \Bio sequences, as well as on \ICL sequences with held-out \object s after finetuning on \ICLG sequences with short, variable grammar length. 
\abcomment{Are grammar blocks always of fixed length~$S$ or of random lengths up to~$S$? Not super clear to me if larger~$S$ helps identify the relation, or helps avoid relying on positions too much. One way to test could be to replace grammars by one token identifying the relation, and repeat it~$S$ times.}\bvcomment{For pretraining, grammar blocks are fixed length $S$, would be interesting to probe effect of positional diversity.}

\vspace{-2mm}
\begin{figure}[h!]
    \centering  \includegraphics[width=0.5\linewidth]{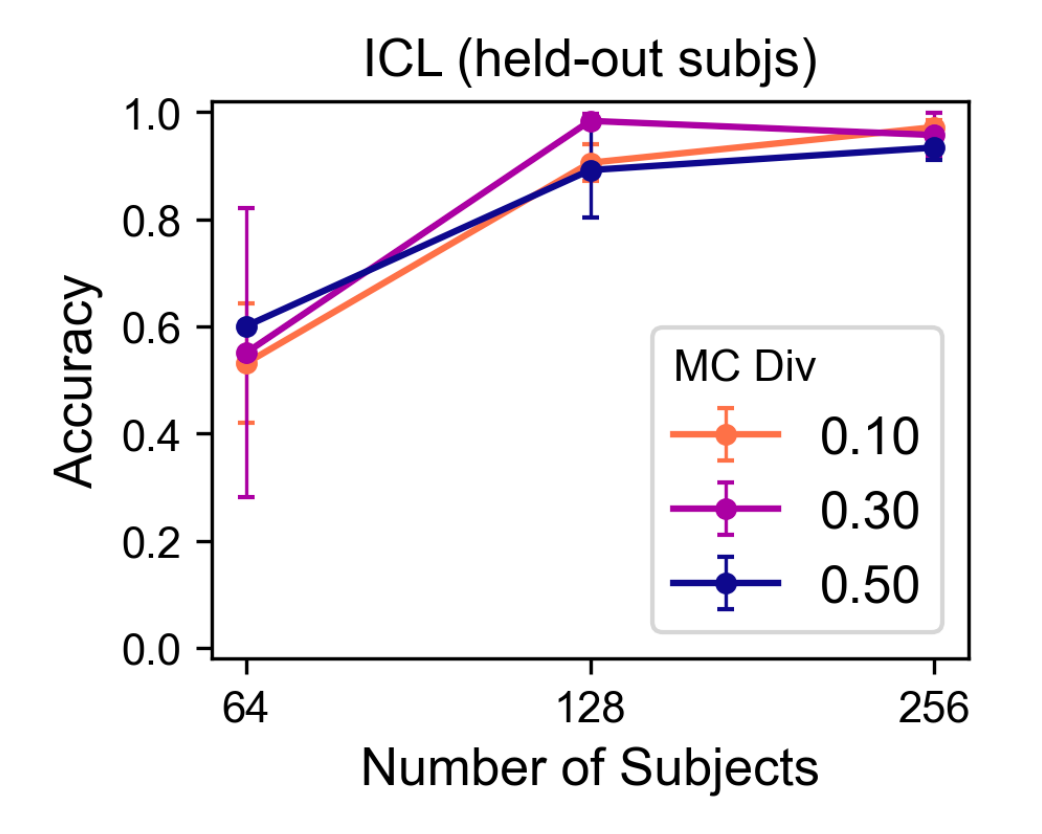}
    \vspace{-0.1in}
    \caption{Increasing the number of \object s used while pretraining improves the model's performance on \ICL sequences with held-out \object s after finetuning with \ICLG sequences.}
    \label{fig:subj_plot}
    \vspace{-0.125in}
\end{figure}

Next, in \cref{fig:subj_plot}, we probe the effect of increasing the number of \object s $M$ used for pretraining. Increasing $M$ improves the performance on \ICL sequences with held-out \object s after finetuning. Here, we use grammar sequence length $S\!=\!80$. Consistent with the observations in \cref{fig:div_plot}, we observe that using a sufficiently large $S$ for pretraining leads to comparable \ICL performance, across different levels of Markov chain diversity $\MCD$. We summarize these results in our next finding. 
\vspace{-1mm}
\begin{center}
\begin{findingbox}[Finding 4]
Increasing the number of \object s, the grammar sequence length, or the separation between the Markov chains for different attribute types used while pretraining improves the final performance on \ICL sequences. 
\end{findingbox}
\end{center}
\vspace{-2mm}
\subsection{Representational Analysis} \label{sec:rep_analysis}
We now analyze the model's internal representations to gain insight into the mechanism underlying contextual recall. Specifically, we investigate whether finetuning on \ICLG sequences induces the formation of a low-dimensional representation that encodes the shared attribute type $\ell$ from the in-context examples.

Let $X_{\ell,t}:= [\obj_{j_0}, \atr^\ell_{j_0}, \sep, \obj_{j_1}, \atr^\ell_{j_1}, \sep,\dots, \obj_{j_{t+1}}]$ denote an \ICL sequence for an attribute type $\ell$, with $t\in[N]$ demonstrations. For each $\ell\in[L]$, we sample $K$ such sequences, denoted $X^k_\ell$ for $k\in[K]$. Let $f_i(\cdot)$ denote the model's representation at layer $i$ at the last token position $\obj_{j_{t+1}}$. For fixed $i$ and $t$,
we measure the cosine similarity for inter- and intra-task representations, averaged across the $K$ contexts for each pair $\ell,\ell'\in[L]$: 
\vspace{-1mm}
\begin{align*}
\bar{C}^t_i(\ell,\ell')
= \tfrac{1}{K^2} \Sigma_{k,k'} 
\cos\!\big( f_i(X^{k}_{\ell,t}),\, f_i(X^{k'}_{\ell',t}) \big).
\vspace{-1mm}
\end{align*}
If the model perfectly disentangles attribute types in its representations, then, for some layer $j$ and number of demonstrations $t$, we would have  $\bar{C}^t_i(\ell,\ell')\!=\!\ind{\ell\!=\!\ell'}$. Additionally, we quantify the representation clustering strength, $\bar{S}^t_i\!\in\![-1,1]$, in terms of a clustering metric using $1-\cos(\cdot,\cdot)$ as the distance and  attribute-type $\ell$ as the cluster label (see App. \ref{app:expts} for details). A high $\bar{S}^t$ indicates that representations of sequences with shared attribute type are tightly clustered and well-separated from those of different attribute types.

\begin{figure}[h!]
    \centering
    \text{$\MCD\approx0.2$}\\
    \includegraphics[width=0.65\linewidth]{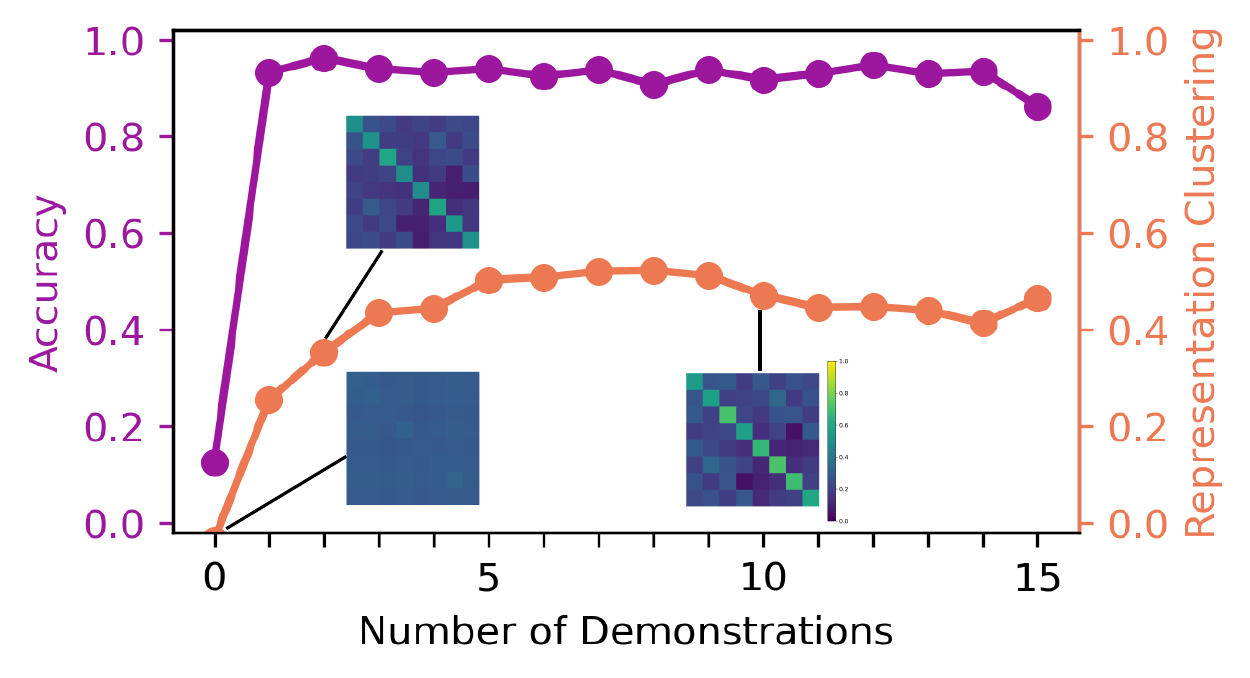}\raisebox{0.3\height}{%
  \includegraphics[width=0.33\linewidth]{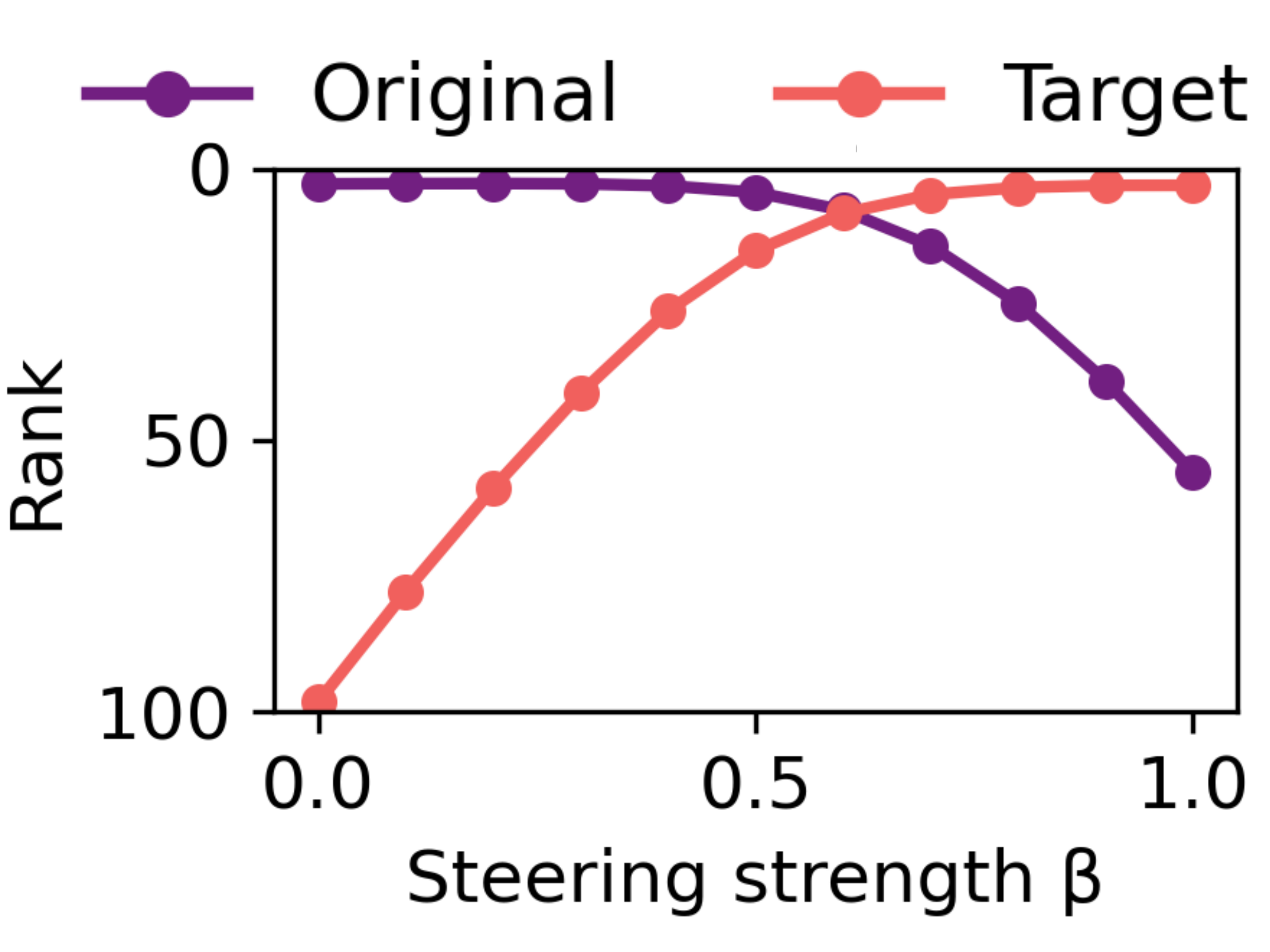}
}
\text{$\MCD\approx0.5$}\\
    \includegraphics[width=0.65\linewidth]{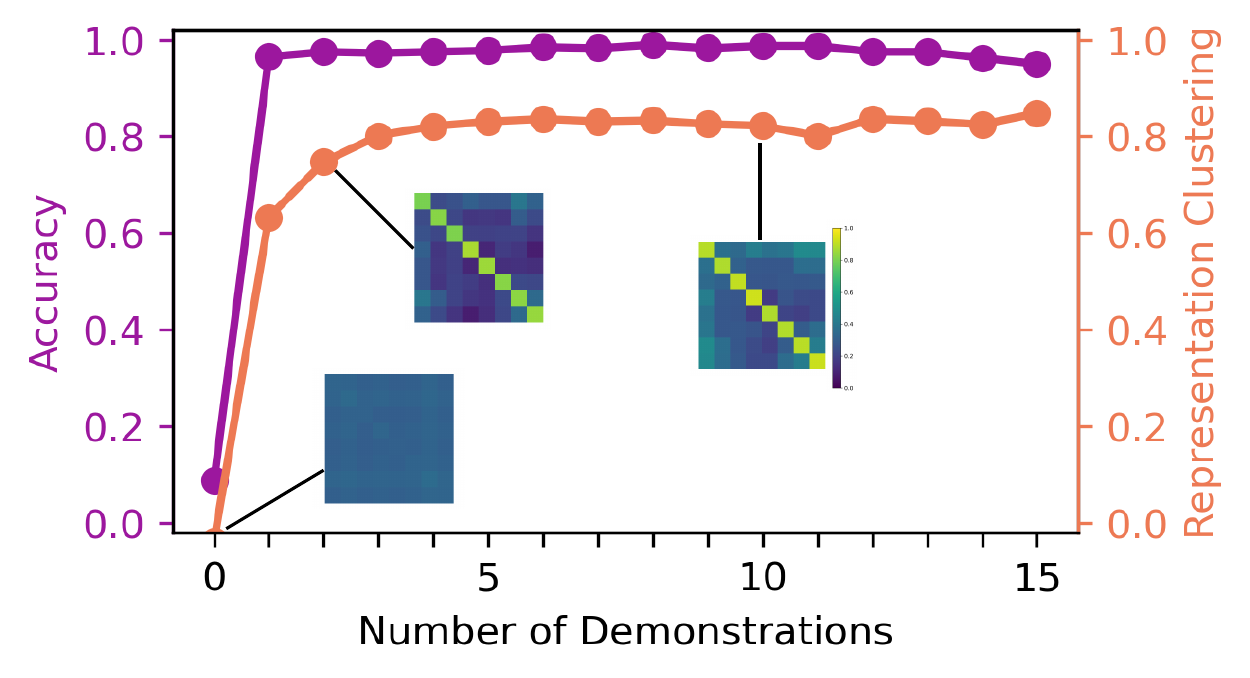}\raisebox{0.3\height}{%
  \includegraphics[width=0.33\linewidth]{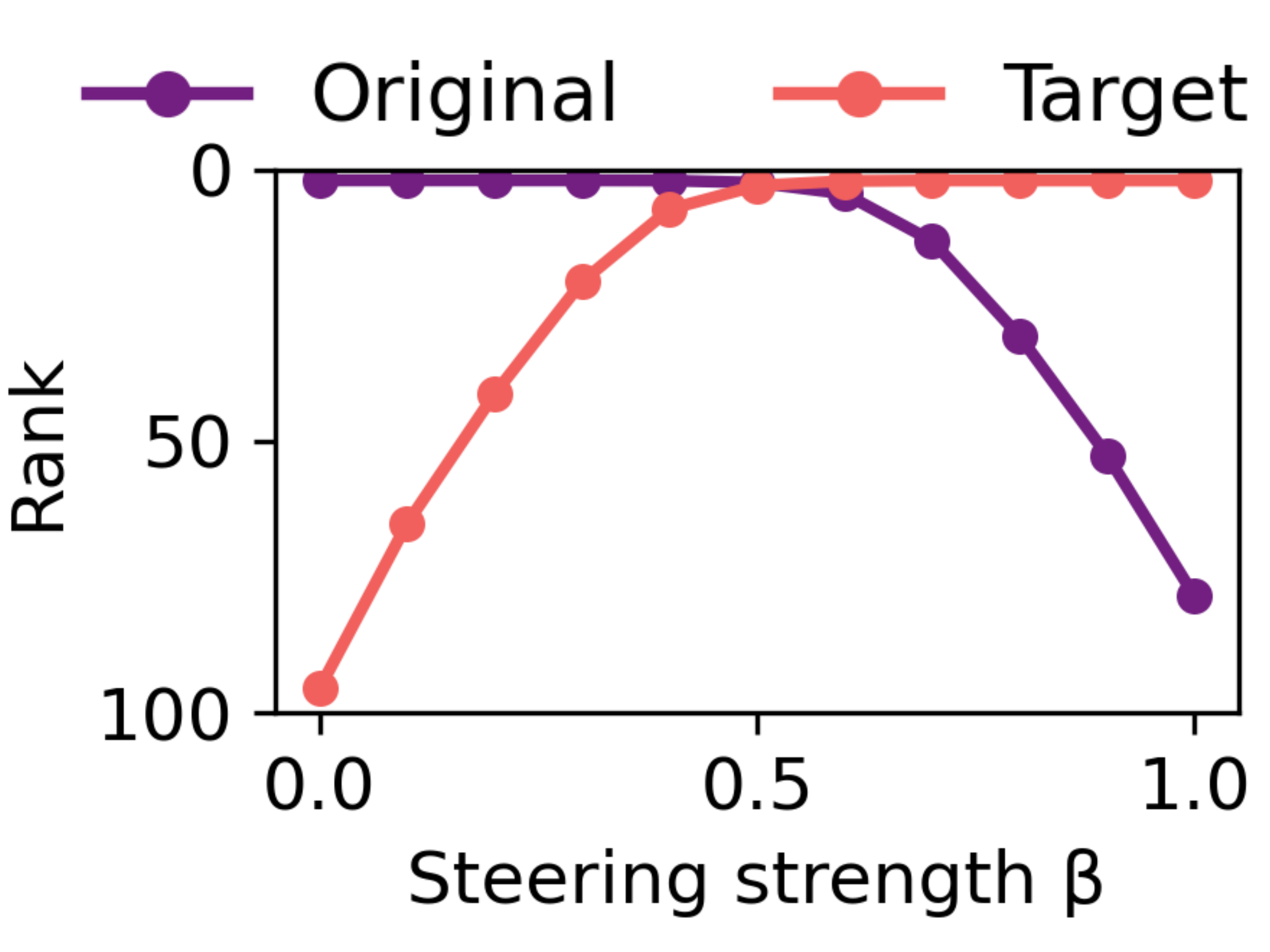}
}
    \caption{Comparison of accuracy on \ICL sequences with held-out subjects, and layer-2 attention representation clustering strength for a finetuned model, as the number of demonstrations is increased  (left), and ranks of the original and target attribute type when using those task representations for steering model output (right; see text for details), using $\MCD\approx0.2$ (top) and $\MCD\approx0.5$ (bottom). Inset figures visualize the averaged cosine similarity for inter- and intra-task representations. In both cases, both performance and clustering strength improve with number of demonstrations. }\vspace{-0.1in}
    \label{fig:clustering}
\end{figure}

\cref{fig:clustering} (left) shows the model's performance on \ICL sequences with held-out subjects, alongside the representation clustering strength $\bar{S}^t$ computed from layer-2 attention representations, as the number of demonstrations $t$ increases. (Results for other layers are included in App. \ref{app:expts}.) We consider two settings: $\MCD\!\approx\!0.2$ (top) and $\MCD\!\approx\!0.5$ (bottom). In both cases,  accuracy and  clustering strength initially improve as $t$ is increased, and eventually saturate. This is also corroborated by the inset figures, which visualize the averaged cosine similarity for inter- and intra-task representations $\bar{C}^t(\ell,\ell')$ across attribute types $\ell,\ell'\!\in\![L]$, after $t\!\in\!\{0,2,10\}$ demonstrations. This confirms that the finetuned model aggregates information from multiple examples to form a stable representation of the attribute type. 

Here, we note that prior work has proposed two main mechanisms for ICL, namely induction heads and task or function vectors, which are low-dimensional representations that encode information about the input-output relations in a given context (see App.~\ref{app:rel_work} for a discussion on related work on task and function vectors for ICL). These results suggest that the formation of these task vectors is the mechanism by which the model succeeds on the contextual recall task in this setting. 

Interestingly, while accuracy is comparable in the two settings, higher $\MCD$ during pretraining leads to stronger representation clustering. This suggests that the separation between the Markov chains (determined by $\MCD$) that encode attribute type information during pretraining, is reflected in the task vector separability and strength after finetuning. 

To corroborate this, we use these representations to steer model outputs. Specifically, let $\vb_\ell:=\tfrac{1}{N}\sum_kf(X^k_{\ell,\Tau})$ denote a task vector for attribute type $\ell$ (where we use layer-2 attention representations at the last token position for sequences of length $\Tau$). For some $\beta\in[0,1]$, we replace $f(X^{k'}_{\ell,\Tau})$ with $(1-\beta) \vb_\ell+\beta \vb_{\ell'}$, and then compare the rank of the original vs the target attribute based on the model's predicted probabilities. \cref{fig:clustering} (right) shows the comparison after averaging across multiple contexts with different source and target attribute types. In both cases, as $\beta$ increases, the model output is steered towards the target. However, for lower $\MCD$ (top), the required $\beta$ is much larger compared to the case with higher $\MCD$ (bottom).

\vspace{-2mm}
\begin{center}
\begin{findingbox}[Finding 5]
The finetuned model encodes attribute type information from in-context examples in layer-2 attention representations. Clustering strength increases with both the number of in-context examples and the separation between the attribute-specific Markov chains during pretraining.
\end{findingbox}
\end{center}

%% file: sections/construction.tex
\vspace{-3mm}
\section{Mechanistic Analysis in a Simpler Setting}
\label{sec:attn_only}
\vspace{-1.5mm}
To gain mechanistic insight into how pretraining and finetuning contribute to contextual recall, we study an analytically tractable setting that 
preserves the qualitative behavior observed in Findings 1-2 of \cref{sec:pt_expts}. Specifically, we simplify in two ways:
i) we encode attribute type information in explicit ``relation'' tokens rather than grammar sequence statistics, and ii) we use a one-layer attention-only model. 
Below, we first describe our setup, then present constructions for accurately predicting the final attribute token on both the \Bio and \ICL sequences, then analyze the training dynamics for FT, followed by experimental validation for the constructions.

\vspace{-2.5mm}
\paragraph{Data Setting.} 
Let $\relb:=\{\rel_1,\dots,\rel_L\}$ denote a set of relation tokens, one per attribute type. The \Bio sequences are generated in a similar manner to \cref{sec:mc_dgp}, with a modification to the grammar subsequences. For each subsequence $\gr_{1:S}$, we first sample a position $i\sim\Unif{[S]}$ and a relation token $\rel\sim\Unif{\relb}$, and set $\gr_i=\rel$. Then, we sample the remaining entries independently as $\gr_{i'}\sim\Unif{\grmb}$ for all $i'\neq i$. The \ICL sequences are generated  as in \cref{sec:mc_dgp}.
\vspace{-2.5mm}
\paragraph{Model.} We consider a single-layer attention-only model with fixed relative positional encoding added to the key inputs. The output at position $t$ for head $h\in[H]$ and the final model output are defined as follows: 
\begin{align}\label{eq:attnonly}
\gh^t(\X)&\!=\!\X_{1:t}^\top\sft{(\X_{1:t}\!+\!\Pbb_{\Tau-t+1:\Tau})\W_{KQ}^h\xb_t}\nonumber\\
f^t(\X)&\!=\!\sum_{h=1}^H\underbrace{\W_{OV}^h\gh^t(\X)}_{f_h^t(\X)}.
\end{align}
where $\X\!\in\! \R^{\Tau\times d}$ denotes the input sequence, $\xb_t\!\in\!\R^d$ denotes its $t^{\text{th}}$ row, $\Pbb\!=\![\emb{p_{-\Tau+1}},\dots,\emb{p_{0}}]$ denotes the positional encodings, where $p_{-\Tau+1}, \dots, p_0$ denote the position indices, and $\emb{\cdot}\!\in\! \R^{d}$ denotes the embedding for a vocabulary token or position index. We consider one-hot embeddings and positional encodings and set $d\!=\!V\!+\!\Tau$, so that these subspaces are orthogonal.\vscomment{i may have missed, did we define what the fixed positional encoding was? (maybe refer to it then?)} $\sft{\cdot}$ denotes the softmax. For convenience, we define $\W_{OV}^h\!=\!(\W_O^h)^\top\W_V^h$, $\W_{KQ}^h\!=\!(\W_K^h)^\top\W_Q^h$, for head $h$ using output,value, key, query weight matrices $\W_O^h,\W_V^h,\W_K^h,\W_Q^h\!\in\!\R^{d_h\times d}$. We use the shorthand $f(\cdot)\!=\!f^\Tau(\cdot)$ and $\gh(\cdot)\!=\!\gh^\Tau(\cdot)$ to denote the outputs at the last token position. 
The model prediction~is
\begin{align*}
v^*=\arg\max_{v\in\mathcal{V}}\emb{v}^\top f(\X),
\end{align*}
where we use fixed embeddings and tied unembeddings. 

\vspace{-2.5mm}
\paragraph{Construction for \Bio Sequences.} 
We first investigate whether an attention-only model is expressive enough to perform  factual recall on \Bio sequences of the form 
\begin{align}
\X&=\X_{j_\star}^{\ell_1}\cc [\emb{\atr_{j_\star}^{\ell_1}\!}]\cc \X_{j_\star}^{\ell_2}\dots \X_{j_\star}^{\ell_{\Ntr}}, \text{where}\nn\\
\X_{j_\star}^{\ell_1} \!\!&=\! [\emb{\obj_{j_\star}},\emb{\gr_{1}\!},..., \emb{\rel_{\ell_1}\!},...,
\emb{\gr_{S}\!},
\emb{\sep}].\label{eq:const_pt_x}
\end{align} Here, $\xb_\Tau=\emb{\sep}$, $j_\star \!\sim \!\Unif{[M]}$ denotes the subject index, and the correct last-token prediction is $\emb{\atr_{j_\star}^{\ell_{\Ntr}}}$. We first show that there exists an attention-only model capable of perfectly predicting the next token for such sequences. For simplicity, we focus on last-token prediction here; the construction extends to predictions at any position (see App.~\ref{app:const}).
\begin{proposition}[Informal]\label{prop:pt}
    Consider the input $\X$ in \cref{eq:const_pt_x}. There exists a single-layer attention-only model such that when weights $\W_{KQ}^h$ are scaled by a sufficiently large $\beta$, it correctly predicts the last token, returning the attribute $\atr_{j_\star}^{\ell_{\Ntr}}$ corresponding to the sequence's subject $\obj_{j_\star}$ and attribute type $\ell_{\Ntr}$.
\end{proposition}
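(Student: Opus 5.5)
We build an explicit two-head model in which one head retrieves the subject and the other retrieves the most recent relation token. The output then adds a subject-dependent vector and a relation-dependent vector whose sum peaks at $\atr_{j_\star}^{\ell_{\Ntr}}$. Throughout we use that embeddings and positional encodings are one-hot and mutually orthogonal, so $\W_{KQ}^h$ and $\W_{OV}^h$ can be written as sums of rank-one terms $\emb{u}\emb{v}^\top$ acting on chosen coordinates. The query is always $\xb_\Tau=\emb{\sep}$, so only the column $\W_{KQ}^h\emb{\sep}$ matters for the attention scores.

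\textbf{Head 1 (subject head).} Set $\W_{KQ}^1=\sum_{j=1}^M \emb{\obj_j}\emb{\sep}^\top$. Every subject token in $\X$ is the same $\obj_{j_\star}$, since \Bio sequences have a single subject. Hence all subject positions receive score $\beta$ and every other position receives score $0$. As $\beta\to\infty$, $g_1(\X)\to\emb{\obj_{j_\star}}$, with error at most $O(\Tau e^{-\beta})$ in norm.

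\textbf{Head 2 (relation head).} Set $\W_{KQ}^2=\sum_{\ell}\emb{\rel_\ell}\emb{\sep}^\top+\sum_{k}c_k\,\emb{p_{-k}}\emb{\sep}^\top$, using the relative positional encodings with a positive recency bias $c_k$ decreasing in distance $k$. Relation tokens score $1+c_{\text{dist}}$; other tokens score only $c_{\text{dist}}\le c_0$. We choose, for example, $c_k=\epsilon(\Tau-k)/\Tau$ with $\epsilon<1/2$. Then the last relation token $\rel_{\ell_{\Ntr}}$ strictly dominates earlier relation tokens, by a margin of at least $\epsilon/\Tau$, and dominates all non-relation tokens, by a margin of at least $1-\epsilon$. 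Scaling by $\beta$ gives $g_2(\X)\to\emb{\rel_{\ell_{\Ntr}}}$.

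\textbf{Output and decoding.} Set $\W_{OV}^1=\sum_{j,\ell}\emb{\atr_j^\ell}\emb{\obj_j}^\top$ and $\W_{OV}^2=\sum_{\ell}\sum_{u\in\uatrb_\ell}\emb{u}\emb{\rel_\ell}^\top$. In the limit, the logit of a candidate $v$ is
\[
\ind{v\in\{\atr^1_{j_\star},\dots,\atr^L_{j_\star}\}}+\ind{v\in\uatrb_{\ell_{\Ntr}}}.
\]
This equals $2$ exactly for $v=\atr_{j_\star}^{\ell_{\Ntr}}$ and is at most $1$ otherwise, assuming the sets $\uatrb_\ell$ are disjoint across types. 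If attribute values can coincide across types for the same subject, ties with value $2$ only ever involve the correct token itself. Since the limiting margin is $1$, the exponentially small softmax leakage of order $\Tau e^{-\beta\cdot\text{margin}}$ cannot flip the argmax once $\beta\gtrsim \log(\Tau)\cdot\Tau/\epsilon$. Equivalently, it suffices to take $\beta$ large compared with $\log \Tau$ divided by the smallest score gap.

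The main obstacle is head 2. Earlier subsequences also contain relation tokens, with possibly different types, so the head must isolate the most recent one. The positional bias has to break ties among relation tokens without letting a nearby grammar token beat a distant relation token. The relation may sit anywhere among the $S$ grammar positions, so its distance to the final $\sep$ ranges from $2$ to $S+1$. This is why we require the bias range $\epsilon$ to be smaller than the token-type gap $1$ while keeping it strictly monotone; the resulting margin $\epsilon/\Tau$ sets how large $\beta$ must be. For predictions at other positions (App.~\ref{app:const}), the same heads work with the query restricted to $\sep$.
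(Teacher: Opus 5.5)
Your proof is correct. It has the same skeleton as the paper's. A subject head attends to the single, repeated subject $\obj_{j_\star}$. A relation head isolates the latest relation token $\rel_{\ell_{\Ntr}}$ and writes $\sum_{u\in\uatrb_{\ell_{\Ntr}}}\emb{u}$. The answer is the unique token favored by both heads. The paper's formal Proposition~\ref{prop:pt_} adds a third ``grammar'' head only for positions $t<\Tau$, so for the last-token statement two heads suffice, as in the paper's sketch.

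Two components differ.

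\textbf{Subject head.} Yours \emph{boosts} the subject's own attributes, $\W_{OV}^1=\sum_{j,\ell}\emb{\atr_j^\ell}\emb{\obj_j}^\top$, giving logits $2$ versus at most $1$. The paper's \emph{suppresses} every attribute not associated with $\obj_{j_\star}$, giving logits $1$ versus at most $0$. The margin is the same. Yours is the associative-memory construction of \citet{nichani2024understanding}; the paper explicitly contrasts it with its own and chooses suppression because that matches the heads it finds after training.

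\textbf{Selecting the latest relation.} You use a strictly decreasing recency bias of slope $\epsilon/\Tau$. The paper adds the block indicator $\pe=\sum_{i=1}^{S+2}\emb{p_{-i+1}}$, so the final block's relation token scores $2$ and every other position at most $1$.
\begin{itemize}
\item The window exploits the fixed block length to get a unit gap, so $\beta\gtrsim\log\Tau$ suffices, versus your $(\Tau/\epsilon)\log\Tau$.
\item Your bias ignores block structure, so it would survive variable-length subsequences. You also give an explicit finite-$\beta$ bound, which the paper skips.
\item The window matters downstream. These weights are the initialization in Theorem~\ref{th:ft_dynamics}, where, with $\Tau\le S+2$, the window makes the relation head attend uniformly over an ICL prompt. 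That uniform attention produces the nonzero $\W_{OV}^{\text{rel}}$ gradient, which then drives the second-step $\W_{KQ}^{\text{rel}}$ update.
\item With your bias and large $\beta$, that head would instead lock onto the final $\sep$. Its OV gradient $\ub\,\emb{\sep}^\top$ is removed by the right projection, so $\W_{OV}^{\text{rel}}$ never acquires attribute-to-attribute terms and the $\W_{KQ}^{\text{rel}}$ gradient stays zero.
\end{itemize}

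Two small points. First, the last relation token lies at distance $1$ to $S$ from the final $\sep$, since it replaces one of $\gr_1,\dots,\gr_S$; it is not at distance $2$ to $S+1$. This is harmless. Second, the sets $\uatrb_\ell$ are disjoint by the paper's vocabulary definition (size $M+\sum_\ell M_\ell+G+1$), so the hedge about coinciding values is unnecessary. As worded it is also not right: with $\W_{OV}^1$ summed over pairs $(j,\ell)$, a value shared by two types of $\obj_{j_\star}$ would get logit $2$ from the subject head alone. That would tie the correct token whenever $\ell_{\Ntr}$ is neither of those types.
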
    
\vspace{-1mm}
\textit{Proof Sketch.} We present a construction with a $3$-head model. At a high level, two heads, which we call the \textbf{relation} and \textbf{subject} heads, are responsible for the prediction at the target position (following $\emb{\sep}$), and the third \textbf{grammar} head for other cases. For simplicity, we present the construction here assuming that the output of the grammar head is zero ($f_\text{grm}(\X)\!=\!0$), since the outputs from this head do not affect the conclusions in this case, as shown in the full proof in App.~\ref{app:const}.

First, the \textbf{relation head}  attends to the most recent relation token $r_{\ell_{\Ntr}}$ and maps it to the sum of \emph{all} attributes of type $\ell_{\Ntr}$. Specifically, 
\begin{align*}
\W_{KQ}^{\text{rel}}&=\beta\Big(\sum_{\ell}\emb{\rel_\ell}+\pe\Big)\emb{\sep}^\top,\\ \W_{OV}^{\text{rel}}&=\sum_{\ell}\Big(\sum_{j}\emb{\uatr_{j}^\ell}\Big)\emb{\rel_{\ell}}^\top,
\end{align*}
where $\pe:=\sum_{i=1}^{S+2}\emb{p_{-i+1}}$ 
With $\beta\to\infty$, the head's outputs become $g_{\text{rel}}(\X)=\emb{\rel_{\ell_{\Ntr}}}$, i.e., the most recent relation token in the sequence, 
and $f_\text{rel}(\X)=\sum_{j}\emb{\uatr_j^{\ell_{\Ntr}}}$, \textit{i.e.}, all attributes of type $\ell_{\Ntr}$.

On the other hand, the \textbf{subject head}  attends to the subject token $s_{j_\star}$ and filters out irrelevant attributes (those not associated with $s_{j_\star}$). Specifically, \begin{align*}
\W_{KQ}^{\text{subj}}&=\beta\Big(\sum_j\emb{\obj_j}\Big)\emb{\sep}^\top,\\  \W_{OV}^{\text{subj}}&=-\sum_j\Big(\sum_{j'\neq j,\ell}\emb{\atr_{j'}^\ell}\Big)\emb{\obj_j}^\top.
\end{align*}
Then, with $\beta\to\infty$, this head outputs $g_\text{subj}(\X)=\emb{\obj_{j_\star}}$, the subject token in the sequence, and $f_\text{subj}(\X)=-\Big(\sum_{\uatr}\emb{\uatr}-\sum_{\ell}\emb{\atr_{j_\star}^\ell}\Big)$, \textit{i.e.}, the negative of all attributes that are not associated with subject $\obj_{j_\star}$.

Combining these outputs, the model isolates the specific attribute $\atr_{j_\star}^{\ell_{\Ntr}}$, i.e., the attribute reinforced by both heads:  $f(\X)=\sum_{j}\emb{\uatr_j^{\ell_{\Ntr}}}+\sum_{\ell}\emb{\atr_{j_\star}^\ell}-\sum_{\uatr}\emb{\uatr}$, yielding the correct prediction $v^*=\atr_{j_\star}^{\ell_{\Ntr}}$.

\vspace{-2mm}
\paragraph{Construction for \ICL Sequences.} Next, consider \ICL sequences of the form
\begin{align}
\X\!\!=\![\emb{\obj_{j_1}\!},\emb{\sep},\emb{\atr^{\ell_\star}_{j_1}\!},\!...,\emb{\obj_{j_{\Nft+1}}\!},\emb{\sep}],\label{eq:const_ft_x}
\end{align}
with correct last token prediction $\emb{\atr^{\ell_\star}_{j_{\Nft+1}}}$, where $\ell_\star \sim\Unif{[L]}$ denotes the shared attribute type. We show that there exists an attention-only model that perfectly predicts the last token for such sequences.
\begin{proposition}[Informal]\label{prop:icl}
    Consider the input $\X$ in \cref{eq:const_ft_x}. There exists single-layer attention-only model such that when weights $\W_{KQ}^h$ are scaled by a sufficiently large $\beta$, 
    it correctly predicts the last token, returning the attribute $\atr^{\ell_\star}_{j_{\Nft+1}}$ corresponding to the query subject $\obj_{j_{\Nft+1}}$ and the shared attribute type $\ell_\star$.
\end{proposition}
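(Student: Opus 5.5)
We prove Proposition~\ref{prop:icl} by reusing the \textbf{subject head} of Proposition~\ref{prop:pt} unchanged and replacing the relation head with an \textbf{attribute-type head}. The new head infers $\ell_\star$ from the attribute tokens already in the context rather than from an explicit relation token. In the ICL input \cref{eq:const_ft_x}, the last token is $\xb_\Tau=\emb{\sep}$, the query subject $\obj_{j_{\Nft+1}}$ sits at the position immediately before it, and earlier subjects, separators and attributes $\atr^{\ell_\star}_{j_i}$, $i\le \Nft$, precede it. The target logit vector has the same form as in Proposition~\ref{prop:pt}:
\begin{align*}
f(\X)=\sum_j\emb{\uatr_j^{\ell_\star}}+\sum_\ell\emb{\atr^\ell_{j_{\Nft+1}}}-\sum_{\uatr}\emb{\uatr}.
\end{align*}
In this vector the correct attribute scores $1$, any other attribute scores at most $0$, and subjects, separators and grammar tokens score $0$. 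To break the tie with non-attribute tokens we add a small positive constant $c<1$ to every attribute, or simply note that attributes are the only tokens with positive score.

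\emph{Subject head.} Set $\W_{KQ}^{\text{subj}}=\beta\bigl(\sum_j\emb{\obj_j}\bigr)\emb{\sep}^\top$ plus a positional term $\beta\,\emb{p_{-1}}\emb{\sep}^\top$, and keep $\W_{OV}^{\text{subj}}$ as before. The positional term gives the query subject at relative position $-1$ a strictly larger score than the other $\Nft$ subjects, which all score $\beta$. Hence as $\beta\to\infty$ the softmax concentrates on $\obj_{j_{\Nft+1}}$, and
\begin{align*}
f_{\text{subj}}(\X)=-\Bigl(\sum_{\uatr}\emb{\uatr}-\sum_\ell\emb{\atr^\ell_{j_{\Nft+1}}}\Bigr).
\end{align*}
Without the positional bias the head would average over all subjects in the context, which is the main failure mode to rule out.

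\emph{Attribute-type head.} Set $\W_{KQ}^{\text{type}}=\beta\bigl(\sum_{\uatr}\emb{\uatr}\bigr)\emb{\sep}^\top$, so that attention is spread uniformly over the $\Nft$ attribute positions and $g_{\text{type}}(\X)$ equals their average. Set
\begin{align*}
\W_{OV}^{\text{type}}=\sum_\ell\Bigl(\sum_{j}\emb{\uatr_j^\ell}\Bigr)\Bigl(\sum_{i}\emb{\uatr_i^\ell}\Bigr)^\top,
\end{align*}
which maps every type-$\ell$ attribute to the sum of all type-$\ell$ attributes. Because all in-context attributes share the type $\ell_\star$ and the types are disjoint, this gives $f_{\text{type}}(\X)=\sum_j\emb{\uatr_j^{\ell_\star}}$ exactly, whatever the mix of attributes in the context. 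This is the key step, since it turns implicit inference into a linear map. If attribute tokens can be shared across types, the map would have to be replaced by type-specific subspaces; we assume the sets $\uatrb_\ell$ are disjoint, as the notation suggests.

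\emph{Combining the heads and controlling errors.} Summing the two heads gives the target $f(\X)$ above. For finite $\beta$ each softmax deviates from its limit by $O(\Tau e^{-\beta\delta})$, where $\delta$ is the score gap, which is $1$ here. The correct attribute keeps a margin of $1$ over every competing token, so taking $\beta$ larger than a threshold of order $\log \Tau$ preserves the argmax. The main thing to check is this margin, together with the case $\Nft=0$ or a context containing no attributes, which the statement implicitly excludes by assuming $\Nft\ge1$.
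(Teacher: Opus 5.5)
Your construction is essentially the paper's (formal version: \cref{prop:icl_}). It has the same subject head with the extra $\emb{p_{-1}}$ key term to single out the query subject, and a relation head that attends uniformly to the in-context attributes and maps them through $\sum_\ell \ub_\ell\ub_\ell^\top$ to all type-$\ell_\star$ attributes. You only drop the grammar head and the $\emb{\rel_\ell}$ terms, take $\uatrb'_\ell=\uatrb_\ell$ (which the paper also needs), and add an explicit finite-$\beta$ margin estimate that the paper leaves implicit.

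One caveat, which applies to the paper as well: attribute values are shared across subjects, so your claimed $f_{\text{subj}}(\X)$ and margin of $1$ hold only if $\W_{OV}^{\text{subj}}\emb{\obj_j}$ is read as the set sum $-\sum_{\uatr\notin\{\atr_j^1,\dots,\atr_j^L\}}\emb{\uatr}$ rather than a multiplicity-weighted sum over $j'\neq j$. With that reading, the correct attribute is the unique positive logit, so no tie-breaking constant is needed.
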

\vspace{-1mm}
\textit{Proof Sketch.} We adapt the construction from \cref{prop:pt} with minimal changes, 
but with one crucial modification that reflects the role of finetuning.
The \textbf{subject} head operates similarly, attending to the query subject and filtering out attributes not associated with it. The key difference lies in the \textbf{relation} head: since \ICL sequences contain no explicit relation tokens, this head must now \emph{infer the attribute type implicitly} by attending to the attribute tokens in the context and mapping them to all attributes of the same type $\ell_\star$.

Specifically, for the \textbf{subject} head, we set 
\begin{align}
\W_{KQ}^\text{subj}&=\beta\Big(\sum_j\emb{\obj_j}+\blu{\emb{p_{-1}}}\Big)\emb{\sep}^\top,\label{eq:kq-subj}\\ \W_{OV}^{\text{subj}}&=-\sum_j\Big(\sum_{j'\neq j,\ell}\emb{\atr_{j'}^\ell}\Big)\emb{\obj_j}^\top.\nonumber
\end{align}
Note that as compared to the construction for \Bio sequences (\cref{prop:pt}), $\W_{OV}^{\text{subj}}$, which contains subject-attribute information, is unchanged. The only difference here is that $\W_{KQ}^\text{subj}$ now contains $\emb{p_{-1}}$, so that when $\beta\to\infty$, $g_\text{subj}(\X)=\emb{\obj_{j_{\Nft+1}}}$, \textit{i.e.}, the query subject, and hence, $f_\text{subj}(\X)=-\Big(\sum_{\uatr}\emb{\uatr}-\sum_{\ell}\emb{\atr_{j_{\Nft+1}}^\ell}\Big)$.

We now discuss the \textbf{relation} head. Since in our experiments, we finetune with a subset of subjects, let $\objb'\subset \objb$ denote a subset of subjects. Next, for each attribute type $\ell$, let $\uatrb'_\ell:=\cup_{j\in\objb'}\{\atr_j^\ell\}$ denote the set of unique attributes seen during finetuning. With this notation established, we set 
\begin{equation} \label{eq:combined_rel}
\begin{aligned}
\W_{KQ}^\text{rel}&=\beta\sum_{\ell}\Big(\emb{\rel_\ell}+\blu{\ub_\ell'}\Big)\emb{\sep}^\top, \quad \ub_\ell':=\sum_{\uatr\in\uatrb'_\ell}\emb{\uatr},\\ \W_{OV}^\text{rel}&=\!\sum_{\ell}\!\Big(\Big(\!\sum_{u\in\mathcal{U}_\ell}\emb{\uatr}\Big)\emb{\rel_\ell}^\top\!+\!\blu{\ub_\ell'\ub_\ell'^\top}\!\Big). \vspace{-2mm}
\end{aligned}
\vspace{-2mm}
\end{equation}

In contrast to the construction for \Bio sequences, where both $\W_{KQ}^\text{rel}$ and $\W_{OV}^\text{rel}$ rely on the relation tokens $\emb{\rel_\ell}$, in this case, they rely on the attributes seen during finetuning, \textit{i.e.}, $\sum_{\uatr\in\uatrb'_\ell}\emb{\uatr}$.
In this case, when $\beta\to\infty$, $g_\text{rel}(\X)=\tfrac{1}{\Nft}\sum_i\emb{\atr_{j_i}^{\ell_\star}}$, \textit{i.e.}, the average of the attributes that appear in the context, and $f_\text{rel}(\X)=\sum_{j}\emb{\uatr_{j}^{\ell_\star}}$, retrieving the sum of all attributes of the shared type $\ell_\star$.

Finally, by combining the outputs from the two heads, the model isolates the specific attribute $\atr_{j_{\Nft+1}}^{\ell_\star}$ that is encouraged by both, \textit{i.e.}
$f(\X)=\sum_{j}\emb{\uatr_{j}^{\ell_\star}}+\sum_{\ell}\emb{\atr_{j_{\Nft+1}}^\ell}-\sum_{\uatr}\emb{\uatr}$. Then, the final prediction $v^*=\atr_{j_{\Nft+1}}^{\ell_\star}$.

Note that, for the \textbf{relation} head to output all attributes of the same type as the \emph{attributes} that appear in the context, for any type $\ell$, the set of attributes seen during finetuning $\uatrb'_\ell$ should match the full set $\uatrb_\ell$. However, since attributes are shared among subjects, the set of subjects seen during finetuning can be much smaller than the full set. This helps explain why finetuning on a subset of subjects can enable generalization on held-out subjects.

\vspace{-2.5mm}
\paragraph{Training Dynamics for Finetuning.} In this section, we consider a 1-layer 2-head attention-only model initialized with the construction for \Bio sequences in \cref{prop:pt}, and examine its training dynamics for finetuning on \ICL sequences with the last-token prediction objective (using a subset of subjects, but with all attributes seen during FT). We show that after updating the weights with one or two steps of GD, under certain assumptions on the data generation process parameters, the model can attain perfect accuracy on \ICL sequences with held-out query subjects.
\begin{theorem}[Informal]\label{th:ft_dynamics}
    Consider a 1-layer 2-head attention-only model defined in \cref{eq:attnonly}, initialized with the weights from the construction for \Bio sequences in \cref{prop:pt}. Assuming sufficiently large $M,M_{\text{ft}}$, small $N_{\text{ft}}$, and $\mathcal{U}_\ell'=\mathcal{U}_\ell,\,\forall \ell\in[L]$, after one step of GD on $\W_{KQ}^\text{subj}$, one step of projected GD on $\W_{OV}^\text{rel}$, and two steps of GD on $\W_{KQ}^\text{rel}$, for last-token prediction on the population of \ICL sequences with a subset of $M_{\text{ft}}$ subjects, the updated weights satisfy
    \begin{equation}\label{eq:kq-subj2}
\W_{KQ}^{\text{subj}}\approx\beta\Big(\sum_j\emb{s_j}+\lambda_1\emb{p_{-1}}\Big)\emb{\sep}^\top,
\end{equation}\vspace{-2mm}
\begin{equation}
\begin{aligned}
\W_{KQ}^{\text{rel}}&\approx \beta\Big(\sum_\ell\Big(\emb{r_\ell}+\tfrac{2}{3}\tfrac{\lambda_2}{LU_L}
\ub_\ell\Big)+\tfrac{\lambda_2}{\Nft}\sum_{i=1}^{\Nft}
\emb{p_{1-3i}}\\-\lambda_2\Big(\tfrac{1}{\Tau}\pb&+\tfrac{1}{3}\tfrac{1}{M}\sum_s\emb{s}\Big)\Big)\emb{\sep}^\top, \quad \ub_\ell:=\sum_{u\in\mathcal{U}_\ell}\emb{\uatr},\\
\W_{OV}^{\text{rel}}&\approx\sum_{\ell}\ub_\ell\emb{\rel_{\ell}}^\top+\lambda_3\Nft\sum_{\ell}
\ub_\ell\ub_\ell^\top\\&+\lambda_3\frac{U_L^2}{M_{\text{ft}}}\sum_{j=1}^{M_{\text{ft}}}
\Big( \sum_{\ell} \emb{\atr_j^\ell}\Big) \emb{\obj_j}^\top,\vspace{-3mm}
    \end{aligned}
    \label{eq:combined_rel2}
    \end{equation}
    where $\lambda_{1},\lambda_2,\lambda_3$ are some constants. These weights, with sufficiently large $\beta$, lead the model to attain perfect accuracy on \ICL sequences for all subjects.
\end{theorem}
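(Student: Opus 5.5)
We compute the population cross-entropy gradients at the \Bio construction of \cref{prop:pt} (with the grammar head dropped, so the model has only the subject and relation heads), one parameter block at a time, and then check that the updated weights realize the \ICL construction of \cref{prop:icl}. We work in the $\beta\to\infty$ regime at initialization, so attention patterns are hard argmaxes and the softmax Jacobian terms are tractable. The loss is last-token cross-entropy on \ICL sequences $\X=[\emb{\obj_{j_1}},\emb{\sep},\emb{\atr^{\ell_\star}_{j_1}},\dots,\emb{\obj_{j_{\Nft+1}}},\emb{\sep}]$, with subjects drawn from the $M_{\text{ft}}$ finetuning subjects.

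\textbf{Step 1 (evaluate the initialization on \ICL sequences).} At initialization there are no relation tokens and no positional component in the key--query matrix of either head.
\begin{itemize}
\item The subject head attends uniformly over all $\Nft+1$ subject tokens.
\item The relation head attends via $\pe$ to the last $S+2$ positions, which gives a mixture of recent tokens.
\item $f_{\text{rel}}=0$, since $\W_{OV}^{\text{rel}}$ only reads relation tokens, which are absent.
\end{itemize}
The resulting logits are therefore nearly uniform over the attributes of the context subjects. Writing $\hat p$ for the softmax output and $y$ for the target, the residual $\hat p-\emb{y}$ has a simple closed form in expectation. For large $M$ and small $\Nft$ we can neglect repeated subjects and attributes shared by two context subjects, which is where those assumptions enter.

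\textbf{Step 2 (one step on $\W_{KQ}^{\text{subj}}$).} The gradient is $\E[\sum_t \phi_t(\xb_t+\pb_t-\bar\xb)\,\emb{\obj_t}^\top \W_{OV}^{\text{subj}\top}(\hat p-\emb{y})\,\emb{\sep}^\top]$. The query-subject position is the only one whose value does not penalize $y$, so it receives a strictly larger score. Because that position sits at relative offset $p_{-1}$, the update adds $\lambda_1\emb{p_{-1}}\emb{\sep}^\top$. Token components cancel by symmetry across subjects, which yields \cref{eq:kq-subj2} and the subject head now isolates the query subject.

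\textbf{Step 3 (projected GD on $\W_{OV}^{\text{rel}}$, two steps on $\W_{KQ}^{\text{rel}}$).} The $\W_{OV}^{\text{rel}}$ gradient is $-\E[(\emb{y}-\hat p)g_{\text{rel}}^\top]$. Its dominant correlation is between $\emb{y}$, of type $\ell_\star$, and the in-context attributes of type $\ell_\star$ inside $g_{\text{rel}}$. Projecting onto the span of $\{\ub_\ell\ub_\ell^\top\}$ and of the subject--attribute outer products gives the $\lambda_3\Nft\ub_\ell\ub_\ell^\top$ term. It also gives the residual $\lambda_3 \tfrac{U_L^2}{M_{\text{ft}}}\sum_j(\sum_\ell\emb{\atr_j^\ell})\emb{\obj_j}^\top$ from attention to the query subject, and we show this residual is dominated.

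For $\W_{KQ}^{\text{rel}}$, the first gradient step is zero or uninformative because $\W_{OV}^{\text{rel}}$ cannot yet read attributes. This is why two steps are needed. After the $\W_{OV}$ update, the second step's gradient $\E[\sum_t\phi_t(\xb_t+\pb_t-\bar\xb)\,\xb_t^\top\W_{OV}^{\text{rel}\top}(\hat p-\emb{y})]$ has the following pieces:
\begin{itemize}
\item attribute keys give positive weight $\propto\ub_\ell$;
\item attribute positions $p_{1-3i}$ get averaged weight;
\item the centering term $\bar\xb$ produces the $-\tfrac1\Tau\pb$ and $-\tfrac1{3M}\sum_s\emb{s}$ corrections;
\item the $2/3$ factor arises from the token fractions.
\end{itemize}
Together these give \cref{eq:combined_rel2}.

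\textbf{Step 4 (accuracy).} With $\beta\to\infty$, the positional plus attribute scores make the relation head attend to the in-context attributes (subject keys are penalized), so $g_{\text{rel}}=\tfrac1{\Nft}\sum_i\emb{\atr^{\ell_\star}_{j_i}}$. Since $\uatrb'_\ell=\uatrb_\ell$, this gives $f_{\text{rel}}\propto\ub_{\ell_\star}$ plus a small subject term. The subject head supplies $-\sum_u\emb{u}+\sum_\ell\emb{\atr^\ell_{j_{\Nft+1}}}$. Choosing the step size so that $\lambda_3\Nft$ is of order one makes $\atr^{\ell_\star}_{j_{\Nft+1}}$ the unique logit maximizer, exactly as in \cref{prop:icl}, and this holds for held-out subjects too because $\W_{OV}^{\text{subj}}$ is unchanged.

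\textbf{Main obstacle.} The hardest part is the second $\W_{KQ}^{\text{rel}}$ gradient. We must control the expectation over random subject and attribute collisions and the softmax centering, and show that the attribute and position direction strictly beats every competitor, especially grammar-free subject and $\sep$ positions. Our first approach is to compute exact expected scores per token class (subject, $\sep$, attribute) by symmetry, drop $O(1/M)$ collision terms, and verify the margin condition explicitly.
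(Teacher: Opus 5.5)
Your outline follows the paper's proof step for step:
\begin{itemize}
\item hard-attention gradients at the \texttt{PT} construction;
\item a $\emb{p_{-1}}$ component in $\W_{KQ}^{\text{subj}}$;
\item a vanishing first gradient for $\W_{KQ}^{\text{rel}}$;
\item an attribute--attribute block in $\W_{OV}^{\text{rel}}$ that drives the second $\W_{KQ}^{\text{rel}}$ step;
\item accuracy as in \cref{prop:icl}.
\end{itemize}
The gap is the ingredient that makes all of these computations legitimate: replacing the residual $\hat p-\emb{y}$ by $-\emb{y}$. You use this implicitly (Steps 2 and 3 track only $\emb{y}$) but never justify it. Your Step 1 description also suggests the prediction concentrates on the context subjects' attributes, which is not the case. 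At initialization the relation head contributes nothing. The subject head gives logits in $[-1,0]$ on attribute tokens and exactly $0$ on every other token, so $\pi_v\le e/V$ for all $v$ (your $\hat p_v$), and the prediction is spread over the whole vocabulary. Since $V\ge M\gg mL$ with $m=\Nft+1$, consider the part of $\hat p^\top\W_{OV}^{\text{subj}}\xb_i$ that varies across subject positions. It is at most $\sum_{u\in\atrb_j}\pi_u\le eL/V$ for the subject $\obj_j$ at position $i$, which is negligible next to the $\Theta(1)$ contribution of $\emb{y}$. This is \cref{lem:logit-bound}, and it is where large $M$ is actually used. For the second $\W_{KQ}^{\text{rel}}$ step, the same control must be re-established with the updated $\W_{OV}^{\text{rel}}$. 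That requires capping the OV step size, namely $\alpha_2\big((m-1)\Nft/\Tau+mU_L^2/(M_{\text{ft}}\Tau)\big)\le 1$ as in \cref{lem:logit-bound-new}, so that logits stay $O(1)$ and $\pi_v\le e/M$. The only step-size condition in your proposal concerns the final margin, not this.

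Two further points affect the bookkeeping.

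First, attribute collisions are not $O(1/M)$. Attribute values are shared: each of the $U_L$ values of a type is held by $M/U_L$ subjects, and this sharing is exactly what permits $\uatrb'_\ell=\uatrb_\ell$ with $M_{\text{ft}}<M$. So each context subject shares the query's type-$\ell_\star$ attribute with probability $1/U_L$. The paper keeps these events through the count $C_y$, which is the source of the factor $1-U_L^{-1}$ in $\alpha_1$. They only rescale constants, so your claim that the query is the only position not penalizing $y$ survives up to this correction. But ``drop $O(1/M)$ collision terms'' is the wrong justification. Likewise, ``small $\Nft$'' in the formal statement means $\Tau\le S+2$ (together with $M\gg mL$). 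Under it, the initial relation head attends uniformly to the entire sequence; its KQ matrix does contain $\pe$, contrary to your first bullet. This uniform attention is what gives the clean $1/\Tau$ gradients.

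Second, the paper's projection is right-multiplication of the OV gradient by $\Ib-\emb{\sep}\emb{\sep}^\top$. After it, the $\ub_\ell(\sum_j\emb{\obj_j})^\top$ term is dropped using $M_{\text{ft}}\gg U_L$. This projection is not cosmetic. The raw gradient contains $\tfrac{m}{L\Tau U_L}\big(\sum_u\emb{u}\big)\emb{\sep}^\top$. In the second $\W_{KQ}^{\text{rel}}$ step, this gives $\sep$ positions a residual $r_i$ roughly $U_L$ times larger in magnitude than attribute positions. The relation head would then attend to $\sep$ and output the type-agnostic $\sum_u\emb{u}$. Your projection onto the span of the target matrices also removes this term, but it is chosen to produce the answer. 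Use the paper's projection and derive the resulting form instead.
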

We defer the full statement and its proof to \cref{app:const}. 
Interestingly (albeit under certain assumptions), the updated weights after finetuning recover the construction in \cref{prop:icl} (upto some additive factors that don't change the model functionality), as we discuss next.

\textit{Subject head.} Comparing $\W_{KQ}^{\text{subj}}$ in \cref{eq:kq-subj2} 
with \cref{eq:kq-subj}, the only difference is the second term is scaled by $\lambda_1$ ($\propto \!\eta^{\text{subj}}_{KQ}$, the step-size). 
When $\beta \!\to \!\infty$, regardless of this coefficient, the attention concentrates on the query subject 
$\obj_{j_{\Nft+1}}$. Crucially,  $\W_{OV}^{\text{subj}}$ does not need to be updated since the subject-attribute associations are already learned during pretraining.

\textit{Relation head.} Next, comparing $\W_{KQ}^{\text{rel}}$ in \cref{eq:combined_rel2} with \cref{eq:combined_rel}, there are three terms in \cref{eq:combined_rel2}: the first term matches the construction (other than the coefficient $\tfrac{2\lambda_2}{3LU_L}$), while the second term contains positional encodings for the positions at which attributes appear; these terms boost the attention on the attribute tokens in the context. The third term contains i) a positional encoding term (which contributes uniformly to every token in the context, leaving the softmax scores unchanged), and ii) a subject token term that suppresses the attention at the subject tokens. For $\W_{OV}^{\text{rel}}$, the first two terms in \cref{eq:combined_rel2} match the construction (other than the coefficient $\lambda_3\Nft$). The third term encodes subject-attribute associations for the finetuning subjects. However, since the KQ part for this head isolates attribute tokens from the context, this term does not contribute to the final output.

\textit{Two-step Cascade.} Here, we briefly discuss why $\W_{KQ}^{\text{rel}}$ requires two update steps (see App.~\ref{app:const} for details). 
In the first step, the gradient update for $\W_{KQ}^{\text{rel}}$ is \emph{zero} 
because \ICL sequences contain no relation tokens. However, $\W_{OV}^{\text{rel}}$ begins learning attribute-to-attribute 
associations (for shared attribute types) in this step. In the second step, these updated 
OV weights generate a nonzero gradient for $\W_{KQ}^{\text{rel}}$, which then 
learns to attend to attribute tokens in the context.

\vspace{-2mm}
\paragraph{Experimental Validation.} We present experimental evidence to corroborate our theoretical constructions, using a 1-layer 3-head attention-only model (see App.~\ref{app:expts} for details).

\textit{Validation of Pretraining Mechanism.} We first train the model using \Bio sequences (\cref{eq:const_pt_x}) with the next-token prediction objective. In \cref{fig:attn_pt}, we visualize the attention scores for each head across a \Bio sequence, and find that the heads specialize into distinct roles consistent with \cref{prop:pt}. We find that head $2$ attention score concentrates on the most recent subject token, and it outputs $\emb{\obj_{j_\star}}$, matching the role of  $g_{\text{subj}}(\X)$ in our construction, while head~$0$ attends to the most recent relation token, \textit{i.e.}, it outputs $\emb{\rel_{\ell_{\Ntr}}}$, consistent with $g_{\text{rel}}(\X)$. Further, in \cref{fig:cos_pt}, we compute the cosine similarity between the head outputs and theoretical outputs of the subject and relation heads specified by our construction. Specifically, we report the cosine similarity with $f_{\text{subj}}(\X)$, \textit{i.e.}, negative sum of all attributes not associated with the subject $\obj_{j_\star}$ (left subplot) and $f_{\text{rel}}(\X)$, \textit{i.e.}, sum of all attributes of type $\ell_{\Ntr}$ (right), averaged across several sequences. We find that head $2$ output closely matches $f_{\text{subj}}(\X)$, while head $0$ matches $f_{\text{rel}}(\X)$. We designate these heads as subject and relation heads, respectively. Together, these results present experimental validation of our construction for \Bio sequences. 

\begin{figure}[h!]
    \centering
    \includegraphics[width=0.95\linewidth]{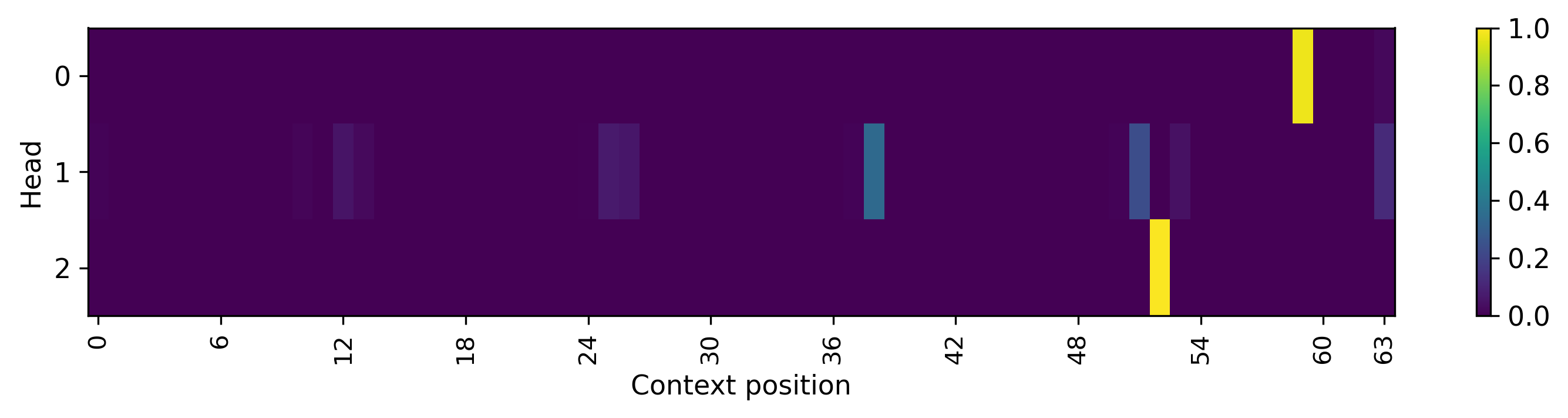}
    \caption{Attention scores for each head across a \Bio sequence at the end of pretraining. Head $2$ attends to the most recent subject, while head $0$ attends to the most recent relation token, as predicted by \cref{prop:pt}.}\vspace{-0.1in}
    \label{fig:attn_pt}
\end{figure}
\begin{figure}[h!]
    \centering
\includegraphics[width=0.73\linewidth]{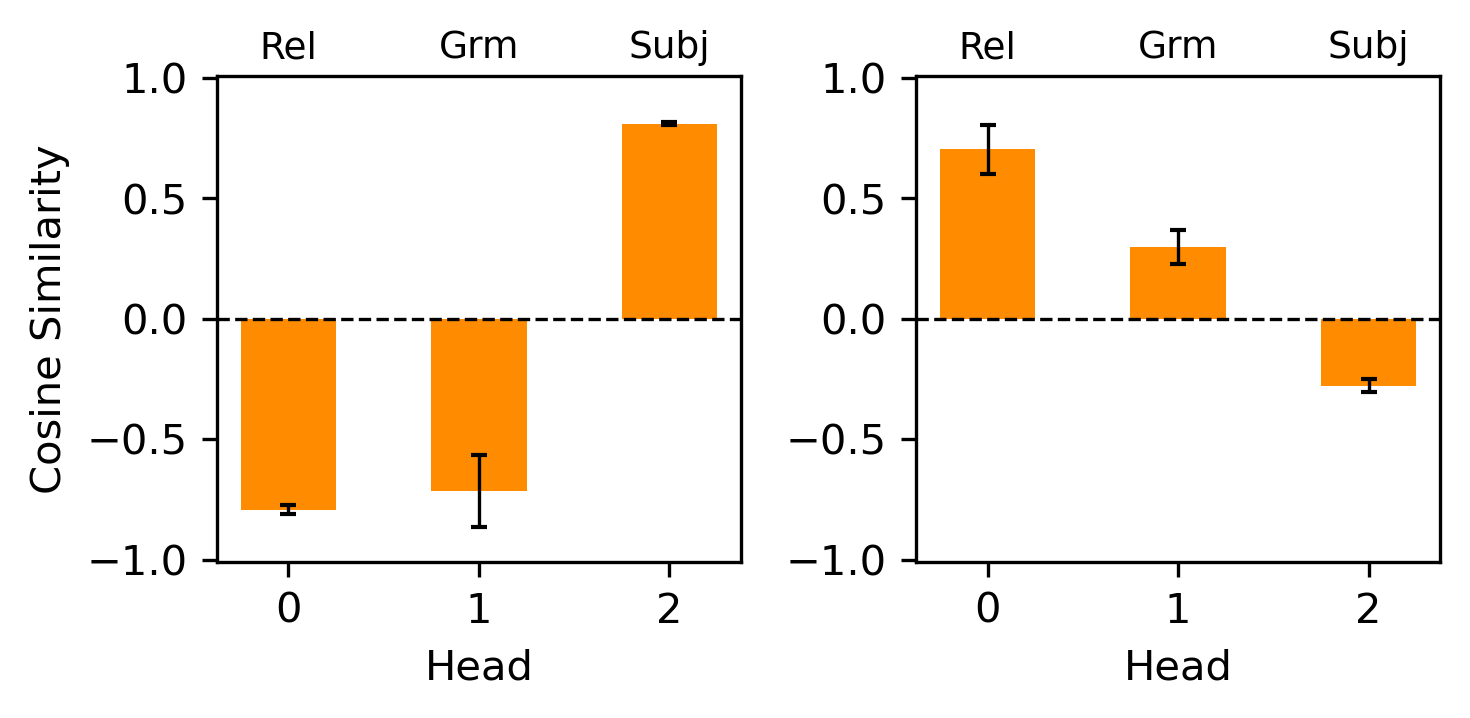}
    \caption{Cosine similarity between the actual outputs of each head and the outputs $f_{\text{subj}}(\X)$ (left) and $f_{\text{rel}}(\X)$ (right) from our construction in \cref{prop:pt}. Head $2$ output closely matches $f_{\text{subj}}(\X)$, while head~$0$ matches $f_{\text{rel}}(\X)$.} \vspace{-0.1in}
    \label{fig:cos_pt}
\end{figure}

\begin{figure}[h!]
    \centering
    \includegraphics[width=0.9\linewidth]{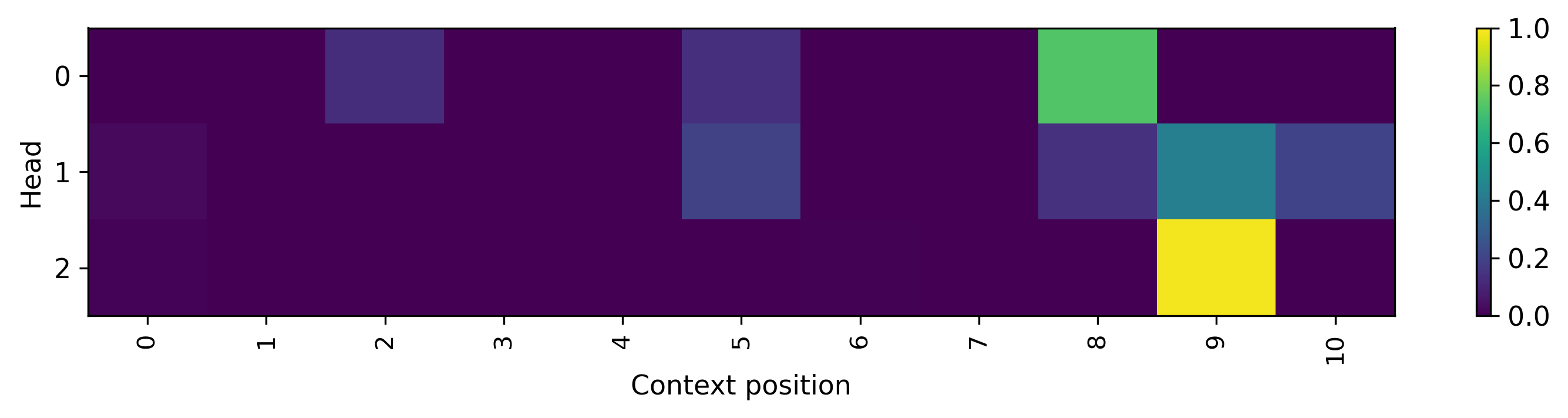}
    \caption{Attention scores for each head across an \ICL sequence at the end of finetuning. Head $2$ attends to the most recent subject, while head $0$ attends to the attribute tokens in the sequence, as predicted by \cref{prop:icl}.}\vspace{-0.1in}
    \label{fig:attn_icl}
\end{figure}

\begin{figure}[h!]
    \centering
    \includegraphics[width=0.73\linewidth]{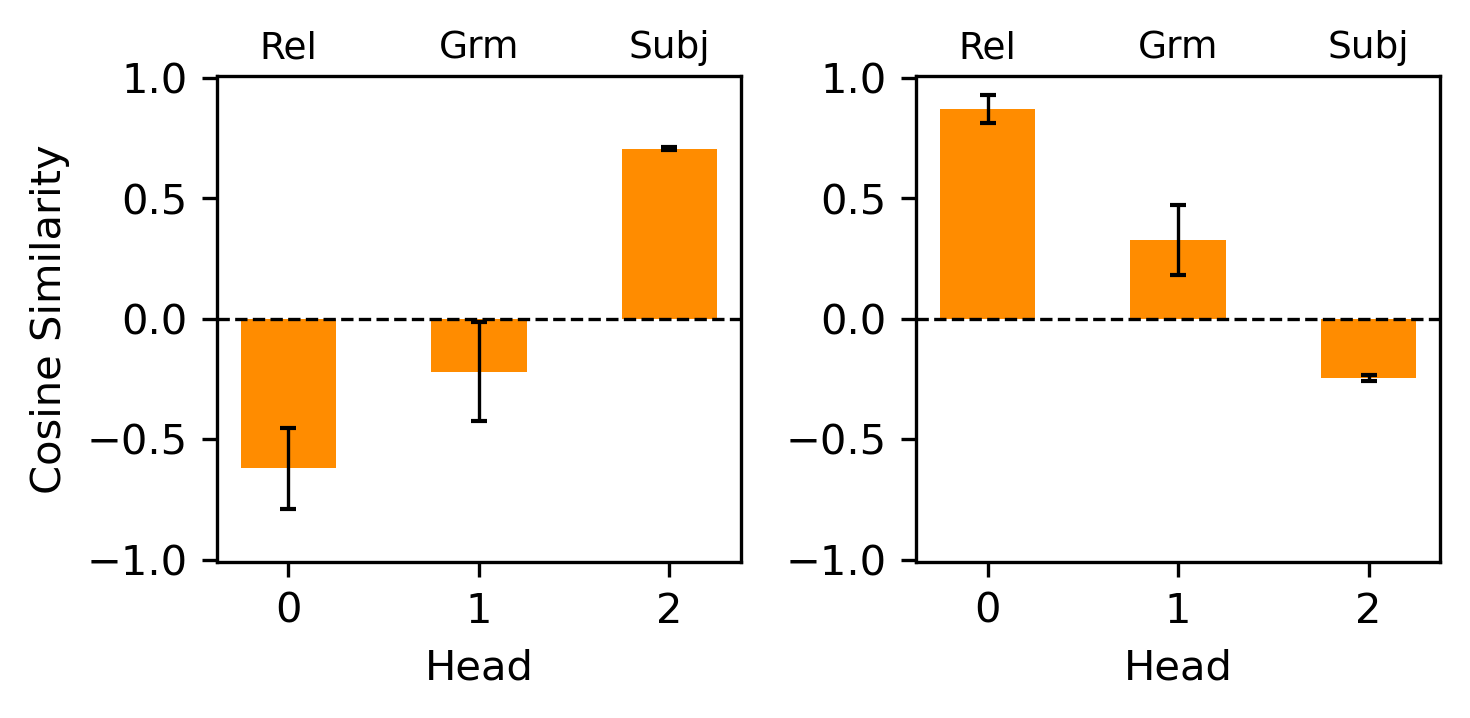}
    \caption{Cosine similarity between the actual outputs of each head and the outputs $f_{\text{subj}}(\X)$ (left) and $f_{\text{rel}}(\X)$ (right) from our construction in \cref{prop:icl}. Head $2$ output closely matches $f_{\text{subj}}(\X)$, while head~$0$ matches $f_{\text{rel}}(\X)$.}\vspace{-0.1in}
    \label{fig:cos_icl}
\end{figure}
\textit{Validation of ICL Mechanism.} We finetune the model on \ICL sequences (\cref{eq:const_ft_x}) with the last-token prediction objective, using $50\%$ of the total subjects (\cref{fig:attnonly_icl_acc} in the App. shows that the model generalizes on held-out subjects). Visualizing the attention scores on an \ICL sequence in \cref{fig:attn_icl} reveals that the model repurposes its heads for the new task, consistent with Prop. \ref{prop:icl}. We find that the head $2$ (subject head) attends to the most recent/query subject token, while head $0$ (relation head) attends to the attribute tokens in the context, \textit{i.e.}, it outputs a combination of attributes of type $\ell_\star$. Further, \cref{fig:cos_icl} confirms that the outputs of these heads closely match the theoretical outputs specified by our construction for \ICL sequences: head $0$ (subject head) outputs negative sum of all attributes not associated with the subject $\obj_{j_{\Nft+1}}$ (left subplot), while head $1$ (relation head) outputs the sum of all attributes of type $\ell_\star$ (right). 
\vspace{-2mm}

\begin{figure}[h!]
    \centering
    \includegraphics[width=0.55\linewidth]{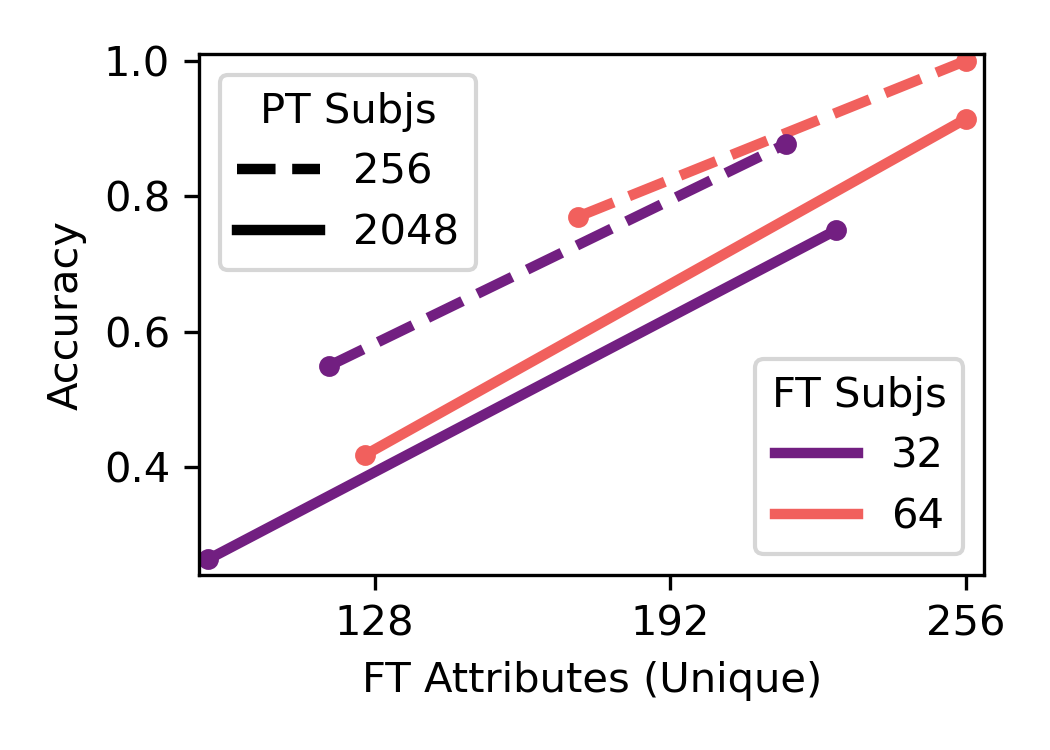}\vspace{-2mm}
    \caption{Effect of number of unique attributes seen during finetuning (with \ICL sequences) on the accuracy on \ICL sequences with held-out query subjects, across different number of PT and FT subjects. Performance improves as the number of unique attributes seen while finetuning increases.}\vspace{-0.15in}
    \label{fig:cov_icl}
\end{figure}

\begin{figure*}[b]
    \centering
    \includegraphics[width=0.3\linewidth]{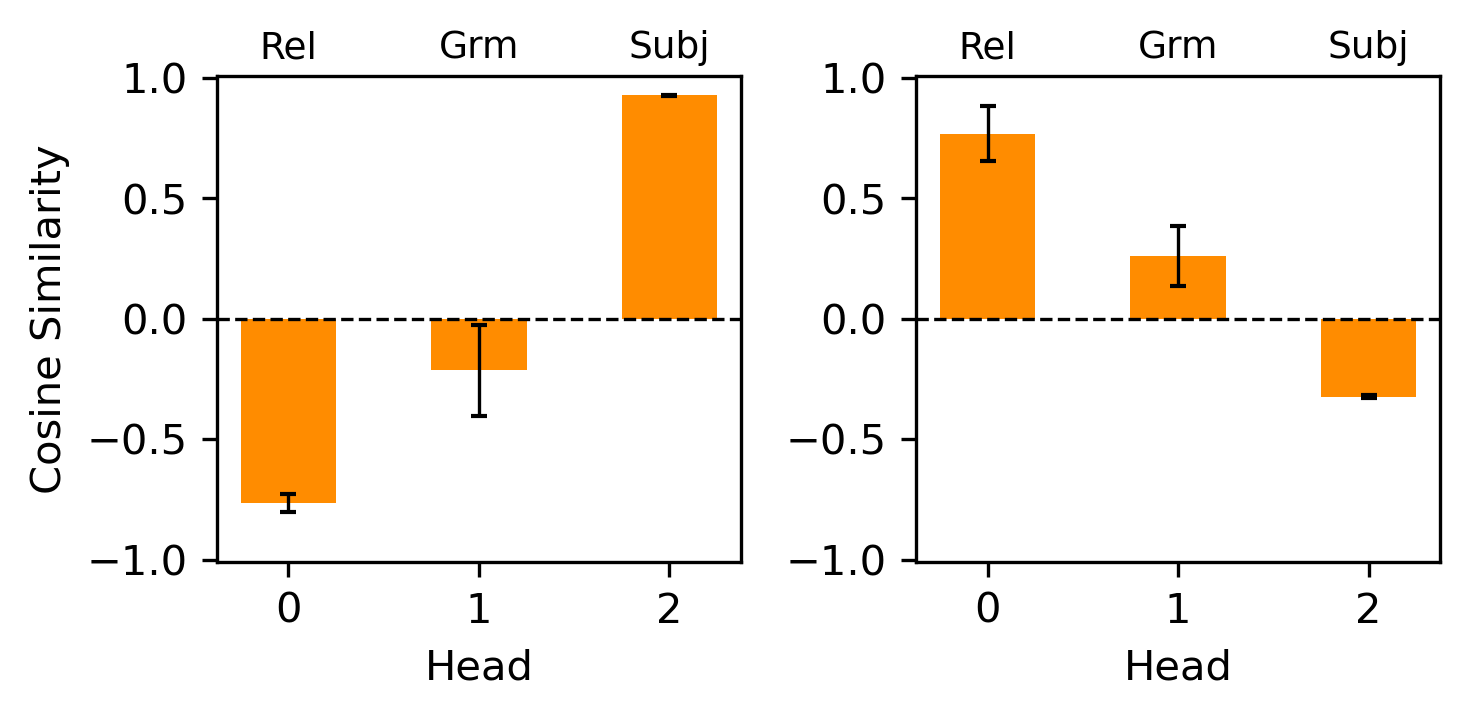}\includegraphics[width=0.4\linewidth]{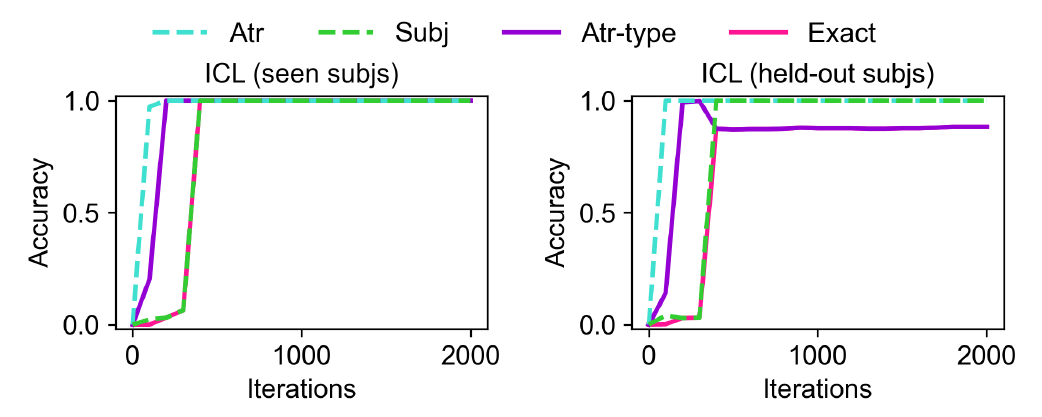}\\
    \includegraphics[width=0.3\linewidth]{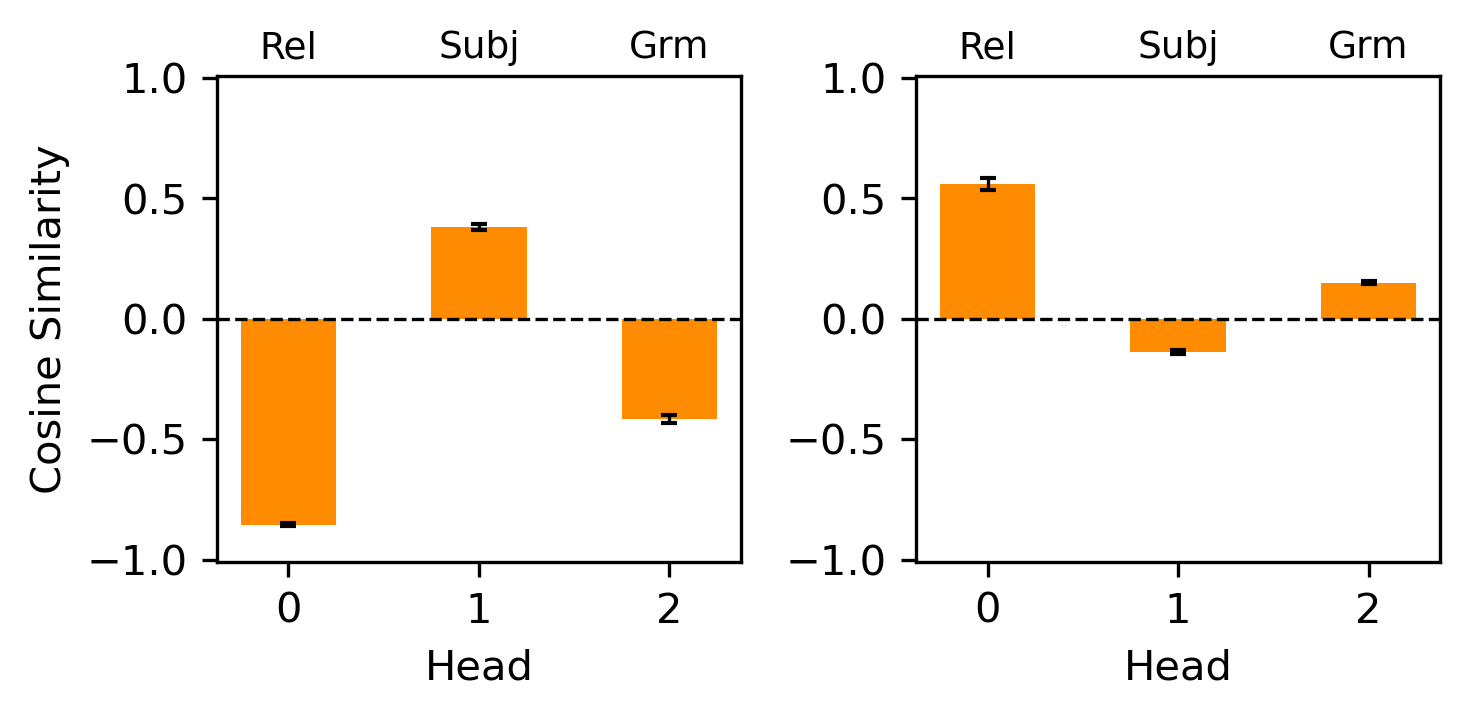}\includegraphics[width=0.4\linewidth]{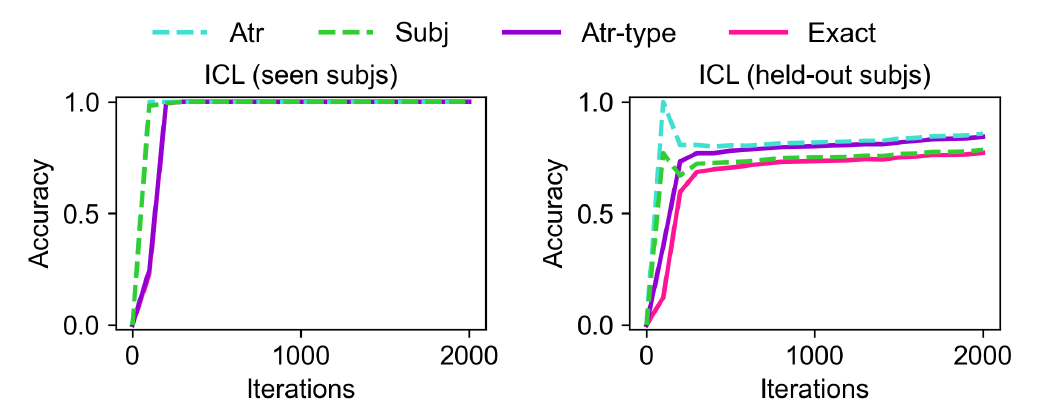}    
    \caption{Using 256 (top row) or 2048 (bottom row) PT subjects, (left) after pretraining, the cosine similarities of the output of each head with the subject and relation head outputs predicted by the construction in \cref{prop:pt}, and (right) after FT with 32 subjects, the performance on \ICL sequences with held-out query subjects. In the first case, the cosine similarities for subject and relation head are high, and after FT, the subject-match accuracy is high, while attribute-type match accuracy is lower because all attributes were not seen during FT. In the second case, the cosine similarities for subject and relation head are low, and consequently, after FT, the subject-match accuracy is low, as well as the attribute-type match accuracy is low because all attributes were not seen during FT.}
    \label{fig:256_2048}
\end{figure*}

\textit{Effect of Number of Attributes seen during Finetuning.} Based on our construction for \ICL sequences in \cref{prop:icl}, we note that generalization on held-out query subjects depends on how many unique attributes are seen during FT. To validate this effect, in \cref{fig:cov_icl}, we find that for a fixed number of subjects used during finetuning (32 or 64), increasing the number of unique attributes seen during finetuning improves the model's performance on \ICL sequences with held-out query subjects. This effect is consistent for $256$ or $2048$ subjects used while pretraining.

However, we also observe that for larger number of PT subjects, the performance is overall lower. To examine why this is the case, focusing on the setting with 32 FT subjects with larger number of unique attributes, in \cref{fig:256_2048}, we show i) after pretraining, the cosine similarities of the output of each head with the subject and relation head outputs predicted by the construction in \cref{prop:pt}, and ii) after FT, the performance on \ICL sequences with held-out query subjects. We observe the following. First, for 256 PT subjects, the cosine similarities for subject and relation head are high, and after FT, the subject-match accuracy is high, while attribute-type match accuracy is lower because all attributes were not seen during FT. On the other hand, for 2048 PT subjects, the cosine similarities for subject and relation head are low, and after FT, the subject-match accuracy is low, as well as the attribute-type match accuracy is low because all attributes were not seen during FT.

Interestingly, this suggests that representations learned during pretraining can affect downstream generalization performance after finetuning, even though in both cases, the model achieves perfect accuracy for factual recall as well as on \ICL sequences with seen subjects. We believe that a deeper investigation of this phenomenon including how to improve the representations learned after pretraining is an interesting direction for future work.

%% file: sections/conclusion.tex
\vspace{-3mm}
\section{Discussion and Conclusion}
\vspace{-1.5mm}
We studied contextual recall, a form of ICL that requires models pretrained to acquire knowledge about various subjects and associated attributes, to recall a specific attribute for a query subject by implicitly inferring the relevant attribute type based on in-context examples. Using a controlled synthetic framework, we showed that while pretraining only enables factual recall, targeted finetuning on tasks that require implicit inference, but are distinct from the evaluation ICL format, using a subset of subjects, leads to emergence of contextual recall capability on all subjects. Our results give insights into the complementary roles of pretraining and finetuning in enabling in-context reasoning involving learned knowledge. 

We believe that our synthetic data setup can be a useful testbed to study several questions. This includes examining the effects of using a mixture of pretraining and finetuning data instead of using them in a stage-wise manner, curriculum learning, non-uniform distributions over the subjects instead of uniform as we use, increasing the number of attribute types, and so on. A key question is to better understand when and how can the representations learned during pretraining be improved to achieve better generalization after finetuning. Another important direction is investigating how incorporating new knowledge via further finetuning might impact the learned contextual recall capability.

%% file: sections/app.tex
\section{Related Work}
\label{app:rel_work}
In this section, we discuss related work on studying ICL and factual recall in transformers using controlled/synthetic settings, as well as on task vectors for ICL.
\vspace{-1mm}
\paragraph{Factual Recall.} Recent studies have utilized controlled synthetic setups to systematically explore how transformers acquire and recall factual associations. \citet{allen2023physics} and \citet{zucchet2025language} employ synthetic biography datasets, where each entry contains information about an individual or subject in the form of multi-sentence paragraphs. Each sentence associates some fact or attribute with an individual or subject, via a sentence template that contains information about the relation or attribute type, such as a person's birthplace, in the linguistic structure. They consider a fixed set of templates for each attribute type. \citet{zucchet2025language} evaluate factual knowledge (\textit{i.e.}, correct attribute token prediction) using similar biography entries that are generated with templates from a held-out set, and identify a stage-wise learning dynamic where the model first learns to predict some attribute token, and then the correct attribute. We also observe similar stage-wise learning in our setup during pretraining in \cref{sec:pt_expts}. In \citet{allen2023physics}, the model is evaluated on question-answering (QA) formats, which constitutes a distribution shift from the biography-style format. Similar to our results in \cref{sec:ft_expts}, they find that while pretraining is insufficient, finetuning (on QA) with a subset of subjects enables generalization on held-out subjects. Crucially, however, their QA prompts still contain explicit relation or attribute type information (e.g., "What is [subject]'s city of birth?"), whereas our contextual recall task constitutes a more substantial distribution shift as the context does not contain explicit relation cues. Investigating whether we see similar results as in \cref{sec:ft_expts} for our synthetic setup with synthetic biography data used in these studies would be an interesting direction for future work.

In addition to these works, other studies have adopted abstracted setups with subject-grammar-attribute triplets to study factual recall \citep{nichani2024understanding,behnia2025factsstatsimpactspretraining}. \citet{nichani2024understanding} uses a data setup where the relation (or attribute type) is a single, dedicated token, while the rest of the tokens in the grammar sequence are sampled randomly. Using this setup, they show that a one-layer transformer model (with a self-attention layer followed by an MLP) succeeds on factual recall whenever either the total number of self-attention parameters or MLP parameters scales (up to log factors) linearly with the number of facts. In contrast, \citet{behnia2025factsstatsimpactspretraining} propose a 
setup to more faithfully capture the statistical structure in the grammar sequences, \textit{i.e.}, templates in the controlled language datasets in \citet{zucchet2025language,allen2023physics}. They model each template as a randomly sampled Markov chain, and omit the relation token (the facts are independent of the grammar), to study how sequence statistics affect generalization. Interestingly, they show that several phenomena observed with the controlled language datasets in \citet{zucchet2025language,allen2023physics} can be reproduced in this abstracted setup.

A key differentiator across these frameworks is how "relation" (or attribute type) information is represented. While in \citet{nichani2024understanding} the relation is a specific token, \citet{behnia2025factsstatsimpactspretraining} omits the relation token entirely. Our synthetic framework bridges these approaches: similar to \citet{behnia2025factsstatsimpactspretraining}, we model templates using Markov chains, but in contrast to their work, and similar to the other prior studies, we retain the relation information by associating the Markov chains with specific attribute types. 

For mechanistic analysis, we adopt an abstracted setup similar to \citet{nichani2024understanding} and compare our attention-only construction for factual recall (\cref{prop:pt}) with theirs. In both constructions, the relation heads perform a similar role: they attend to the (most recent) relation token in the context to output the sum of all attributes of the relevant type. However, the subject heads behave differently. While the construction in \citet{nichani2024understanding} uses subject heads to boost the attributes associated with the subject present in the context, our construction utilizes subject heads to suppress all attributes not associated with the subject. While we present experimental validation for our construction, we note that changing the initialization seed can lead to the model using a different mechanism, similar to their construction. 

As discussed in \cref{sec:intro}, while prior works focus on factual recall using explicit cues about the attribute type from the context, our work investigates contextual recall, a more challenging task that requires the model to infer the attribute type implicitly from in-context examples.
\vspace{-1mm}
\paragraph{In-Context Learning.} Several recent studies have leveraged controlled synthetic environments to analyze how Transformers learn in-context when trained from scratch. A common approach involves training models on well-defined function classes, most notably linear regression \citep{garg_icl, raventos2023pretraining, akyurek_icl,vonOswald_icl, ahn2023transformers, wu2023many} and Markov chains \citep{statistical_induction_heads, algorithmic_phases,rajaraman2024transformers, deora2025incontext}. These works often explore whether Transformers implement specific algorithms or functionalities—such as gradient descent \citep{vonOswald_icl, ahn2023transformers,bai2023transformers} or higher-order-algorithms \citep{fu2024transformers} for linear regression, and induction heads for copying tasks \citep{olsson2022context}, Markov chain-based setups \citep{statistical_induction_heads, rajaraman2024transformers, chen2024unveiling,dangelo2025selective}, settings that combine these \citep{bietti2023birth,chendistributional,kawata2025from}, as well as other controlled synthetic settings \citep{singh2024needs}. Further research \citep{raventos2023pretraining, algorithmic_phases, dual-operating-modes, singh2025strategycoopetitionexplainsemergence} has investigated task diversity, comparing \emph{task retrieval} and \emph{task learning} modes of ICL \citep{pan-etal-2023-context}. Additionally, research has explored transient dynamics to understand how these two modes evolve over the course of training \citep{singh2023transientnatureemergentincontext, carroll2025dynamicstransientstructureincontext, singh2025strategycoopetitionexplainsemergence}. Finally, a growing body of work also explores the training dynamics of in-context learning by examining the optimization dynamics of linear regression \citep{zhang2025trainingdynamicsincontextlearning, zhang2023trained, zhang2024context} in both one-layer linear attention and softmax attention models \citep{chen2024trainingdynamicsmultiheadsoftmax}, as well as for learning causal structures \citet{nichani2024how}.

A defining characteristic of the aforementioned studies is that the training and inference formats are identical; the model is evaluated on the same sequence structures it encountered during training. In contrast, our work on contextual recall introduces a significant prompt-distribution shift. While pretraining focuses on instilling factual knowledge in the model via explicit grammar-based cues, the ICL evaluation requires the model to transition a novel format necessitating inference of the relevant attribute type implicitly from the in-context demonstrations. Therefore, unlike standard ICL setups that focus on function induction, our framework requires the model to bridge the gap between structured knowledge acquisition and implicit in-context reasoning, as mentioned in \cref{sec:intro}.

In contrast to our work, \citet{xie2022an} investigate a distribution shift that is primarily compositional rather than structural. In their framework, the shift occurs when a model pretrained on long, continuous documents is prompted with a sequence of independent, i.i.d. examples. While they frame ICL as a statistical process of implicit Bayesian inference, our work provides a mechanistic perspective discussed in \cref{sec:rep_analysis} and \cref{sec:attn_only}. 

\paragraph{Task and Function Vectors in ICL.} Recent work studying ICL mechanisms in pretrained LLMs shows that certain directions in activation space encode the input–output relationship of the ICL task \citep{hendel2023incontextlearningcreatestask,todd2024function,yin2025attention}. These activation directions, referred to as task or function vectors, seem to encode abstract task information that is largely independent of the specific examples in the prompt, and enable causal interventions that modify model behavior \citep{liu_incontextvectors}. Other works use controlled synthetic setups to study the emergence of task vectors for ICL of some function classes \citep{yang2025taskvectorsincontextlearning,dong2026understanding}, how task vector encoding is correlated with task-level decoding \citep{han2025emergence}, as well as task vector disentanglement and composition for two-hop reasoning tasks, and geometry for tasks with numerical-valued continuous latent parameters \citep{hong2026latent}. Consistent with prior work, our analysis in \cref{sec:rep_analysis} suggests that finetuning drives the emergence of low-dimensional task vectors for the shared attribute type, which enables the model to do well on the contextual recall task. Beyond observing their existence, we make two additional contributions. First, we show that the separability and functional utility of these task vectors (measured through representation clustering and output steering) is directly governed by properties of the pretraining data, specifically the separation between attribute-specific Markov chains. Second, in \cref{sec:attn_only}, we provide a mechanistic explanation for how task vectors emerge through finetuning: the relation head's OV matrix 
learns attribute-to-attribute associations for shared attribute type, which induces the clustering of representations by attribute type.

\section{Additional Experimental Results and Details of Experimental Settings}
\label{app:expts}

\subsection{Details of Experimental Settings}

We use a 2-layer 1-head GPT-2 type decoder-only transformer model \citep{mingpt_karpathy} with embedding dimension $256$. We train the model with AdamW optimizer \citep{adamW} with learning rate $10^{-4}$, weight decay $0.001$, and batch size $64$, for both pretraining and finetuning.

For evaluation, we report four metrics: \emph{Exact Match} (predicted attribute matches the ground truth), \emph{Attribute-Type Match} (predicted attribute has the correct type), \emph{Subject Match} (predicted attribute belongs to the correct subject), and \emph{Attribute Match} (any attribute token is predicted). See \cref{fig:sequences} for an illustration.

For the experiments in \cref{sec:pt_expts}, we set $M=256,L=8,\Ntr=5,M_1=256,M_2=\dots=M_8=32,S=80$. We set the grammar-only subsequence probability $p_G=0.2$, and separation between Markov chains $\MCD\approx 0.5$. To control for $\MCD$, we first randomly generate a large pool of transition matrices, and then use greedy selection to curate a subset of transition matrices that are assigned to each attribute type. We pretrain the model for $20k$ iterations.

For the experiments in \cref{sec:ft_expts}, we use the same pretraining setting as in \cref{sec:pt_expts}, and set $N=\Nft=16$. We use $128$ subjects for finetuning, unless stated otherwise. For experiments with \ICLG sequences, $\Sft\sim\Unif{\{1,\dots,5\}}$. In all cases, we finetune for $2k$ iterations. In \cref{fig:subj_ft_plot}, we compare the best performance across finetuning iterations.

For \cref{fig:subj_ft_plot,fig:div_plot,fig:subj_plot}, the results are reported after averaging across two random initialization seeds.

For the experiments in \cref{sec:ablations}, the details are as follows. All results are reported at end of pretraining/finetuning. We consider $\MCD\approx0.1,0.3,0.5$. In \cref{fig:div_plot}, we use fixed $M=256$ and for $S=20$, we use $\Sft\sim\Unif{\{1,...,4\}}$, while for $S=40$ or $S=80$, we use $\Sft\sim\Unif{\{1,...,5\}}$. In \cref{fig:subj_plot}, we use fixed $S=80$, $\Sft\sim\Unif{\{1,...,5\}}$.

The results in \cref{fig:clustering} (bottom) are reported with the same setting as \cref{fig:ft_iclwgrm}. For the top plot, we only change $\MCD\approx 0.2$. We use $50$ sequences for each attribute type.

The settings for the experiments in \cref{sec:attn_only} and \cref{app:const} are as follows. We set $M=256,L=8,\Ntr=5,M_2=\dots=M_8=32,S=10,N=\Nft=5$ for both sections, while $M_1=32$ and $M_1=256$, respectively. We train an attention-only model with $d_h=256$ using AdamW optimizer with learning rate $0.001$, weight decay $0.001$, and batch size $64$, for both pretraining and finetuning. We use a cosine learning rate scheduler for pretraining. The results with $M=2048$ subjects use the same experimental setting, except we set $d_h=2048$. Unless stated otherwise, we pretrain for $5k$ iterations and finetune for $2k$ iterations. For the results for experimental validation in \cref{sec:attn_only}, we pretrain for $2k$ iterations and finetune for $1k$ iterations. 

\subsection{Additional Results}

\begin{figure}[h!]
    \centering
    \includegraphics[width=0.8\linewidth]{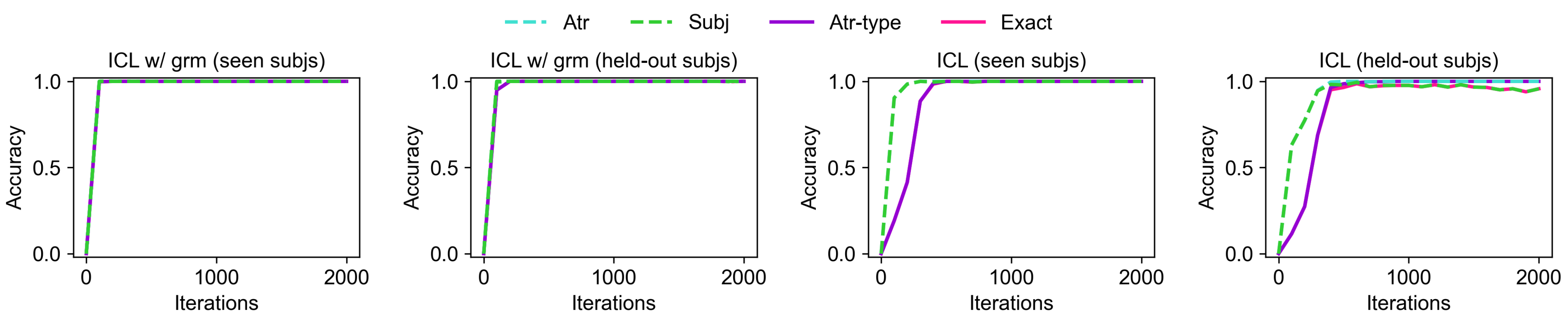}
    \caption{Performance of the model when finetuning with \ICLG sequences with a subset of subjects (same setting as \cref{fig:ft_iclwgrm}), on \ICLG sequences with $\Sft=1$ (left) and \ICL sequences (right) with seen or held-out subjects. Finetuning with \ICLG sequences enables out-of-distribution generalization on \ICL sequences with held-out subjects. Performance improves first on \ICLG and later on \ICL sequences.}\vspace{-0.1in}
    \label{fig:ft_iclwgrm_full}
\end{figure}

\begin{figure}[h!]
    \centering
    \includegraphics[width=0.3\linewidth]{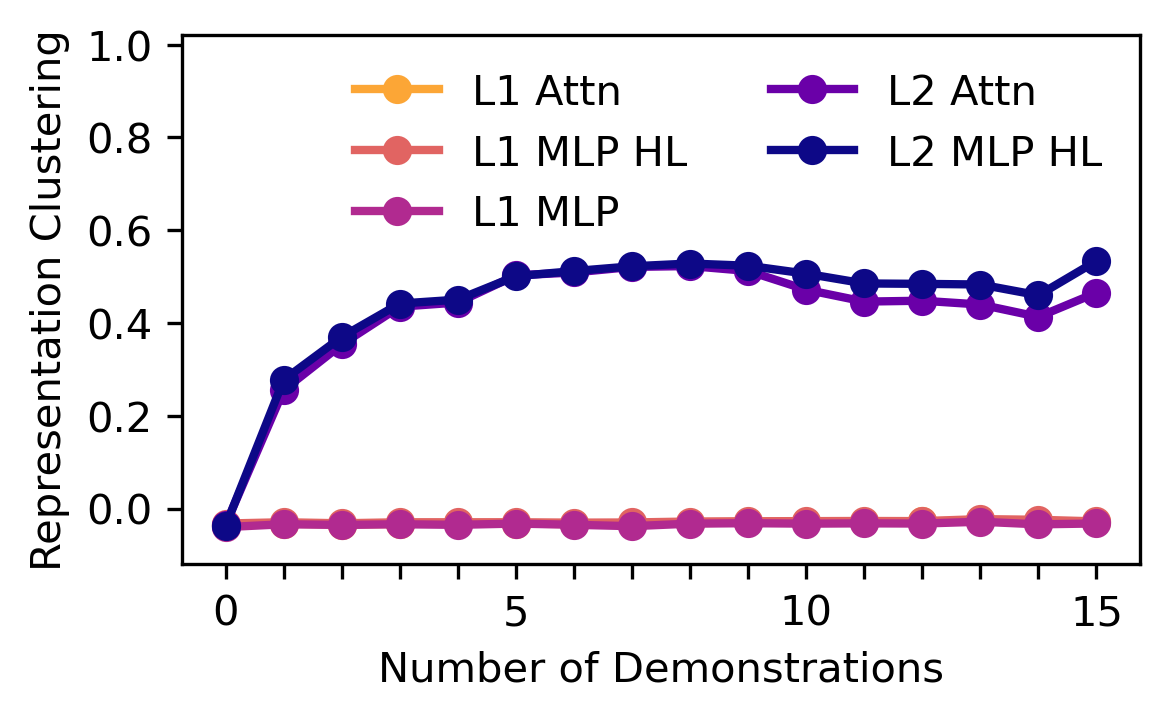}\hspace{1mm}
    \includegraphics[width=0.3\linewidth]{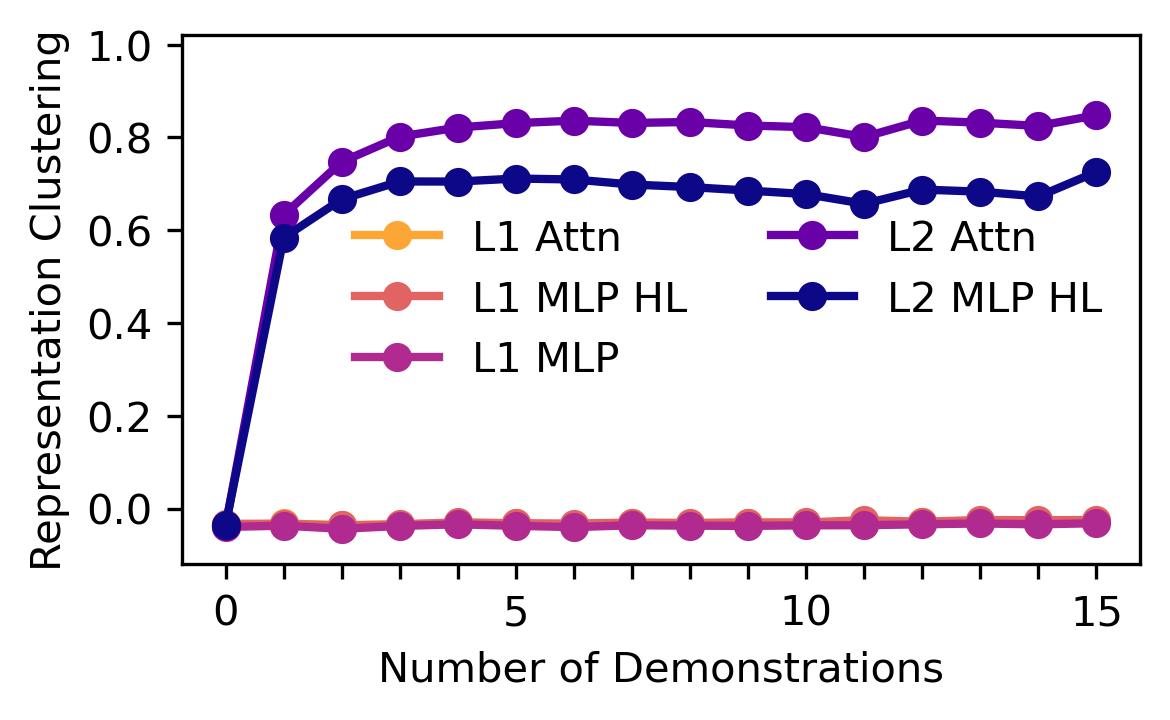}
    \caption{Comparison of clustering strength (see \cref{sec:rep_analysis} for details) using representations from different layers of the finetuned 2-layer 1-head transformer with \ICL sequences as inputs, as the number of demonstrations is increased (same setting as \cref{fig:clustering}) with $\MCD\approx0.2$ (left) and $\MCD\approx0.5$ (right). We find that layer-2 attention representations cluster most strongly, while layer-1 representations exhibit no clustering based on the attribute type information in the context. Further, using higher $\MCD$ while pretraining leads to stronger representation clustering after finetuning.}\vspace{-0.1in}
    \label{fig:clustering_layers}
\end{figure}

\begin{figure}[h!]
    \centering
    \includegraphics[width=0.8\linewidth]{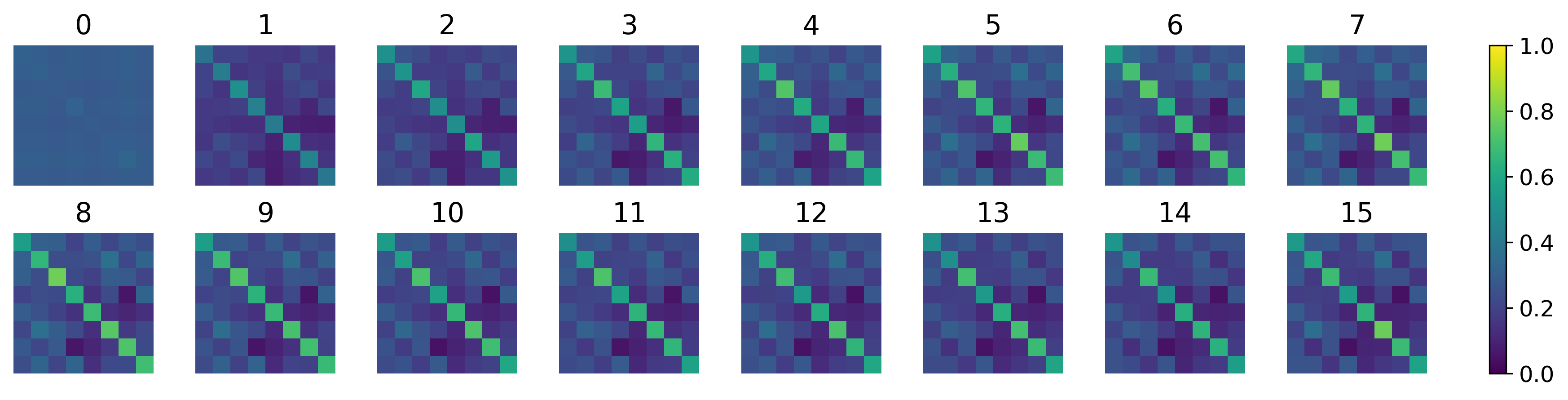}
    \includegraphics[width=0.8\linewidth]{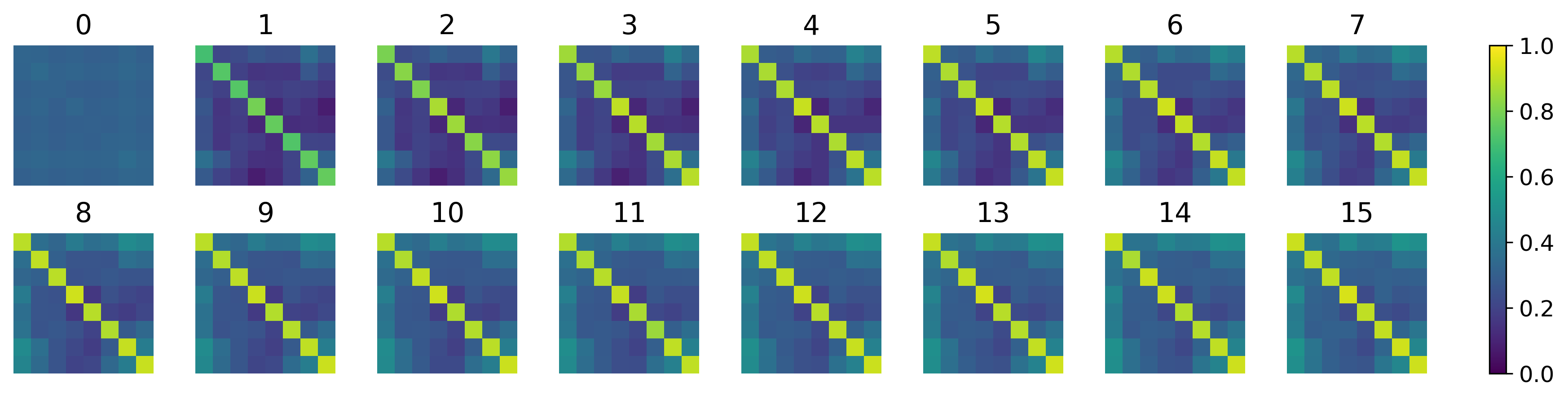}
    \caption{Each subfigure visualizes the cosine similarity for inter- and intra-task representations (from layer-2 attention layer of the finetuned model), $\bar{C}^t(\ell,\ell')$ (see \cref{sec:rep_analysis} for details) across attribute types $\ell,\ell'\in[L]$, averaged over $50$ sequences for each attribute type (same setting as \cref{fig:clustering}; top: $\MCD\approx 0.2$, bottom: $\MCD\approx 0.5$). We find that as the number of demonstrations is increased (from $0$ to $15$), the representations of \ICL sequences with the same attribute type get clustered together, and the clustering is stronger for higher $\MCD$ (\textit{i.e.}, more separated attribute-specific Markov shains).}\vspace{-0.1in}
    \label{fig:clustering_full}
\end{figure}

\begin{figure}[h!]
    \centering
    \includegraphics[width=0.4\linewidth]{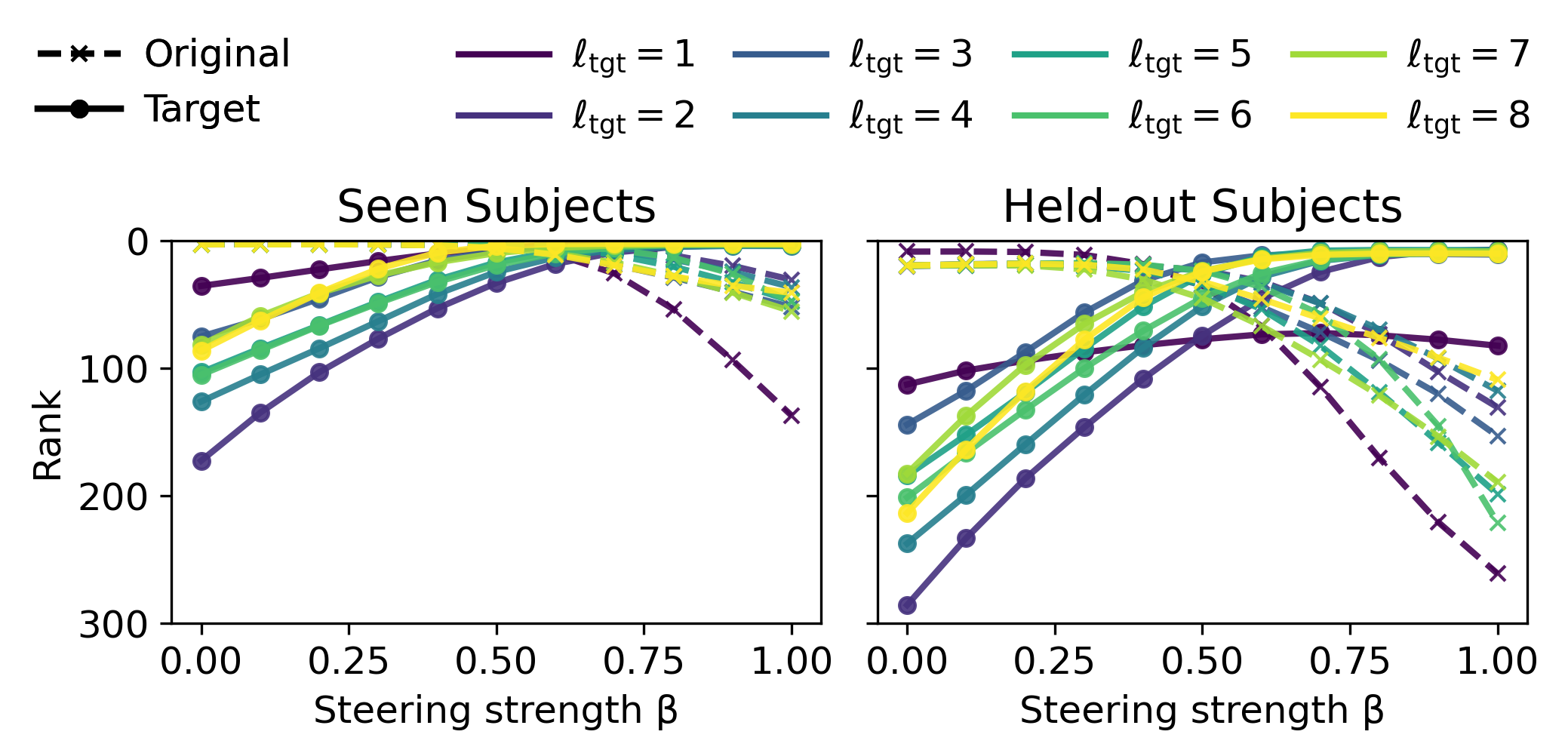}\hspace{1mm}
    \includegraphics[width=0.4\linewidth]{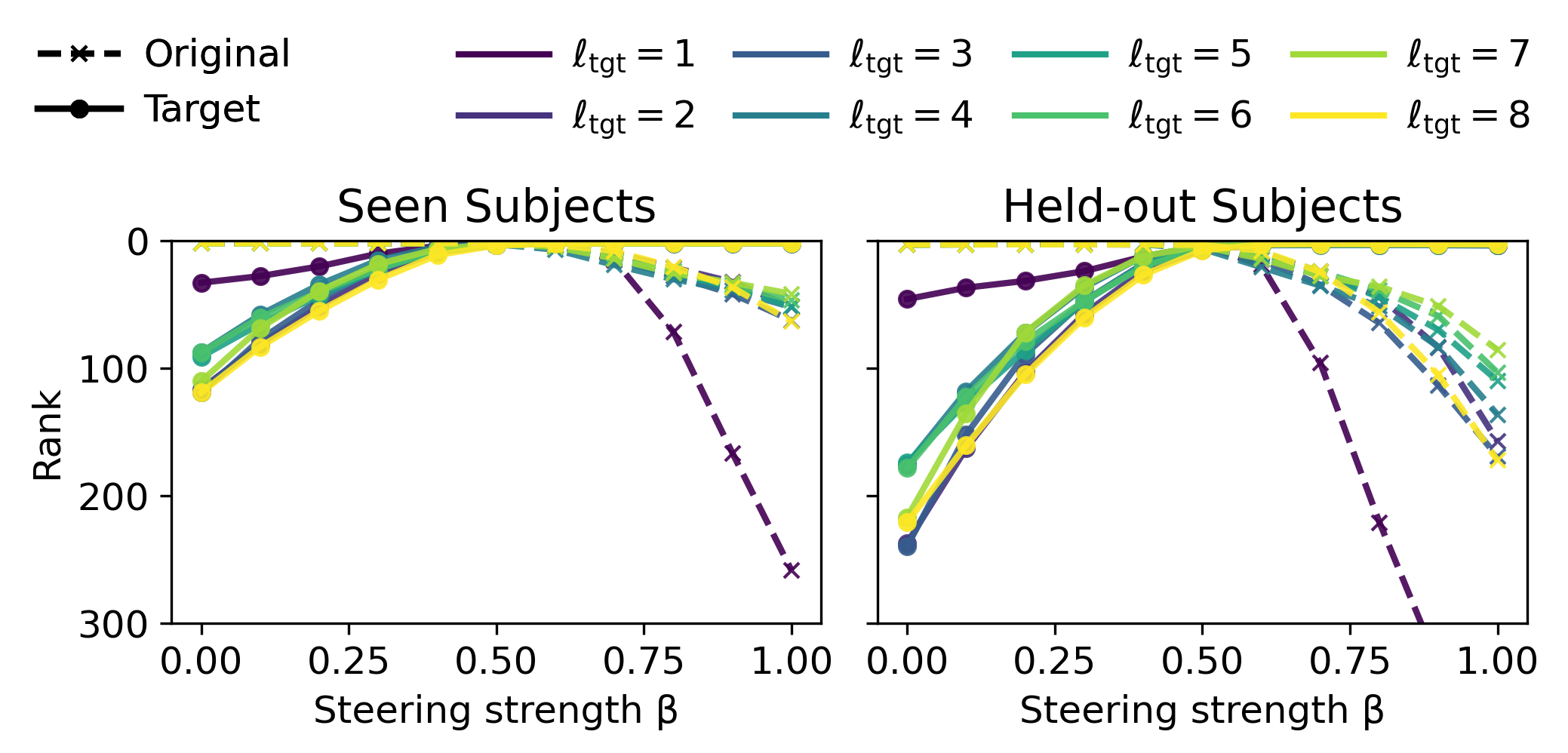}
    \caption{Comparison of rank of the original and target attribute type when steering the model output using the averaged layer-2 attention representations of the finetuned 2-layer 1-head transformer with \ICL sequences as inputs at the last token position (see \cref{sec:rep_analysis} for details) with $\MCD\approx0.2$ (left) and $\MCD\approx0.5$ (right), for seen and held-out FT subjects, for each target attribute type. We find that higher $\MCD$ used while pretraining leads to stronger task vectors and better steering with lower $\beta$ values. }\vspace{-0.1in}
    \label{fig:steering_all}
\end{figure}

\subsection{Details of Representational Analysis}
Consider \ICL sequences, which are of the form
\begin{align*}
    X_\ell= [\obj_{j_0}, \atr^\ell_{j_0}, \sep, \obj_{j_1}, \atr^\ell_{j_1}, \sep,\dots, \obj_{j_{N+1}}],
\end{align*}
for a fixed attribute type $\ell$. We sample $K$ such sequences for each $\ell\in[L]$, denoted by $X^k_\ell$. Also, define $X_{\ell,t}:= [\obj_{j_0}, \atr^\ell_{j_0}, \sep, \obj_{j_1}, \atr^\ell_{j_1}, \sep,\dots, \obj_{j_{t+1}}]$, where $t\in[N]$ denotes the number of demonstrations.  

Let $f_i(\cdot)$ denote the model's representation at layer $i$. Consider fixed $i,t$, and for $\ell, \ell' \in [L]$, define
\begin{align*}
\bar{C}^t_i(\ell,\ell')
= \frac{1}{K^2} \sum_{k,k'} 
\cos\!\big( f_i(X^{k}_{\ell,t}),\, f_i(X^{k'}_{\ell',t}) \big).
\end{align*}

Next, the clustering strength is quantified as follows. For fixed $i,t$, we subsume these and define $v^k_\ell := f^t_i(X^k_{\ell})$. Each representation is assigned cluster label based on the attribute type $\ell$. Define the cosine distance $d(v,w) := 1 - \cos(v,w)$.
For each point $v^k_\ell$, define the intra-cluster dissimilarity
\[
a(k,\ell)
= \frac{1}{K - 1}
\sum_{k'\ne k}
d(v^k_\ell, v^{k'}_\ell),
\]
and the nearest other-cluster dissimilarity
\[
b(k,\ell)
= \min_{\ell' \ne \ell}
\left(
\frac{1}{K} \sum_{k'} d(v^k_\ell, v^{k'}_{\ell'})
\right).
\]
The silhouette value for a sample is
\[
s(k,\ell)
= \frac{b(k,\ell) - a(k,\ell)}{\max\{\,a(k,\ell),\, b(k,\ell)\,\}}.
\]
Finally, the silhouette score for layer $i$ after $t$ demonstrations are seen is
\begin{align*}
\bar{S}^t_i
= \frac{1}{KL} \sum_{\ell} \sum_{k} s(k,\ell).
\end{align*}

\section{Analysis for the Attention-Only Model}
\label{app:const}
\subsection{Constructions for \Bio and \ICL Sequences} 
Consider \Bio sequences of the form
\begin{align}
\X \!=\! [\emb{\obj_{j_\star}},\emb{\gr_1},..., \emb{\rel_{\ell_1}},...,
\emb{\sep},\emb{\atr_{j_\star}^{\ell_1}},\emb{\obj_{j_\star}},...,\emb{\sep}],\label{eq:const_pt_x_}
\end{align}
where 
the correct last token prediction is $\emb{\atr_{j_\star}^{\ell_{\Ntr}}}$. The following result shows that there exists an attention-only model that always gives correct predictions at any position.
\begin{proposition}\label{prop:pt_}
    Consider the input $\X$ in \cref{eq:const_pt_x_}, and a 1-layer 3-head attention-only model with the weights
    \begin{align*}
\W_{KQ}^{\text{rel}}&=\beta\left(\sum_{\ell}\emb{\rel_\ell}+\pe\right)\emb{\sep}^\top, \quad \W_{OV}^{\text{rel}}=\sum_{\ell}\sum_{j}\emb{\uatr_{j}^\ell}\emb{\rel_{\ell}}^\top,\\
\W_{KQ}^{\text{subj}}&=\beta(\sum_j\emb{\obj_j})\emb{\sep}^\top, \quad \W_{OV}^{\text{subj}}=-\sum_j\left(\sum_{j'\neq j,\ell}\emb{\atr_{j'}^\ell}\right)\emb{\obj_j}^\top,\\
\W_{KQ}^\text{grm}&=\beta\Big(\sum_{\ell,j}\emb{\uatr_j^\ell}\emb{\uatr_j^\ell}^\top\!+\!\sum_j\emb{\obj_j}\emb{\obj_j}^\top\!+\!\sum_\ell(\emb{\rel_\ell}\!+\!\emb{\sep}\!+\!\pe)\emb{\rel_\ell}^\top\!+\!\Big(\sum_\ell\emb{\rel_\ell}\!+\!\emb{\sep}\!+\!\pe\Big)\sum_{\grm}\emb{\grm}^\top\Big)\\ 
\W_{OV}^\text{grm}&=\sum_j\sum_\ell\emb{\obj_j}\emb{\atr_j^\ell}^\top\!+\!\Big(\sum_{\grm}\emb{\grm}\!+\!\sum_\ell\emb{\rel_\ell}\Big)\Big(\sum_j\emb{\obj_j}\!+\!\emb{\sep}\Big)^\top\!+\!\Big(0.5\sum_{\grm}\emb{\grm}\!+\!\emb{\sep}\Big)\sum_\ell\emb{\rel_\ell}^\top,
\end{align*}
where $\pe:=\sum_{i=1}^{S+2}\emb{p_{-i+1}}$.
    Then, when the scalar $\beta\to\infty$, it always gives the correct prediction for any token position. 
\end{proposition}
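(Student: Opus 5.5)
The proof is a case analysis on the identity of the query token $\xb_t$. For a fixed query, I would first compute the attention logits $\big((\X_{1:t}+\Pbb_{\Tau-t+1:\Tau})\W_{KQ}^h\xb_t\big)_i$ for each head. Since all embeddings and positional encodings are one-hot and mutually orthogonal, each logit equals $\beta$ times a small integer count.

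The count has two sources:
\begin{itemize}
\item a token match, e.g.\ a key that is a relation token when $\W_{KQ}^{\text{rel}}$ is queried by $\emb{\sep}$;
\item a positional match, i.e.\ the key lies in the window of the last $S+2$ positions encoded by $\pe$.
\end{itemize}
As $\beta\to\infty$ the softmax converges to the uniform distribution over the argmax set of keys. If $\W_{KQ}^h\xb_t=0$, it is uniform over the whole prefix. So $g_h^t(\X)$ is either an average of a few identical one-hot vectors or the empirical token-average of the prefix. I then push this through $\W_{OV}^h$, sum the three heads, and check that the argmax of $\emb{v}^\top f^t(\X)$ lies in the set of valid next tokens. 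For positions whose continuation is random (a grammar token, or the relation token), ``correct'' should mean ``in the support of the next-token distribution''; I would state this convention explicitly.

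\textbf{Case $\xb_t=\emb{\sep}$ (target position).} This case follows the proof sketch of \cref{prop:pt}.

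In the relation head, the most recent relation token $\rel_{\ell}$ is the unique key with logit $2\beta$: it gets one from the token match and one from the window $\pe$. Every other key scores at most $\beta$. Hence $f_{\text{rel}}=\sum_j\emb{\uatr_j^{\ell}}$.

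In the subject head, every subject key is $\obj_{j_\star}$, so averaging gives $g_{\text{subj}}=\emb{\obj_{j_\star}}$. Hence $f_{\text{subj}}=-\sum_{\uatr}\emb{\uatr}+\sum_{\ell'}\emb{\atr_{j_\star}^{\ell'}}$.

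For the grammar head, $\W_{KQ}^{\text{grm}}\emb{\sep}=0$, because $\emb{\sep}$ is not among its right factors. Its attention is therefore uniform, and its output is a convex-weighted image under $\W_{OV}^{\text{grm}}$. That image lies in the span of subject, grammar, relation and $\sep$ embeddings, so it puts no mass on attributes. Each of its coordinates is at most the total fraction of prefix tokens of certain kinds. Because grammar tokens are always present, each such fraction is strictly below $1$.

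Combining the heads, $\atr_{j_\star}^{\ell}$ scores $1$, every other attribute scores at most $0$, and every non-attribute token scores strictly below $1$. This gives the correct prediction.

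\textbf{Remaining query types.}
\begin{itemize}
\item \emph{Query is an attribute.} The grammar head's diagonal term $\emb{\uatr}\emb{\uatr}^\top$ makes it attend to the occurrences of that attribute, and $\W_{OV}^{\text{grm}}$ maps the attribute to its subject $\obj_{j_\star}$, which is the correct next token. The relation and subject heads receive $\W_{KQ}\emb{\uatr}=0$ and so attend uniformly. Their contributions to any attribute are bounded by the fraction of relation tokens ($<1$) or are nonpositive, so they cannot overtake the subject's score of $1$.
\item \emph{Query is a subject.} Self-matching on $\obj_j$ sends the grammar head to the subject tokens, and $\W_{OV}^{\text{grm}}$ outputs $\sum_{\grm}\emb{\grm}+\sum_\ell\emb{\rel_\ell}$. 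These are exactly the tokens allowed right after a subject.
\item \emph{Query is a relation token.} The $\emb{\rel_\ell}$ column of $\W_{KQ}^{\text{grm}}$ has the same left factor $\sum_{\ell'}\emb{\rel_{\ell'}}+\emb{\sep}+\pe$ as the grammar columns. The head therefore attends to the keys that are both relation tokens and inside the current window, which is the relation token itself. The term $(0.5\sum_{\grm}\emb{\grm}+\emb{\sep})\sum_\ell\emb{\rel_\ell}^\top$ then outputs grammar tokens with weight $0.5$ and $\sep$ with weight $1$.
\item \emph{Query is a grammar token.} Here I would enumerate the argmax keys of $\big(\sum_\ell\emb{\rel_\ell}+\emb{\sep}+\pe\big)$ within the window: the current subject-grammar block's relation token (if already emitted) and the previous $\sep$. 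I would then show that $\W_{OV}^{\text{grm}}$ maps the attended set to grammar, relation, or $\sep$ tokens consistently with what may follow.
\end{itemize}

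\textbf{Main obstacle.} I expect the delicate part to be the grammar-head bookkeeping at grammar and relation positions. This requires tracking exactly which keys tie at the top logit given the relative window $\pe$ (length $S+2$), and then checking that the weights $0.5$ versus $1$ in $\W_{OV}^{\text{grm}}$ make $\sep$ win only after the relation/grammar block is complete and grammar or relation tokens win otherwise. For this step I would write out the positions of $\sep$, the subject and the relation token relative to the query inside one block and list the logits explicitly. In parallel I would verify that the relation and subject heads, which attend uniformly at these positions, only add attribute mass that is at most $0$ or strictly below $1$, so it never flips the argmax. The same uniform-attention bound also handles the very first block, where no earlier $\sep$ exists.
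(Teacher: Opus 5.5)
Your strategy matches the paper's: take the hardmax limit of each head, push $g_h^t$ through $\W_{OV}^h$, and compare logits. You organize by query type where the paper organizes by position in the last block. Your cases $\xb_t=\emb{\sep}$, attribute query and subject query agree with the paper's cases $t=\Tau$, $t=\Tau-S-2$ and $t=\Tau-S-1$.

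The relation-query case, however, is wrong, and it is exactly the delicate point you flag. The relevant column is $\W_{KQ}^{\text{grm}}\emb{\rel_\ell}=\beta(\emb{\rel_\ell}+\emb{\sep}+\pe)$, so a $\sep$ key inside the window also scores $2\beta$. Suppose the relation sits in grammar slot $k<S$ of a block other than the first. Then the previous block's $\sep$ is $k+2\le S+1$ positions back, hence inside the window. So $g_{\text{grm}}=\tfrac12(\emb{\sep}+\emb{\rel_\ell})$ and $f_{\text{grm}}=0.75\sum_{\grm}\emb{\grm}+0.5\sum_{\ell'}\emb{\rel_{\ell'}}+0.5\emb{\sep}$, and a grammar token wins, as it must. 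Only at the last grammar slot ($t=\Tau-1$ in the paper's indexing, whether the query there is the relation or an ordinary grammar token) has the previous $\sep$ left the window. There the head attends to the block's relation token alone and $\sep$ wins, $1$ against $0.5$. This exit of the previous $\sep$ from the length-$(S+2)$ window is how the construction detects that the block is complete; it is the paper's split $t\in\mathcal{T}$ versus $t=\Tau-1$. Your bullet misses it and, as written, predicts $\sep$ immediately after every relation token, which is outside the support unless the relation is in the last slot.

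Working this out exposes three more problems in your plan.
\begin{itemize}
\item Your claim that the first block is covered is wrong. There is no earlier $\sep$, so at a relation token in a non-final slot, and at every later grammar position of that block, the grammar head attends to that relation alone and the model outputs $\sep$. That is wrong except at the last slot. The other two heads only move attribute logits and cannot repair this. The paper sidesteps the issue by working out only the last subsequence $t\ge\Tau-S-2$, so ``any token position'' has to be restricted accordingly.
\item In your enumeration for grammar queries you must include the previous block's relation token, which can still lie in the window early in a block. If it is there together with the current relation and the previous $\sep$, the grammar head puts weight $\tfrac23$ on every grammar token and on $\sep$. The paper's case split tacitly assumes a single relation token in the window, so it overlooks this tie as well.
\item At these positions the winning grammar-head coordinate can be $0.75$, or $1/t$ in the first block before any relation appears. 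So ``strictly below $1$'' does not control the relation head's contribution. Use instead that its attribute logits are at most the number of relation tokens in the prefix divided by $t$, and vanish before the first relation token.
\end{itemize}
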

\begin{proof}
We present a construction for a $3$-head model. For simplicity, we present the proof for the last subsequence, \textit{i.e.}, $t\in\{\Tau-S-2,\dots,\Tau\}$, but it can be easily extended to other cases where $t<\Tau-S-2$ as well. Hereafter, we assume that $t\geq \Tau-S-2$.

At a high level, we use two heads, which we call the \textbf{relation} and \textbf{subject} heads, to get the output following $\emb{\sep}$, \textit{i.e.}, $t=\Tau$, and the third \textbf{grammar} head for other cases $t<\Tau$. Let $\mathcal{T}:=\{\Tau-S,\dots,\Tau-2\}$.

Consider the \textbf{relation} head. Then, when $\beta\to\infty$, the outputs of this head are as follows. When $t=\Tau$, $g_{\text{rel}}(\X)=\emb{\rel_{\ell_{\Ntr}}}$, \textit{i.e.}, the most recent relation token in the sequence, and $f_\text{rel}(\X)=\sum_{j,\ellN}\emb{\uatr_j^{\ellN}}$, \textit{i.e.}, all attributes of type $\ell_{\Ntr}$.

On the other hand, when $t<\Tau$, $f^t_\text{rel}(\X)=\tfrac{1}{t}\sum_{j,\ellN}\emb{\uatr_j^{\ellN}}$.

Consider the \textbf{subject} head. Then, when $\beta\to\infty$, the outputs of this head are as follows. When $t=\Tau$, $g_\text{subj}(\X)=\emb{\obj_{j_\star}}$, the subject token in the sequence, and $f_\text{subj}(\X)=-\Big(\sum_{\uatr}\emb{\uatr}-\sum_{\ell}\emb{\atr_{j_\star}^\ell}\Big)$, \textit{i.e.}, negative of all attributes that are not associated with subject $\obj_{j_\star}$.

On the other hand, when $t<\Tau$, $f^t_\text{subj}(\X)=-\tfrac{1}{t}\Big(\sum_{\uatr}\emb{\uatr}-\sum_{\ell}\emb{\atr_{j_\star}^\ell}\Big)$.

Consider the \textbf{grammar} head. Let $g_h^t=g^t_h(\X)$ and similarly $f_h^t:=f^t_h(\X)$. The outputs in this case, when $\beta\to\infty$, are as follows:

\begin{itemize}
\item $t=\Tau-S-2$: $g_{\text{grm}}=\emb{\atr_{j_\star}^{\ell_{N-1}}}$, $f_{\text{grm}}=\emb{\obj_{j_\star}}$
\item $t=\Tau-S-1$: $g_{\text{grm}}=\emb{\obj_{j_\star}}$, $f_{\text{grm}}=\sum_{v\in\grmb\cup\relb}\emb{v}$
\item $t\in\mathcal{T}$: $g_{\text{grm}}\!=\!\begin{cases*}
    0.5(\emb{\sep}\!+\!\emb{\rel_{\ell}}), & if $\xb_{t-S-2:t}\!\in\!\relb$\\
    \emb{\sep}, & if $\xb_{t-S-2:t}\!\notin\!\relb$
\end{cases*}$, \\$f_{\text{grm}}\!=\!
\begin{cases*}
    \sum_{v\in\grmb\cup\relb}\emb{v}, & if $g_{\text{grm}}\!=\!\emb{\sep}$\\ 0.75\sum_{v\in\grmb}\emb{v}+0.5\sum_{v\in\relb}\emb{v}+0.5\emb{\sep}, & if $g_{\text{grm}}\!=\!0.5(\emb{\sep}+\emb{\rel_{\ell_{\Ntr}}})$
\end{cases*}$
\item $t=\Tau-1$: $g_{\text{grm}}=\emb{\rel_{\ell_{\Ntr}}}$, $f_{\text{grm}}=\emb{\sep}+0.5\sum_{v\in\grmb}\emb{\grm}$
\item $t=\Tau$: $g_{\text{grm}}=\frac{1}{\Tau}\sum_t\xb_t$, $f_{\text{grm}}=\tfrac{\Ntr-1}{\Tau}\emb{\obj_{j_\star}}+\tfrac{\Ntr}{\Tau}\Big(1.5\sum_{v\in\grmb}\emb{v}+\sum_\ell\emb{\rel_\ell}+\emb{\sep}\Big)$
\end{itemize}

Combining the outputs of the individual heads, the final output in different cases is as follows. 
\begin{align*}
    v^*=\begin{cases*}
        \atr_{j_\star}^{\ell_{\Ntr}},& if $t=\Tau$,\\
        \sep, & if $t=\Tau-1$,\\
        v\sim\Unif{\grmb\cup\relb}, & if $t\in\mathcal{T}$, $\nexists j\in\{t-S-2,...,t\}, \xb_j\in\relb$,\\
        v\sim\Unif{\grmb}, & if $t\in\mathcal{T}$, $\exists j\in\{t-S-2,...,t\}, \xb_j\in\relb$,\\
        v\sim\Unif{\grmb\cup\relb}, & if $t=\Tau-S-1$,\\
        \obj_{j_\star}, & if $t=\Tau-S-2$.
    \end{cases*}
\end{align*}

\end{proof}

Next, consider \ICL sequences of the form
\begin{align}
\X=[\emb{\obj_{j_1}},\emb{\sep},\emb{\atr^{\ell_\star}_{j_1}},...,\emb{\obj_{j_{\Nft+1}}},\emb{\sep}],\label{eq:const_ft_x_}
\end{align}
where the correct last token prediction is $\emb{\atr^{\ell_\star}_{j_{\Nft+1}}}$. The following result shows that there exists an attention-only model that always gives correct predictions for these sequences.
\begin{proposition}\label{prop:icl_}
    Consider the input $\X$ in \cref{eq:const_ft_x_}, and a 1-layer 3-head attention-only model with the weights
    \begin{align*}
\W_{KQ}^\text{subj}&=\beta\Big(\sum_j\emb{\obj_j}+\emb{p_{-1}}\Big)\emb{\sep}^\top,\quad \W_{OV}^{\text{subj}}=-\sum_j\left(\sum_{j'\neq j,\ell}\emb{\atr_{j'}^\ell}\right)\emb{\obj_j}^\top,\\
\W_{KQ}^\text{rel}&=\beta\sum_{\ell}\sum_{\uatr\in\uatrb'_\ell}\emb{\uatr}\emb{\sep}^\top,\quad \W_{OV}^\text{rel}=\sum_{\ell}\Big(\sum_{\uatr\in\uatrb'_\ell}\emb{\uatr}\Big)\Big(\sum_{\uatr\in\uatrb'_\ell}\emb{\uatr}+\emb{\rel_\ell}\Big)^\top,
    \end{align*}
    for the subject and relation heads, and the same weights for the grammar head as in \cref{prop:pt_}. Here, for each attribute type $\ell$, $\uatrb'_\ell:=\cup_{j\in\objb'}\{\atr_j^\ell\}$ denotes the set of unique attributes for the subjects in $\objb'\subset \objb$ seen during finetuning. Then, when the scalar $\beta\to\infty$ and $\uatrb'_\ell=\uatrb$, this model always gives the correct prediction for the last token position. 
\end{proposition}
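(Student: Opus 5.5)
The plan is a direct computation at the last position $t=\Tau$, where the query is $\xb_\Tau=\emb{\sep}$. For each of the three heads I take the limit $\beta\to\infty$ of the attention pattern, compute $g_h(\X)$ and then $f_h(\X)=\W_{OV}^h g_h(\X)$. I then sum the three head outputs and read off the logits $\emb{v}^\top f(\X)$ for every $v\in\mathcal{V}$. Throughout I use that embeddings and positional encodings are one-hot and mutually orthogonal. I also use that the key at position $t'$ is $\xb_{t'}+\emb{p_{t'-\Tau}}$, so the query subject at position $\Tau-1$ carries $\emb{p_{-1}}$. Finally, $\Tau=3(\Nft+1)-1$.

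\emph{Subject head.} The score of key $t'$ is $\beta\big(\ind{\xb_{t'}\in\objb}+\ind{t'=\Tau-1}\big)$. The query subject $\obj_{j_{\Nft+1}}$ therefore has score $2\beta$, every other subject has score $\beta$, and all remaining positions have score $0$. As $\beta\to\infty$ the softmax concentrates on position $\Tau-1$, so $g_{\text{subj}}(\X)=\emb{\obj_{j_{\Nft+1}}}$. This holds even if the same subject occurs earlier, because only the last occurrence gets the positional bonus. Hence
\[
f_{\text{subj}}(\X)=-\Big(\sum_{\uatr}\emb{\uatr}-\sum_\ell\emb{\atr^\ell_{j_{\Nft+1}}}\Big),
\]
exactly as in the proof of \cref{prop:pt_}.

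\emph{Relation head.} The score of key $t'$ is $\beta\,\ind{\xb_{t'}\in\cup_\ell\uatrb'_\ell}$. Since $\uatrb'_\ell=\uatrb_\ell$, all $\Nft$ in-context attributes $\atr^{\ell_\star}_{j_i}$ receive score $\beta$, and subjects and separators receive score $0$. So $g_{\text{rel}}(\X)=\tfrac1{\Nft}\sum_i\emb{\atr^{\ell_\star}_{j_i}}$. Each such attribute lies in $\uatrb_{\ell_\star}$, and I assume the type sets $\uatrb_\ell$ are disjoint. Then $\W_{OV}^{\text{rel}}$ maps each one to $\ub_{\ell_\star}=\sum_{\uatr\in\uatrb_{\ell_\star}}\emb{\uatr}$. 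The $\emb{\rel_\ell}^\top$ part of $\W_{OV}^{\text{rel}}$ is inactive because no relation tokens appear. Averaging gives $f_{\text{rel}}(\X)=\ub_{\ell_\star}$.

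\emph{Grammar head.} This is the step I expect to need the most care. The query $\emb{\sep}$ is not among the right factors of $\W_{KQ}^{\text{grm}}$ (those are attributes, subjects, relations and grammar tokens). Hence all scores are $0$, attention is uniform, and $g_{\text{grm}}(\X)=\tfrac1\Tau\sum_{t'}\xb_{t'}$. Applying $\W_{OV}^{\text{grm}}$:
\begin{itemize}
\item attributes are sent to subjects, giving subject coefficients at most $\Nft/\Tau$;
\item subjects and separators are sent to $\sum_{\grm}\emb{\grm}+\sum_\ell\emb{\rel_\ell}$, with coefficient $2(\Nft+1)/\Tau$;
\item nothing is sent to attribute tokens.
\end{itemize}
Both nonzero coefficients are strictly less than $1$, since $\Tau=3\Nft+2$.

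\emph{Combining.} The attribute logits come only from the subject and relation heads:
\begin{itemize}
\item the target $\atr^{\ell_\star}_{j_{\Nft+1}}$ gets $1+0=1$;
\item other type-$\ell_\star$ attributes get $1-1=0$;
\item other attributes of the query subject get $0$;
\item all remaining attributes get $-1$.
\end{itemize}
Non-attribute tokens receive only the grammar-head contributions, which are strictly below $1$, and $\sep$ receives nothing. Hence the target attribute is the unique argmax, proving the claim. The points I would double-check are the disjointness of attribute types across $\ell$ and the bound $2(\Nft+1)/\Tau<1$. The latter holds for every $\Nft\ge1$.
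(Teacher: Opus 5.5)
Your proposal is correct and takes the same approach as the paper's proof: take the $\beta\to\infty$ attention pattern in each head, compute $f_{\text{subj}}$, $f_{\text{rel}}$ and $f_{\text{grm}}$, and read off the argmax of the summed logits. You are more explicit than the paper about the grammar head and the final margin; your coefficient $2(\Nft+1)/\Tau$ on $\grmb\cup\relb$ is the accurate count where the paper writes $\Nft/\Tau$, but either way every non-attribute logit stays strictly below the target's logit of $1$, so the conclusion is unchanged.
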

\begin{proof}
We present a construction for a $3$-head model, with minimal changes to the construction in the proof of \cref{prop:pt}. Specifically, the \textbf{grammar} head is unchanged. At a high level, the \textbf{subject} head attends to the last/query subject and maps it to the negative of the sum of attributes not associated with it, while the \textbf{relation} head attends to the attributes in the context, and maps them to all attributes of the same type $\ell_\star$.

Consider the \textbf{subject} head. Using the aforementioned weights, when $\beta\to\infty$, $g_\text{subj}(\X)=\emb{\obj_{j_{\Nft+1}}}$, \textit{i.e.}, the query subject, and $f_\text{subj}(\X)=-\Big(\sum_{\uatr}\emb{\uatr}-\sum_{\ell}\emb{\atr_{j_{\Nft+1}}^\ell}\Big)$.

Consider the \textbf{relation} head. Using the aforementioned weights, when $\beta\to\infty$, $g_\text{rel}(\X)=\tfrac{1}{\Nft}\sum_i\emb{\atr_{j_i}^{\ell_\star}}$, \textit{i.e.}, the average of the attributes that appear in the context, and $f_\text{rel}(\X)=\sum_{\uatr\in\uatrb_{\ell_\star}}\emb{\uatr}$.

Combining the outputs of the individual heads, the model output is
\begin{align*}
    f(\X)=\sum_{\uatr\in\uatrb_{\ell_\star}}\emb{\uatr}-\Big(\sum_{\uatr}\emb{\uatr}-\sum_{\ell}\emb{\atr_{j_{\Nft+1}}^\ell}\Big)+\tfrac{1}{\Tau}\sum_{i\in\calM''}\emb{\obj_{i}}+\tfrac{\Nft}{\Tau}\sum_{v\in\grmb\cup\relb}\emb{v},
\end{align*}
where $\calM''\subseteq [M]$. Then, the final prediction $v^*=\atr_{j_{\Nft+1}}^{\ell_\star}$.
\end{proof}

\subsection{Optimization Dynamics for Finetuning}
We state the formal version of \cref{th:ft_dynamics} below followed by its proof.
\begin{theorem}
    Consider the two-head attention-only model defined in \cref{eq:attnonly}, initialized with the weights from the construction for \Bio sequences in \cref{prop:pt}:
    \begin{align*}
\W_{KQ}^{\text{subj}}=\beta\Big(\sum_j\emb{\obj_j}\Big)\emb{\sep}^\top,&
\qquad
\W_{OV}^{\text{subj}}=-\sum_j\Big(\sum_{j'\neq j,\ell}\emb{\atr_{j'}^\ell}\Big)\emb{\obj_j}^\top,\\
\W_{KQ}^{\text{rel}}=\beta\Big(\sum_{\ell}\emb{\rel_\ell}+\pe\Big)\emb{\sep}^\top,&\qquad\W_{OV}^{\text{rel}}=\sum_{\ell}\Big(\sum_{j}\emb{\uatr_{j}^\ell}\Big)\emb{\rel_{\ell}}^\top.
    \end{align*}
    Assume that $\mathcal U_\ell'=\mathcal U_\ell\,\forall \ell\in[L]$, $M\gg mL,M>M_{\text{ft}}\gg U_L$, $\Tau\leq S+2$. Consider 
   last-token prediction on ICL sequences of the form
\begin{align}\label{eq:icl_th}
\X = [\xb_1,\dots,\xb_\Tau]=[\emb{\obj_{j_1}\!},\emb{\sep},\emb{\atr^{\ell_\star}_{j_1}\!},\!...,\emb{\obj_{j_{\Nft+1}}\!},\emb{\sep}], 
\qquad \xb_\Tau=\emb{\sep},
\qquad y=\atr^{\ell_\star}_{j_{\Nft+1}},
\end{align}
on the population, where the attribute index and each subject in the context are sampled i.i.d. with replacement from a randomly selected set of $M_{\text{ft}}$ subjects. 
    Then, keeping the weight $\W_{OV}^\text{subj}$ fixed, and updating the other weights with 
    one step of GD on $\W_{KQ}^\text{subj}$ with step size $\eta_{KQ}^{\text{subj}}$, one step of projected GD (right projection with $\Ib-\emb{\sep}\emb{\sep}^\top$) on $\W_{OV}^\text{rel}$ with step size $\eta_{OV}^{\text{rel}}$, and two steps of GD on $\W_{KQ}^\text{rel}$ with step size $\eta_{KQ}^{\text{rel}}$, the updated weights satisfy
    \begin{align*}
\W_{KQ}^{\text{subj}}&\approx\beta\left(\sum_j\emb{s_j}+\alpha_1\emb{p_{-1}}\right)\emb{\sep}^\top,\\
\W_{KQ}^{\text{rel}}&\approx\beta\left(\sum_\ell\emb{r_\ell}+\alpha_3\left(\frac{2}{3}\frac{1}{LU_L}
\sum_{u}
\emb{u}+\frac{1}{m-1}\sum_{i=1}^{m-1}
\pb_{3i+2}-\frac{1}{\Tau}\pb-\frac{1}{3}\frac{1}{M}\sum_s\emb{s}\right)\right)\emb{\sep}^\top,\\
\W_{OV}^{\text{rel}}&\approx\sum_{\ell}\Big(\sum_{j}\emb{\uatr_{j}^\ell}\Big)\emb{\rel_{\ell}}^\top+\alpha_2\left((m-1)\sum_{\ell=1}^L
\Big(\sum_{j=1}^{U_L} \emb{\uatr_j^\ell}\Big)\Big(\sum_{j=1}^{U_L} \emb{\uatr_j^\ell}\Big)^\top+\frac{U_L^2}{ M_{\text{ft}}}\sum_{j=1}^{M_{\text{ft}}}
\Big( \sum_{\ell=1}^L \emb{\atr_j^\ell}\Big) \emb{\obj_j}^\top\right),
    \end{align*}
    where $\pb_i=\emb{p_{i-\Tau+1}}$, $\alpha_1:=\eta_{KQ}^{\text{subj}}m^{-1}\left(1-U_L^{-1}\right)$, $\alpha_2=\eta_{OV}^{\text{rel}}(L\Tau U_L^2)^{-1}$, and $\alpha_3:=\eta_{KQ}^{\text{rel}}(m-1)^2\alpha_2$. Further,
    when $\beta\to\infty$, this model gets perfect accuracy on \ICL sequences with seen or held-out query subjects.
\end{theorem}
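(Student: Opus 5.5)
The plan is to compute the population gradients of the cross-entropy loss at the pretraining initialization in the limit $\beta\to\infty$. In this limit the attention patterns are explicit, so every gradient reduces to an expectation over the discrete sampling of $\ell_\star$ and the subjects $j_1,\dots,j_{\Nft+1}$.

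I will first evaluate the forward pass at initialization on an \ICL sequence. The relation head has no relation token to attend to, and $\Tau\le S+2$ means $\pe$ covers the whole window. Hence its attention is uniform, $g_{\text{rel}}=\tfrac1\Tau\sum_t\xb_t$, and $f_{\text{rel}}=\W_{OV}^{\text{rel}}g_{\text{rel}}=0$, because $\W_{OV}^{\text{rel}}$ reads only relation tokens. The subject head attends uniformly over the $\Nft+1$ subject tokens. Its output is therefore an average of "negative of non-associated attributes", which is nearly flat over attributes. Consequently the softmax output $\hat p$ is close to uniform over the vocabulary, restricted in effect to the attribute block. The loss gradient with respect to the logits is then $\hat p-\emb{y}$, and this drives every update.

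Next I will compute each update, using the chain rule through the softmax attention.

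\emph{(i) $\W_{KQ}^{\text{subj}}$.} The gradient is $\W_{KQ}$-direction $\sum_t a_t(\xb_t+\pb_t-\bar{\xb})\,\delta_t\,\emb{\sep}^\top$, where $\delta_t=(\W_{OV}^{\text{subj}}\xb_t)^\top(\emb{y}-\hat p)$. Only the query subject position has $\delta_t$ favouring $y$. Taking the expectation, the subject-embedding parts cancel by symmetry over subjects. What survives is the positional term $\emb{p_{-1}}$ with coefficient $\propto m^{-1}(1-U_L^{-1})$, which gives $\alpha_1$.

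\emph{(ii) $\W_{OV}^{\text{rel}}$.} The gradient is $\mathbb{E}[(\emb{y}-\hat p)g_{\text{rel}}^\top]$ with uniform $g_{\text{rel}}$. After right-projecting out $\emb{\sep}$, it splits into two pieces. The first is the context attributes of type $\ell_\star$ paired with $y$, also of type $\ell_\star$; averaging over $\ell_\star$ and uniform attribute draws gives $(m-1)\sum_\ell \ub_\ell\ub_\ell^\top$ up to the normalization $(L\Tau U_L^2)^{-1}$. The second is the context subjects paired with $y$, which gives the $\tfrac{U_L^2}{M_{\text{ft}}}\sum_j(\sum_\ell\emb{\atr_j^\ell})\emb{\obj_j}^\top$ term. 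I will drop the $\hat p$ contributions as lower order, using $M\gg mL$ and $M_{\text{ft}}\gg U_L$.

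\emph{(iii) $\W_{KQ}^{\text{rel}}$.} At step one the gradient vanishes, since $f_{\text{rel}}$ is identically zero along every attention perturbation: $\W_{OV}^{\text{rel}}\xb_t=0$ for all $t$. At step two I will plug in the updated OV matrix. Then $\delta_t=(\W_{OV}^{\text{rel}}\xb_t)^\top(\emb{y}-\hat p)$ is positive at attribute positions of type $\ell_\star$, and small at subject positions, because the subject association term has the wrong $\ell$ on average. Centering by the uniform mean $\bar{\xb}$ produces four contributions: the $+\tfrac{2}{3LU_L}\sum_u\emb{u}$ content term, the attribute-position term $\tfrac1{m-1}\sum_i\pb_{3i+2}$, and the $-\tfrac1\Tau\pb$ and $-\tfrac{1}{3M}\sum_s\emb{s}$ mean-subtraction terms. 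This yields $\alpha_3$.

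Finally I will verify the accuracy claim with $\beta\to\infty$ and these weights, following the argument of \cref{prop:icl_}. In the subject head the $\emb{p_{-1}}$ term breaks the tie, so attention sits on the query subject. In the relation head, the $\emb{r_\ell}$ and $\pb$ terms are inactive or uniform, the position terms pick out attribute positions, and attribute tokens beat subject tokens, so attention goes to the context attributes. The output is then $\alpha_2(m-1)\ub_{\ell_\star}\cdot$const, plus a subject-attribute term applied to a zero subject weight, plus the subject-head output. The argmax is therefore $\atr^{\ell_\star}_{j_{\Nft+1}}$ for any query subject, because $\mathcal U'_\ell=\mathcal U_\ell$ and the OV map is defined on all attributes.

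I expect the main obstacle to be the step-two gradient for $\W_{KQ}^{\text{rel}}$. Exact constants require careful expectations over subject collisions and over which positions hold attributes. The approximations (dropping $\hat p$ terms and with-replacement coincidences) must be justified by the stated asymptotic regime, and I would handle them by bounding the residuals relative to the leading terms.
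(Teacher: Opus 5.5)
Your proposal follows the paper's proof essentially step for step: saturated attention patterns as $\beta\to\infty$, dropping the $\hat p$ part of the logit gradient by bounding the softmax probabilities, symmetry cancellations that leave only the $\emb{p_{-1}}$ term for $\W_{KQ}^{\text{subj}}$, a zero first-step gradient for $\W_{KQ}^{\text{rel}}$ followed by a second-step gradient driven by the updated $\W_{OV}^{\text{rel}}$, and the accuracy check via the argument of \cref{prop:icl_}. One caveat for when you bound the residuals: at the second step, $M\gg mL$ and $M_{\text{ft}}\gg U_L$ are not enough, because a large $\alpha_2$ would make $\hat p$ concentrate on type-$\ell_\star$ attributes and cancel the signal at attribute positions; this is why the paper's second helper lemma additionally assumes $\alpha_2\big((m-1)\Nft+mU_L^2/M_{\text{ft}}\big)\le\Tau$, i.e., a small $\eta_{OV}^{\text{rel}}$.
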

\begin{proof}
Note that for a given \ICL sequence $\X$, the query subject $\obj_{j_{\Nft+1}}$ appears at the fixed relative position. The last-token negative log-likelihood is
\begin{equation*}
\mathcal L(\X,y)
=
-\emb{y}^\top f(\X)
+
\log\sum_{v\in\mathcal V}\exp\big(\emb{v}^\top f(\X)\big),
\end{equation*}
with gradient
\begin{equation*}
\ub:=\nabla_f \mathcal L
=
\sum_{v\in\mathcal V}\pi_v\,\emb v
-
\emb y,
\qquad
\pi_v=\frac{e^{\emb v^\top f}}{\sum_u e^{\emb u^\top f}}.
\end{equation*}

Recall that $f(\X)=\sum_{k=1}^H\W_{OV}^kg_k(\X)$, where $g_k(\X)=\sum_{i=1}^\Tau \alpha_i^k\xb_i$,  $\alpha_i^k:=\sft{\lambda_i^k}$, $\lambda_i^k:= (\xb_i+\pb_i)^\top \W_{KQ}^k\emb{\sep}$. 

Next, we compute the gradients with respect to $\W_{OV}^k$ and $\W_{KQ}^k$.

We have that
\begin{align*}
    \nabla_{\W_{OV}^{k}}\mathcal L
=
\ub g_{k}(\X)^\top=\ub \sum_{i=1}^\Tau \alpha_i^k\xb_i^\top.
\end{align*}
Further, for each $i,k$,
\begin{equation*}
\frac{\partial \mathcal L}{\partial \alpha^k_i}
=
\inpb{\nabla_{g_k}\mathcal L}{ \frac{\partial g_k}{\partial \alpha^k_i}}
=
\ub^\top\W_{OV}^k \xb_i
=: r^k_i,
\end{equation*}
where we use $\nabla_{g_k}\mathcal L=(\W_{OV}^k)^\top\nabla_f\mathcal L=(\W_{OV}^k)^\top\ub$. Next, we have that
\[
\frac{\partial \alpha^k_i}{\partial \lambda^k_j}=\alpha^k_i(\ind{i=j}-\alpha^k_j).
\]
Therefore,
\begin{align}
\frac{\partial \mathcal L}{\partial \lambda_j^k}
&=
\sum_{i=1}^{\Tau}\frac{\partial \mathcal L}{\partial \alpha^k_i}\frac{\partial \alpha^k_i}{\partial \lambda^k_j}
=
\sum_{i=1}^{\Tau} r^k_i\alpha^k_i(\ind{i=j}-\alpha^k_j)=
r^k_j\alpha^k_j-\alpha^k_j\sum_{i=1}^{\Tau}\alpha^k_i r^k_i
=
\alpha_j^k\left(r^k_j-\bar r^k\right),\nonumber
\end{align}
where we define $\bar r^k:=\sum_{i=1}^{\Tau}\alpha^k_i r^k_i$. Finally, we have that
\begin{align}\label{eq:grad_KQ_general}
\nabla_{\W_{KQ}^{k}}\mathcal L
&=
\sum_{j=1}^{\Tau}\frac{\partial \mathcal L}{\partial \lambda^k_j}\,
\nabla_{\W_{KQ}^{k}} \lambda^k_j
=\beta
\sum_{j=1}^{\Tau}
\alpha_j^k(r^k_j-\bar r^k)\,(\xb_j+\pb_j)\emb{\sep}^\top.
\end{align}

We next analyze the gradient updates for two iterations of finetuning. We keep $\W_{OV}^{\text{subj}}$ fixed throughout. 

\paragraph{First Iteration.} For the first iteration, we analyze the updates for $\W_{KQ}^\text{subj}$, $\W_{KQ}^\text{rel}$, and $\W_{OV}^\text{rel}$. 

\paragraph{Subject Head ($\W_{KQ}^{\text{subj}}$).} We first focus on the \emph{subject head}. 
Recall from the construction for \Bio sequences that $\W_{KQ}^\text{subj}=
\beta
\Big(\sum_{j\in\objb}\emb{\obj_j}\Big)\emb{\sep}^\top$.
Let $\mathcal I_{\obj}:=\{i:\exists s\in \objb, \xb_i=\emb{\obj}\}$ denote the set of subject positions, with
$|\mathcal I_{\obj}|=\Nft+1=:m$. For $i\in\mathcal{I}_s$, let $J(i)$ denote the subject index $j_i$ in the context, such that $j_i\in[M]$, $s_{j_i}\in\objb$. For $\beta\to\infty$, the resulting attention weights are
\begin{equation*}
\alpha_i
=
\begin{cases}
\frac{1}{m}, & i\in\mathcal I_{\obj},\\
0, & i\notin\mathcal I_{\obj},
\end{cases}
\end{equation*}
where we omit the superscript $k$ for brevity. Substituting into \cref{eq:grad_KQ_general}, we obtain
\begin{equation*}
\nabla_{\W_{KQ}^{\text{subj}}}\mathcal L
=
\frac{\beta}{m}
\sum_{i\in\mathcal I_{\obj}}
(r_i-\bar r)\,(\xb_i+\pb_i)\,\emb{\sep}^\top,
\qquad
\bar r=\frac{1}{m}\sum_{i\in\mathcal I_{\obj}} r_i .
\end{equation*}
Let $\hat r_i:=r_i-\sum_{v\in\mathcal{V}}\pi_v\emb{v}^\top\W_{OV}\xb_i$. 
Let $\atrb_j$ denote the set of all attributes associated with subject $\obj_j$. We have that \begin{align*}
\sum_{v\in\mathcal{V}}\pi_v\emb{v}^\top\W_{OV}^{\text{subj}}\xb_i=\ind{i\in\mathcal{I}_s}\sum_{u\in\atrb_{J(i)}}\pi_u.
\end{align*}
From \cref{lem:logit-bound}, 
$\pi_u\leq \frac{1}{V+LU_L(C\exp^{-1}-1)}$.
Then, we have
\begin{align*}
    |\hat r_i-r_i|\leq \frac{L}{V+LU_L(C\exp^{-1}-1)}\ll \frac{1}{m},
\end{align*}
since $V\geq M+U_LL\gg mL$. We also define \begin{align*}
\nabla_{\W_{KQ}^{\text{subj}}}\hat{\mathcal L}
:=
\frac{\beta}{m}
\sum_{i\in\mathcal I_{\obj}}
(\hat r_i-\bar{\hat r})(\xb_i+\pb_i)\,\emb{\sep}^\top,
\qquad
\bar {\hat r}=\frac{1}{m}\sum_{i\in\mathcal I_{\obj}} \hat r_i.
\end{align*}
Then, we have that
\begin{align*}
\frac{1}{\beta}\norm{\nabla_{\W_{KQ}^{\text{subj}}}\hat{\mathcal L}-\nabla_{\W_{KQ}^{\text{subj}}}{\mathcal L}}&\leq \frac{1}{m}\left(\norm{\sum_i(r_i-\hat r_i)(\xb_i+\pb_i)\emb{\sep}^\top}+\norm{(\bar r-\bar{\hat r})\sum_i(\xb_i+\pb_i)\emb{\sep}^\top}\right)\\
&\leq \frac{\sqrt{2}}{m}\left(m\max_i|r_i-\hat r_i|+m\max_i|r_i-\hat r_i|\right)\ll \frac{2\sqrt{2}}{m}.
\end{align*}
Therefore, in the next part, we analyze $\nabla_{\W_{KQ}^{\text{subj}}}\hat{\mathcal L}$ (as we will see, $\beta^{-1}\norm{\nabla_{\W_{KQ}^{\text{subj}}}\hat{\mathcal L}}$ is $\Theta(1/m)$, \textit{i.e.} it dominates the norm of the difference).
We have that
\begin{align*}
    \hat r_i=-\begin{cases}
    \ind{\atr_{J(i)}^{\ell_\star}=y}, &i\in\mathcal I_{\obj},\\
        0, & \text{o.w.}
    \end{cases}
\end{align*}
Let $q=3\Nft$ denote the index of the query subject. Then, we have that
\begin{align*}
    \hat r_i-\bar{\hat r}=-\begin{cases}
    1-\frac{1}{m}\sum_{k\in\mathcal{I}_s}\ind{\atr_{J(k)}^{\ell_\star}=y}, &i=q,\\
    \ind{\atr_{J(i)}^{\ell_\star}=y}-\frac{1}{m}\sum_{k\in\mathcal{I}_s}\ind{\atr_{J(k)}^{\ell_\star}=y}, &i\in\mathcal{I}_s \setminus \{q\},\\
        0, & \text{o.w.}
    \end{cases}
\end{align*}

Let $C_y:=\sum_{i=1}^{m} \ind{a_{j_i}^{\ell_\star}=y}$. Then,
\begin{align}\label{eq:wkq_subj}
\nabla_{\W_{KQ}^{\text{subj}}}\hat{\mathcal L}
=-\frac{\beta}{m}\left(\sum_{k=1}^m(\ind{\atr_{j_k}^{\ell_\star}=y}-m^{-1}C_y)\emb{\obj_{j_k}}+\sum_{k=1}^{m-1}(\ind{\atr_{j_k}^{\ell_\star}=y}-m^{-1}C_y)\pb_{3k}+(1-m^{-1}C_y)\emb{p_{-1}}\right)\emb{\sep}^\top.
\end{align}

We first calculate some expectations. Under i.i.d.\ context sampling with replacement, conditioned on $y$, we have $
\Pr(a_i^{\ell_\star}=y\mid y)=\frac{1}{U_L}$, and hence $\E[C_y\mid y]=\frac{m}{U_L}$. Further, averaging over $y$ gives $\E[C_y]=\frac{m}{U_L}$.

Let $\mu_s(y):=\E[\emb{s}\mid a^{\ell_\star}=y]$. Under i.i.d.\ context sampling with replacement,
\[
\E\!\left[\sum_i \ind{a_i^{\ell_\star}=y}\emb{s_i} \middle| y\right]
=
\sum_i \E[\ind{a_i^{\ell_\star}=y}s_i\mid y]
=
m\frac{1}{U_L}\mu_s(y),
\]
since $\Pr(a_i^{\ell_\star}=y\mid y)=1/U_L$.
Hence,
\[
\E\!\left[\sum_i \ind{a_i^{\ell_\star}=y}\emb{s_i} \right]
=
\frac{m}{U_L}\E[\mu_s(y)].
\]
Since each value appears on $M/U_L$ subjects, $\E[\mu_s(y)]
=
\frac{1}{M}\sum_{j=1}^M \emb{s_j}$,
and therefore
\[
\E\!\left[\sum_i \ind{a_i^{\ell_\star}=y}\emb{s_i} \right]
=
\frac{m}{U_L M}\sum_{j=1}^M \emb{s_j}.
\]

Taking expectation in \cref{eq:wkq_subj} over the context, and substituting these to simplify, we have that 
\begin{align*}
\E\big[\nabla_{\W_{KQ}^{\text{subj}}}\hat{\mathcal L}
\big]&=-\frac{\beta}{m}\left(\left(\frac{m}{U_LM}-\frac{1}{U_LM}m\right)\sum_j\emb{\obj_j}+\sum_{k=1}^{m-1}\left(\frac{1}{U_L}-\frac{1}{U_L}\right)\pb_{3k}+(1-U_L^{-1})\emb{p_{-1}}\right)\emb{\sep}^\top\\
&=-\frac{\beta}{m}\left(1-\frac{1}{U_L}\right)
\emb{p_{-1}}\emb{\sep}^\top.
\end{align*}
Since $\alpha_1=\eta_{KQ}^{\text{subj}}m^{-1}\left(1-U_L^{-1}\right)$, the updated weight matrix is
\begin{align*}
\W_{KQ}^{\text{subj}}\approx\beta\left(\sum_j\emb{s_j}+\alpha_1\emb{p_{-1}}\right)\emb{\sep}^\top.
\end{align*}

\paragraph{Relation Head.} Next, we consider the \textit{relation} head. Recall from the construction for \Bio sequences that $\W_{KQ}^{\text{rel}}=\beta\left(\sum_{\ell}\emb{\rel_\ell}+\pe\right)\emb{\sep}^\top$. Since $S+2\geq \Tau$, first consider the update for the KQ part.

We have that 
\begin{align*}
r_i=\frac{1}{\Tau}\sum_i\ub^\top\Big(\sum_{\ell}\emb{\uatr_\ell}\Big)\ind{\xb_i=\emb{\rel_\ell}}=0,
\end{align*}
since the context does not contain a relation token. This means that the gradient is zero and therefore, $\W_{KQ}^{\text{rel}}$ is not updated in this iteration. 

\paragraph{$\W_{OV}^{\text{rel}}$.} Next, consider the update for the OV part. We have that
\begin{align*}
\nabla_{\W_{OV}^{\mathrm{rel}}}\mathcal L
=
\frac{1}{\Tau}\sum_{i=1}^{\Tau}\Big(\sum_{v\in\mathcal V}\pi_v\emb v-\emb y\Big)\xb_i^\top.
\end{align*}
Define
\begin{align*}
\nabla_{\W_{OV}^{\mathrm{rel}}}\hat{\mathcal L}=-\frac{1}{\Tau}\emb{y}\left(\sum_i\emb{a_i^{\ell_\star}}+\sum_i\emb{s_i}+m\emb{\sep}\right)^\top.
\end{align*}
Then, using \cref{lem:logit-bound}, we have that
\begin{align*}
\norm{\nabla_{\W_{OV}^{\mathrm{rel}}}\hat{\mathcal L}-\nabla_{\W_{OV}^{\mathrm{rel}}}\mathcal L}\leq \sum_v\pi_v\leq \frac{V}{V+(Ce^{-1}-1)LU_L}\ll 1,
\end{align*}
since $C\gg e$. Therefore, we analyze $\nabla_{\W_{OV}^{\mathrm{rel}}}\hat{\mathcal L}$ hereon (as we will see, $\norm{\nabla_{\W_{OV}^{\mathrm{rel}}}\hat{\mathcal L}}=\Theta(\sqrt{U_LL})>1$, 
\textit{i.e.} it dominates the norm of the difference). We have that
\begin{align*}
\E[\nabla_{\W_{OV}^{\mathrm{rel}}}\hat{\mathcal L}]
&= -\frac{m-1}{L\Tau U_L^2}\sum_{\ell=1}^L
\Big(\sum_{j=1}^{U_L} \emb{\uatr_j^\ell}\Big)\Big(\sum_{j=1}^{U_L} \emb{\uatr_j^\ell}\Big)^\top-
\frac{m-1}{L\Tau U_LM_{\text{ft}}}\sum_{\ell=1}^L
\Big(\sum_{j=1}^{U_L} \emb{\uatr_j^\ell}\Big)\Big(\sum_{j=1}^{M_{\text{ft}}} \emb{\obj_j}\Big)^\top\\&-\frac{1}{L\Tau M_{\text{ft}}}\sum_{j=1}^{M_{\text{ft}}}
\Big( \sum_{\ell=1}^L \emb{\atr_j^\ell}\Big) \emb{\obj_j}^\top-
\frac{m}{L\Tau U_L}\sum_{\ell=1}^L\sum_{j=1}^{U_L} \emb{\uatr_j^\ell}\emb{\sep}^\top.
\end{align*}
We right project this gradient onto the subspace orthogonal to $\emb{\sep}$, and since $M_\text{ft}\gg U_L$, $\lambda:=(L\Tau U_L^2)^{-1}$, we have
\begin{align*}
    -\E[\nabla_{\W_{OV}^{\mathrm{rel}}}\hat{\mathcal L}](\Ib-\emb{\sep}\emb{\sep}^\top) \approx \lambda\left((m-1)\sum_{\ell=1}^L
\Big(\sum_{j=1}^{U_L} \emb{\uatr_j^\ell}\Big)\Big(\sum_{j=1}^{U_L} \emb{\uatr_j^\ell}\Big)^\top+\frac{U_L^2}{ M_{\text{ft}}}\sum_{j=1}^{M_{\text{ft}}}
\Big( \sum_{\ell=1}^L \emb{\atr_j^\ell}\Big) \emb{\obj_j}^\top\right).
\end{align*}

Since $\alpha_2=\eta_{OV}^{\text{rel}}(L\Tau U_L^2)^{-1}$, the updated weight matrix is
\begin{align*}
    \W_{OV}^{\text{rel}}\approx\sum_{\ell}\Big(\sum_{j}\emb{\uatr_{j}^\ell}\Big)\emb{\rel_{\ell}}^\top+\alpha_2\left((m-1)\sum_{\ell=1}^L
\Big(\sum_{j=1}^{U_L} \emb{\uatr_j^\ell}\Big)\Big(\sum_{j=1}^{U_L} \emb{\uatr_j^\ell}\Big)^\top+\frac{U_L^2}{ M_{\text{ft}}}\sum_{j=1}^{M_{\text{ft}}}
\Big( \sum_{\ell=1}^L \emb{\atr_j^\ell}\Big) \emb{\obj_j}^\top\right).
\end{align*}

\paragraph{Second Iteration.} In this case, we only update $\W_{KQ}^{\text{rel}}$, keeping the other weights fixed. 
\paragraph{$\W_{KQ}^{\text{rel}}$.} We use the updated $\W_{OV}^{\text{rel}}$ to compute the gradient wrt $\W_{KQ}^{\text{rel}}$. Since $\W_{KQ}^{\text{rel}}$ remains unchanged in the first iteration, $\alpha_i^\text{rel}=\tfrac{1}{\Tau}$, and we have
\begin{equation*}
\nabla_{\W_{KQ}^{\mathrm{rel}}}\mathcal L
=
\frac{\beta}{\Tau}\sum_{i=1}^{\Tau}
(r_i-\bar r)\,(\xb_i+\pb_i)\,\emb{\sep}^\top,
\qquad
r_i := \left(\sum_{v\in\mathcal V}\pi_v\,\emb v
-
\emb y\right)^\top\W_{OV}^{\mathrm{rel}} \xb_i,\quad
\quad
\bar r:=\frac{1}{\Tau}\sum_i  r_i.
\end{equation*}
Also define
\begin{equation*}
\nabla_{\W_{KQ}^{\mathrm{rel}}}\hat{\mathcal L}
=
\frac{\beta}{\Tau}\sum_{i=1}^{\Tau}
(\hat r_i-\bar {\hat r})(\xb_i+\pb_i)\emb{\sep}^\top,
\qquad
\hat r_i :=  
-
\emb{y}^\top\W_{OV}^{\mathrm{rel}} \xb_i,\quad
\quad
\bar{\hat{r}}:=\frac{1}{\Tau}\sum_i  \hat r_i.
\end{equation*}

Following similar steps as in the update for the subject head, using \cref{lem:logit-bound-new}, we have that
\begin{align*}
    (\beta\alpha_2)^{-1}\norm{\nabla_{\W_{KQ}^{\mathrm{rel}}}\hat{\mathcal L}-\nabla_{\W_{KQ}^{\mathrm{rel}}}{\mathcal L}}\leq 2\sqrt{2}\alpha_2^{-1}\max_i|r_i-\hat r_i|\leq (\max_{u\in\uatrb} \pi_u)\left((m-1)^2U_L+mL\frac{U_L^2}{M_{\text{ft}}}\right)\leq \frac{em^2U_L}{M}\ll m^2,
\end{align*}
since $M>M_{\text{ft}}\gg U_L$. Therefore, we analyze $\nabla_{\W_{KQ}^{\mathrm{rel}}}\hat{\mathcal L}$ hereafter (as we will see, $(\beta\alpha_2)^{-1}\norm{\nabla_{\W_{KQ}^{\mathrm{rel}}}\hat{\mathcal L}}=\Theta(m^2)$, 
\textit{i.e.} it dominates the norm of the difference). We have that
\begin{align*}
    \hat r_i=-\alpha_2\begin{cases}
        m-1,&\xb_i\in\uatrb_{\ell_\star}\\
        \frac{U_L^2}{M_{\text{ft}}}\ind{\atr_i^{\ell_\star}=y},&\xb_i\in\objb
    \end{cases},\qquad 
    \bar{\hat r}=-\frac{\alpha_2}{\Tau}\left((m-1)^2
    +\frac{U_L^2}{M_{\text{ft}}}\sum_i\ind{\atr_i^{\ell_\star}=y}\right).
\end{align*}
Therefore,
\begin{align*}
    \hat r_i-\bar{\hat r}=-\alpha_2\begin{cases}
        (m-1)\left(1-\frac{m-1}{\Tau}\right)-\frac{U_L^2}{\Tau M_{\text{ft}}}\sum_i\ind{\atr_i^{\ell_\star}=y},&\xb_i\in\uatrb_{\ell_\star},\\
        \frac{U_L^2}{M_{\text{ft}}}\ind{\atr_i^{\ell_\star}=y}-\frac{U_L^2}{\Tau M_{\text{ft}}}\sum_i\ind{\atr_i^{\ell_\star}=y}-\frac{(m-1)^2}{\Tau},&\xb_i\in\objb.
    \end{cases}
\end{align*}

Let $\lambda_m:=(m-1)\left(1-\tfrac{m-1}{\Tau}\right)$. Then, $\nabla_{\W_{KQ}^{\mathrm{rel}}}\hat{\mathcal L}$ can be written as
\begin{align*}
-\frac{\beta\alpha_2}{\Tau}\left(\sum_{i=1}^{m-1}
\left(\lambda_m-\tfrac{U_L^2}{\Tau M_{\text{ft}}}C_y\right)(\emb{\atr_{j_i}^{\ell_\star}}+\pb_{3i+2})+\sum_{i=1}^{m}\left(\tfrac{U_L^2}{M_{\text{ft}}}\ind{\atr_i^{\ell_\star}=y}-\tfrac{U_L^2}{\Tau M_{\text{ft}}}C_y-\tfrac{(m-1)^2}{\Tau}\right)(\emb{\obj_{j_i}}+\pb_{3i})\right)\emb{\sep}^\top.
\end{align*}

We have that
\[
\E[C_y\emb{y}]=\E\!\left[\E[C_y\emb{y}\mid y]\right]=
\E\!\left[\emb{y}\E[C_y\mid y]\right]=\frac{m}{U_L}\E[\emb{y}]=\frac{m}{LU_L^2}\sum_u\emb{u}.
\] 
The expectations of the terms with attribute, subject and positional tokens can be simplified as follows.

First, for the attribute term, we have
\begin{align*}
    \E \left[\sum_{i=1}^{m-1}
\left(\lambda_m-\tfrac{U_L^2}{\Tau M_{\text{ft}}}C_y\right)\emb{\atr_{j_i}^{\ell_\star}}\right]&=\lambda_m(m-1)\tfrac{1}{LU_L}\sum_u\emb{u}-\tfrac{U_L^2}{\Tau M_{\text{ft}}}(m-1)\E[C_y\emb{y}]\\
&=\left((m-1)\left(1-\tfrac{m-1}{\Tau}\right)-\tfrac{mU_L}{\Tau M_{\text{ft}}}\right)\frac{m-1}{LU_L}\sum_u\emb{u},\\
&\approx \frac{2(m-1)^2}{3}\frac{1}{LU_L}\sum_u\emb{u},
\end{align*}
where the last step follows as $\Tau=3m-1$ and $M_{\text{ft}}\gg U_L$.

Next, for the subject term, we have
\begin{align*}
    \E\left[\sum_{i=1}^{m}\left(\tfrac{U_L^2}{M_{\text{ft}}}\ind{\atr_i^{\ell_\star}=y}-\tfrac{U_L^2}{\Tau M_{\text{ft}}}C_y-\tfrac{(m-1)^2}{\Tau}\right)\emb{\obj_{j_i}}\right]&=\tfrac{U_L^2}{M_{\text{ft}}}\E\sum_{i=1}^{m}\ind{\atr_i^{\ell_\star}=y}\emb{\obj_{j_i}}-\E\sum_{i=1}^{m}\left(\tfrac{U_L^2}{\Tau M_{\text{ft}}}C_y+\tfrac{(m-1)^2}{\Tau}\right)\emb{\obj_{j_i}}\\
    &=\left(\tfrac{U_L}{M_{\text{ft}}}-\tfrac{mU_L}{\Tau M_{\text{ft}}}-\tfrac{(m-1)^2}{\Tau}\right)\tfrac{m}{M}\sum_s\emb{s}\\
    &\approx -\tfrac{(m-1)^2}{3}\tfrac{1}{M}\sum_s\emb{s}.
\end{align*}
Next, for the positional encoding term, we have 
\begin{align*}
    &\E\left[\sum_{i=1}^{m-1}
(m-1)\pb_{3i+2}+\tfrac{U_L^2}{M_{\text{ft}}}\sum_{i=1}^{m}\ind{\atr_i^{\ell_\star}=y}\pb_{3i}
-\sum_{i=1}^\Tau\left(\tfrac{U_L^2}{\Tau M_{\text{ft}}}C_y+\tfrac{(m-1)^2}{\Tau}\right)\pb_{i}\right]\\&=(m-1)\sum_{i=1}^{m-1}
\pb_{3i+2}+\tfrac{mU_L}{M_{\text{ft}}}\sum_{i=1}^{m}
\pb_{3i}-\tfrac{1}{\Tau}\left(\tfrac{mU_L}{M_{\text{ft}}}+(m-1)^2\right)\sum_i\pb_i\\
&=\left(m-1-\tfrac{mU_L}{M_{\text{ft}}}\right)\sum_{i=1}^{m-1}
\pb_{3i+2}+\left(\tfrac{mU_L}{M_{\text{ft}}}\left(1-\Tau^{-1}\right)-\tfrac{(m-1)^2}{\Tau}\right)\pb\\
&\approx (m-1)\sum_{i=1}^{m-1}
\pb_{3i+2}-\frac{(m-1)^2}{\Tau}\pb.
\end{align*}
Combining these terms, we have that
\begin{equation*}
\E[\nabla_{\W_{KQ}^{\mathrm{rel}}}\hat{\mathcal L}]
\approx
-(m-1)^2\alpha_2\beta\left(\frac{2}{3}\frac{1}{LU_L}
\sum_{u}
\emb{u}+\frac{1}{m-1}\sum_{i=1}^{m-1}
\pb_{3i+2}-\frac{1}{\Tau}\pb-\frac{1}{3}\frac{1}{M}\sum_s\emb{s}\right)\emb{\sep}^\top.
\end{equation*}
Since $\alpha_3=\eta_{KQ}^{\text{rel}}(m-1)^2\alpha_2$, the updated weight matrix is 
\begin{align*}
\W_{KQ}^{\text{rel}}\approx\beta\left(\sum_\ell\emb{r_\ell}+\alpha_3\left(\frac{2}{3}\frac{1}{LU_L}
\sum_{u}
\emb{u}+\frac{1}{m-1}\sum_{i=1}^{m-1}
\pb_{3i+2}-\frac{1}{\Tau}\pb-\frac{1}{3}\frac{1}{M}\sum_s\emb{s}\right)\right)\emb{\sep}^\top.
\end{align*}
Using these updated weights, following similar steps as in the proof of \cref{prop:icl_}, we can show that when $\beta\to\infty$, the model attains perfect accuracy on \ICL sequences with seen or held-out subjects.
\end{proof}

\subsubsection{Helper Lemmas}
\begin{lemma}\label{lem:logit-bound}
    Under the same conditions as \cref{th:ft_dynamics}, assuming $M\geq CU_L$, where $C>e$ is a constant, and using the weights
    \begin{align*}
\W_{KQ}^{\text{subj}}=\beta\Big(\sum_j\emb{\obj_j}\Big)\emb{\sep}^\top,&
\qquad
\W_{OV}^{\text{subj}}=-\sum_j\Big(\sum_{j'\neq j,\ell}\emb{\atr_{j'}^\ell}\Big)\emb{\obj_j}^\top,\\
\W_{KQ}^{\text{rel}}=\beta\Big(\sum_{\ell}\emb{\rel_\ell}+\pe\Big)\emb{\sep}^\top,&\qquad\W_{OV}^{\text{rel}}=\sum_{\ell}\Big(\sum_{j}\emb{\uatr_{j}^\ell}\Big)\emb{\rel_{\ell}}^\top,
\end{align*}
    the prediction probabilities $\pi_v$ are bounded as:
    \begin{align*} 
        \pi_v\leq \frac{1}{V+(Ce^{-1}-1)LU_L}.
    \end{align*}
\end{lemma}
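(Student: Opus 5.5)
The plan is to compute the logits $z_v:=\emb{v}^\top f(\X)$ exactly at the initialization, on an \ICL sequence of the form \cref{eq:icl_th} at the last position, in the limit $\beta\to\infty$. Then I bound $\pi_v$ by writing it as $\pi_v=\big(\sum_{w\in\mathcal V}e^{z_w-z_v}\big)^{-1}$ and lower bounding the sum. Since $\X$ has one-hot embeddings with orthogonal token and position subspaces, each head's attention pattern is read off directly from $(\xb_i+\pb_i)^\top\W_{KQ}^k\emb{\sep}$.

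\textbf{Step 1 (relation head is silent).} In $\W_{KQ}^{\text{rel}}$ the term $\sum_\ell\emb{\rel_\ell}$ never fires, because \ICL sequences contain no relation tokens. The term $\pe$ covers the last $S+2\ge \Tau$ relative positions, so every position gets the same score $\beta$. Hence $g_{\text{rel}}(\X)=\tfrac1\Tau\sum_i\xb_i$. Since $\W_{OV}^{\text{rel}}$ only reads relation-token directions, $f_{\text{rel}}(\X)=0$.

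\textbf{Step 2 (subject head).} $\W_{KQ}^{\text{subj}}$ gives score $\beta$ exactly on the $m=\Nft+1$ subject positions, so attention is uniform $1/m$ over them. This gives
\[
f_{\text{subj}}(\X)=-\frac1m\sum_{i=1}^m\sum_{j'\neq j_i}\sum_\ell\emb{\atr_{j'}^\ell}.
\]
Thus $z_w=0$ for every non-attribute token $w$ (subjects, grammar, relations, $\sep$). For an attribute $u\in\uatrb_\ell$, we have $z_u=-\frac1m\sum_i\#\{j'\neq j_i:\atr_{j'}^\ell=u\}$. Under the balanced assignment used in the theorem, each value is shared by $M/U_L$ subjects, so this count is at least $M/U_L-1\ge C-1$. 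Hence $z_u\le -(C-1)$, and in particular every logit satisfies $z_v\le 0$.

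\textbf{Step 3 (bound).} For an attribute $u$, the sum $\sum_w e^{z_w-z_u}$ is at least $(V-LU_L)e^{C-1}$ from the non-attribute tokens, plus at least $LU_L\cdot e^{-(z_{\max}-z_u)}$ from the attributes. Using $e^{C-1}=e^{C}/e\ge C/e+1$-type elementary inequalities, together with $V-LU_L\ge M\ge CU_L$, I would show this is at least $V+(Ce^{-1}-1)LU_L$. That yields the claimed bound on $\pi_u$ for attribute tokens, which is the only case used downstream: the bounds on $|r_i-\hat r_i|$ in the proof of \cref{th:ft_dynamics} involve only $\sum_{u\in\atrb_{J(i)}}\pi_u$ and $\max_u\pi_u$.

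\textbf{Main obstacle.} The main obstacle is the bookkeeping in Step 3. A non-attribute token has logit $0$, which is the maximum, so a bound of this form can only hold as stated for attribute tokens. I would therefore first check carefully which tokens the lemma is actually applied to. Next, I would get the counting of shared attributes right, since $z_u$ depends on multiplicities and not merely on whether $u$ belongs to subject $s_{j_i}$. Finally, I would settle the constant so that $(V-LU_L)e^{C-1}$ dominates $V+(C/e-1)LU_L$. If the exponential slack is too generous or too tight, I would instead state the bound as $\pi_u\le e^{-(C-1)}/(V-LU_L)$, which suffices for all subsequent uses.
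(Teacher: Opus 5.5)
Your main diagnosis is correct, and it exposes a real error in the paper's own argument. At this initialization the relation head contributes nothing on ICL sequences, and the subject head writes only nonpositive attribute directions. So every logit is $\le 0$ and every non-attribute logit equals $0$. Hence $Z\le V$, and $\pi_v=1/Z\ge 1/V>\frac{1}{V+(Ce^{-1}-1)LU_L}$ for every non-attribute $v$; the lemma is false as stated.

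The paper's proof reaches its constant only through $Z\ge e^{-1}L\sum_{j=1}^{M}e^{k_j/m}+(V-LU_L)\ge e^{-1}LM+V-LU_L$. This sums over all $LM$ subject--type pairs as if they were distinct tokens, but only $LU_L$ attribute tokens exist. Indeed, under $M\ge CU_L$ with $C>e$, this lower bound exceeds $V\ge Z$.

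What does hold, under any reading of $\W_{OV}^{\text{subj}}$, is $\pi_v\le 1/Z\le 1/(V-LU_L)\le 1/M$ for every $v$, since each of the $V-LU_L$ non-attribute tokens contributes $1$ to $Z$. This is all \cref{th:ft_dynamics} needs. The estimate $\sum_{u\in\mathcal{A}_{J(i)}}\pi_u\le L\max_u\pi_u$ becomes $L/M\ll 1/m$ by $M\gg mL$. For the first-iteration $\W_{OV}^{\text{rel}}$ step, $\norm{\sum_v\pi_v\emb{v}}=(\sum_v\pi_v^2)^{1/2}\le(\max_v\pi_v)^{1/2}\le M^{-1/2}$. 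That last step involves non-attribute tokens (the paper bounds it by $\sum_v\pi_v$, which equals $1$), so your remark that only attribute tokens are used downstream is not accurate.

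For attribute tokens your route departs from the paper's and has two gaps.

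\emph{The multiplicity reading.} The paper's attribute logits lie in $[-1,0]$ because it counts each non-associated attribute once, matching the main-text formula $f_{\text{subj}}(\X)=-(\sum_{u}\emb{u}-\sum_\ell\emb{\atr_{j_\star}^\ell})$. You instead count multiplicities to get $z_u\le-(M/U_L-1)$. That literal reading is incompatible with the initialization being a working pretraining construction. In \cref{prop:pt} the target attribute would get logit $1-(M/U_L-1)=2-M/U_L<0$, below every non-attribute token, precisely in the regime $M\ge CU_L>2U_L$ assumed here. Under the intended reading your estimate $z_u\le-(C-1)$ is unavailable. Attribute tokens can then violate the bound as well: an attribute shared by all context subjects has logit $0$, so $\pi_u=1/Z\ge 1/V$.

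\emph{Step 3.} Even granting your reading, Step 3 does not close. Using only $V-LU_L\ge CU_L$, the required inequality $(V-LU_L)e^{C-1}+LU_L/e\ge V+(Ce^{-1}-1)LU_L$ amounts to $C(e^{C-1}-1)\ge (C-1)L/e$. This fails for large $L$ at fixed $C$; your elementary bound $e^{C-1}\ge C/e+1$ misses the factor $L$.

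Your fallback $e^{-(C-1)}/(V-LU_L)$ is correct under the multiplicity reading. However, it depends on that reading and says nothing about the non-attribute tokens that the $\W_{OV}^{\text{rel}}$ step needs.
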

\begin{proof}

Let $m:=\Nft+1$ denote the number of subject positions in the \ICL sequence, and let $k_j$ be the multiplicity of subject $j$ among these. 

When $\beta\to\infty$, the subject head attends uniformly over the $m$ subject positions, hence
\[
g_{\text{subj}}(\X)=\frac1m\sum_{i=1}^{m}\emb{\obj_{j_i}}.
\]
Therefore
\[
f_{\text{subj}}(\X)
=
\W_{OV}^{\text{subj}}g_{\text{subj}}(\X)
=
-\frac1m\sum_{i=1}^{m}\sum_{j'\neq j_i,\ell}\emb{\atr_{j'}^\ell}.
\]
Moreover, since $\emb{v}^\top \W_{OV}^{\text{rel}}=0$ for all attributes other than the relation tokens, and since relation tokens don't appear in the \ICL sequence, the relation head does not contribute to the logits. Therefore,
\begin{align}
\emb{\atr_j^\ell}^\top f(\X)=\emb{\atr_j^\ell}^\top f_{\text{subj}}(\X)=
-\frac1m\sum_{i=1}^{m}\ind{j\neq j_i}
=
-\Big(1-\frac{k_j}{m}\Big)
=: -c_j.
\label{eq:atomic_attr_logit_proof}
\end{align}

Also, note that for any $v\notin\uatrb$, the logit is $0$. Next, let $Z:=\sum_{u\in\mathcal V}\exp(\emb u^\top f(\X))$. Using \cref{eq:atomic_attr_logit_proof}, we have
\begin{align*}
Z
&=
\sum_{j}\sum_{\ell=1}^{L}\exp(\emb{\atr_j^\ell}^\top f(\X))
+(V-LU_L)=
\sum_j\sum_{\ell=1}^{L}e^{-c_j}
+(V-LU_L)=
L\sum_j e^{-c_j}
+(V-LU_L)\\&=
Le^{-1}\sum_j e^{k_j/m}+(V-LU_L).
\end{align*}
Hence, for any attribute token $\atr_j^\ell$,
\begin{align*}
\pi_{\atr_j^\ell}
=
\frac{\exp(\emb{\atr_j^\ell}^\top f(\X))}{Z}
&\le
\frac{e^{-c_j}}{Le^{-1}\sum_{j'} e^{k_{j'}/m}+(V-LU_L)}
=
\frac{e^{k_j/m}}{L\sum_{j'} e^{k_{j'}/m}+e(V-LU_L)}.
\end{align*}
Since $\sum_{j'} e^{k_{j'}/m}\ge M$, and $k_j\le m$, we have
\[
\pi_{\atr_j^\ell}\le \frac{1}{V-LU_L+e^{-1}LM}.
\]
Similarly, for $v\notin\uatrb$, $\pi_v=Z^{-1}\leq (V-LU_L+e^{-1}LM)^{-1}$.
Using the assumption on $M$ finishes the proof.
\end{proof}

\begin{lemma}
\label{lem:logit-bound-new}
Under the same conditions as \cref{th:ft_dynamics}, assuming $\alpha_2\leq (\lambda_1+m\lambda_2)^{-1}$, when using the weights,
\begin{align*}
\W_{KQ}^{\text{subj}}&=\beta\Big(\sum_j\emb{\obj_j}+\alpha_1\emb{p_{-1}}\Big)\emb{\sep}^\top,
\qquad
\W_{OV}^{\text{subj}}=-\sum_j\Big(\sum_{j'\neq j,\ell}\emb{\atr_{j'}^\ell}\Big)\emb{\obj_j}^\top,\\\W_{KQ}^{\text{rel}}&=\beta\Big(\sum_{\ell=1}^L\emb{\rel_\ell}+\pe\Big)\emb{\sep}^\top,\\ 
\W_{OV}^{\text{rel}}
&=
\sum_{\ell}\Big(\sum_{j}\emb{\uatr_{j}^\ell}\Big)\emb{\rel_{\ell}}^\top+\alpha_2\left((m-1)\sum_{\ell=1}^L
\Big(\sum_{a=1}^{U_L}\emb{\uatr_a^\ell}\Big)\Big(\sum_{a=1}^{U_L}\emb{\uatr_a^\ell}\Big)^\top
+\frac{U_L^2}{M_{\text{ft}}}\sum_{j=1}^{M_{\text{ft}}}
\Big(\sum_{\ell=1}^L\emb{\atr_j^\ell}\Big)\emb{\obj_j}^\top\right),\\
\end{align*}
the prediction probabilities $\pi_v$ are bounded as:
\begin{align*}
    \pi_v\leq\frac{e}{V+LU_L(e^{-1}-1)}\leq \frac{e}{M}.
\end{align*}

\end{lemma}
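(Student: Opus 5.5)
The plan is to mirror the proof of \cref{lem:logit-bound}. I would compute the last-token logits $\emb{v}^\top f(\X)$ explicitly under the stated weights, head by head. The aim is two-sided bounds: every logit is at most $1$, every non-attribute logit is exactly $0$, and every attribute logit is at least $-1$. The bound then follows directly from the softmax formula $\pi_v=e^{\emb v^\top f}/Z$.

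\emph{Subject head.} Since $\W_{KQ}^{\text{subj}}=\beta(\sum_j\emb{\obj_j}+\alpha_1\emb{p_{-1}})\emb{\sep}^\top$ with $\alpha_1>0$, the query subject (at relative position $-1$) has score $\beta(1+\alpha_1)$. The other subject positions have score $\beta$ and all remaining positions have score $0$. Hence, as $\beta\to\infty$, $g_{\text{subj}}(\X)=\emb{\obj_{j_{m}}}$, the query subject. Then $f_{\text{subj}}(\X)=-\sum_{j'\neq j_m,\ell}\emb{\atr_{j'}^\ell}$. With the same attribute bookkeeping as in \cref{eq:atomic_attr_logit_proof}, this gives logit contributions in $[-1,0]$ on attribute tokens and $0$ on every other token. (If attribute values are shared this needs care; I would adopt the same convention as in \cref{lem:logit-bound}, indexing attribute logits per subject–type pair.)

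\emph{Relation head.} No relation token appears, and since $\Tau\le S+2$ every position lies in the support of $\pe$. So all scores equal $\beta$, and $g_{\text{rel}}(\X)=\frac1\Tau\sum_i\xb_i$. The first term of $\W_{OV}^{\text{rel}}$ annihilates this vector. The $\alpha_2$ terms give
\begin{equation*}
f_{\text{rel}}(\X)=\frac{\alpha_2}{\Tau}\Big((m-1)^2\ub_{\ell_\star}+\frac{U_L^2}{M_{\text{ft}}}\sum_{i=1}^m\sum_\ell\emb{\atr_{j_i}^\ell}\Big).
\end{equation*}
This is supported only on attribute tokens, with nonnegative coefficients. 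Each coefficient is at most $\alpha_2(\lambda_1+m\lambda_2)$, where $\lambda_1=(m-1)^2/\Tau$ and $\lambda_2=U_L^2/(\Tau M_{\text{ft}})$ (this is how I would read the constants in the hypothesis). By the assumption $\alpha_2\le(\lambda_1+m\lambda_2)^{-1}$, each attribute receives a relation-head logit in $[0,1]$.

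Combining the two heads: attribute logits lie in $[-1,1]$, and all $V-LU_L$ non-attribute logits are $0$. Hence $Z\ge (V-LU_L)+LU_Le^{-1}$ while every numerator is at most $e$, so
\begin{equation*}
\pi_v\le \frac{e}{V+LU_L(e^{-1}-1)}.
\end{equation*}
Finally, $V\ge M+LU_L$ gives $V+LU_L(e^{-1}-1)\ge M$, which yields $\pi_v\le e/M$.

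The main obstacle is the subject-head lower bound of $-1$ on attribute logits. With genuinely shared attribute values, a token could accumulate many negative contributions, which would shrink $Z$. I would handle this exactly as \cref{lem:logit-bound} does, and if needed appeal to the assumption $M\gg mL$ to argue that these contributions do not reduce $Z$ below the non-attribute mass $V-LU_L\ge M$. That weaker fact alone already suffices for the final bound $e/M$.
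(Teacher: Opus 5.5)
Your proposal is correct and follows essentially the same route as the paper's proof. The subject head concentrates on the query subject and contributes $-\ind{j\neq j_q}\in[-1,0]$ to attribute logits; the relation head attends uniformly and contributes a nonnegative amount at most $\alpha_2(\lambda_1+m\lambda_2)\le 1$, where your $\lambda_1=(m-1)^2/\Tau$ coincides with the paper's $(m-1)\Nft/\Tau$ since $\Nft=m-1$. As you do, the paper then bounds the numerator by $e$ and $Z$ from below by $LU_Le^{-1}+V-LU_L$. The paper also indexes attribute logits per subject--type pair and glosses over the shared-attribute caveat you raise, and your fallback is sound: since every logit is at most $1$ and $Z\ge V-LU_L\ge M$, you already get $\pi_v\le e/M$.
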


\begin{proof}
Let $m:=\Nft+1$ denote the number of subject positions in the \ICL sequence, and let $k_j$ be the multiplicity of subject $j$ among these positions (so $\sum_j k_j=m$), and define $\objb':=\{j:\,k_j>0\}$.

When $\beta\to\infty$, the relation head attends uniformly over all $\Tau$ positions, while the subject head attends to the query subject $\emb{\obj_{j_{\Nft+1}}}$. For any attribute token \(v=\atr_j^\ell\), define
\[
c_j(\X):=|\{\,i\in[\Nft+1]:\, j_i=j\,\}|,
\]
\textit{i.e.}, the number of times subject \(j\) appears among the subjects in the given sequence \(\X\). Then
\[
\emb{\atr_j^\ell}^{\top} f(\X)=\alpha_2\left((m-1)\frac{\Nft}{\Tau}\ind{\ell=\ell_\star}
+\frac{U_L^2}{M_{\mathrm{ft}}\Tau}c_j(\X)\right)
-\ind{j\neq j_q}.
\]

Let $Z:=\sum_{u\in\mathcal V}\exp(\emb u^\top f(\X))$ and let
\[\pi_v=\frac{\exp(\emb v^\top f(\X))}{Z}.\]

Let $\lambda_1:=(m-1)\Nft/\Tau$, $\lambda_2:=U_L^2/(M_{\text{ft}}\Tau)$. Then, we have 
\begin{align*}
    \pi_v\leq \frac{\exp(\alpha_2(\lambda_1+m\lambda_2))}{U_L(\exp(\alpha_2(\lambda_1+\lambda_2)-1)+(L-1)\exp(\alpha_2\lambda_2-1))+V-LU_L}\leq \frac{e}{V+LU_L(e^{-1}-1)},
\end{align*}
since $\alpha_2\leq (\lambda_1+m\lambda_2)^{-1}$.

\end{proof}

\subsection{Experimental Validation} 
\subsubsection{Relative Positional Encoding}
\begin{wrapfigure}[9]{r}{0.52\linewidth}
    \centering
    \vspace{-12mm}
    \includegraphics[width=0.75\linewidth]{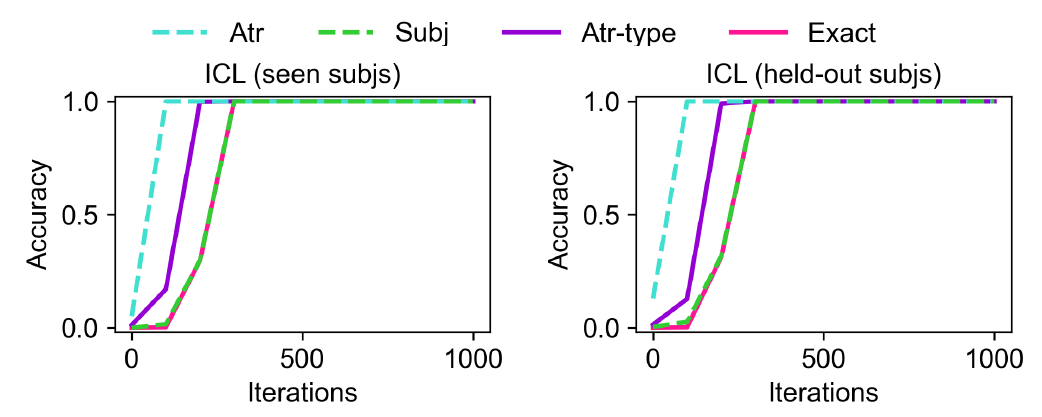}
    \caption{Finetuning the single-layer attention-only model with relative positional encoding pretrained on \Bio sequences using \ICL sequences with a subset of subjects enables generalization on \ICL sequences with held-out query subjects.}
    \label{fig:attnonly_icl_acc}
\end{wrapfigure}
In this section we present some additional experimental results for experimental validation in \cref{sec:attn_only}. Specifically, in \cref{fig:wov_pt,fig:wov_icl}, we visualize a sliced version of attention weights $\W_{OV}^\text{subj}$ (left) and $\W_{OV}^\text{rel}$ (right) after pretraining and finetuning, respectively. Specifically, rowwise or columnwise, the first $8$ entries correspond to subjects, the next $8$ to attribute type $\ell=1$ tokens, the next $8$ to $\ell=2$, the next $5$ to grammar and separator tokens, and the final $8$ to the relation tokens. From the two figures, we observe that the subject heads encode information about subject-attribute associations by suppressing attributes not associated with the subject, and don't change much after finetuning. On the other hand, the relation head initially encodes information about relation type-attribute associations, but after finetuning, it changes significantly, encoding attribute-attribute associations for the same attribute type. This is consistent with our constructions in \cref{sec:attn_only}.

\begin{figure}[h!]
\centering

\begin{minipage}{0.48\linewidth}
    \centering
    \includegraphics[width=0.48\linewidth]{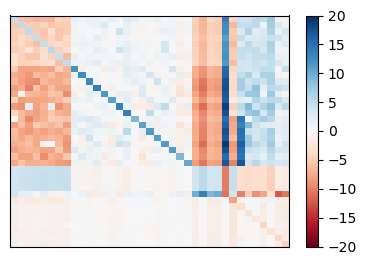}\hspace{1mm}%
    \includegraphics[width=0.48\linewidth]{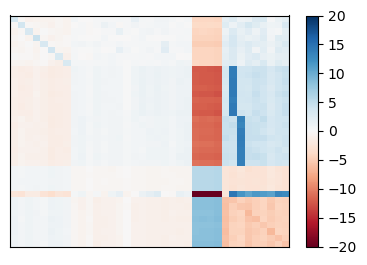}
    \caption{Visualization of the weight matrices $\W_{OV}^{\text{subj}}$ (left) and
    $\W_{OV}^{\text{rel}}$ (right) after pretraining the single-layer attention-only model on \Bio sequences.}
    \label{fig:wov_pt}
\end{minipage}
\hfill
\begin{minipage}{0.48\linewidth}
    \centering
    \includegraphics[width=0.48\linewidth]{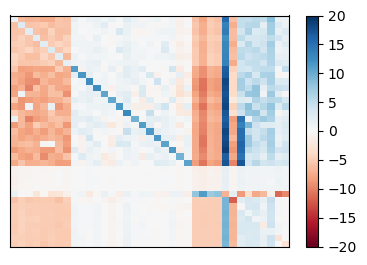}\hspace{1mm}%
    \includegraphics[width=0.48\linewidth]{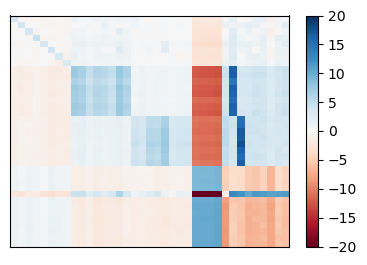}
    \caption{Visualization of the weight matrices $\W_{OV}^{\text{subj}}$ (left) and
    $\W_{OV}^{\text{rel}}$ (right) after finetuning the single-layer attention-only model on \ICL sequences.}
    \label{fig:wov_icl}
\end{minipage}

\vspace{-0.1in}
\end{figure}

\subsubsection{Absolute Positional Encoding}

In this section, we present additional experimental evidence to corroborate our theoretical constructions, using a 1-layer 3-head attention-only model with absolute positional encodings (see App.~\ref{app:expts} for details).

\begin{figure}[h!]
\centering
\begin{minipage}{0.51\linewidth}
    \includegraphics[width=\linewidth]{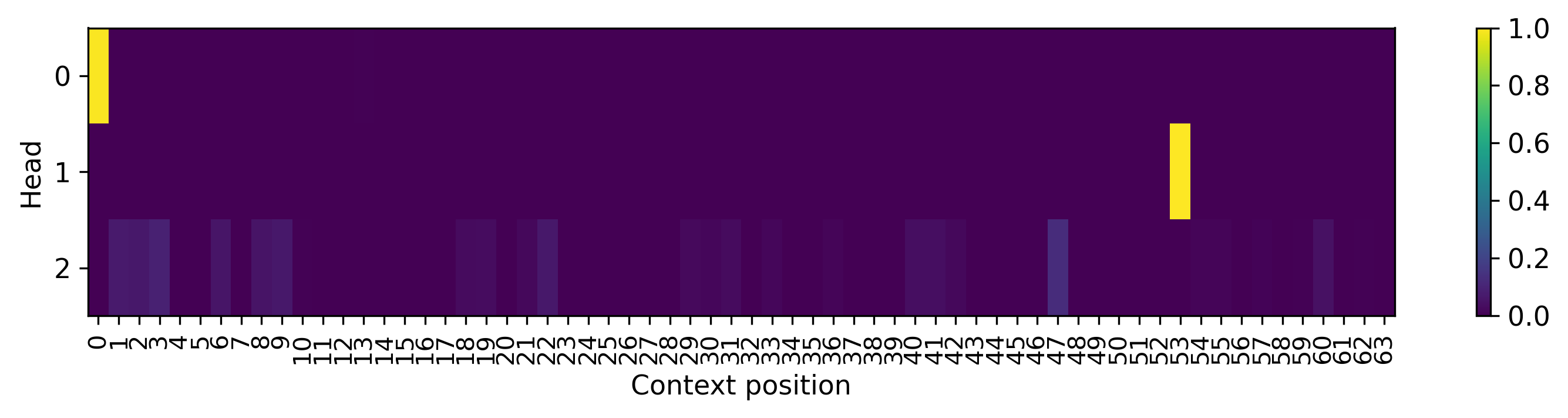}
    \caption{Attention scores for each head across a \Bio sequence at the end of pretraining. Head $0$ attends to the first subject, while head $1$ attends to the most recent relation token, as predicted by \cref{prop:pt}.}\vspace{-0.1in}
    \label{fig:attn_pt_a}
\end{minipage}
\hfill
\begin{minipage}{0.47\linewidth}
    \centering
\includegraphics[width=0.8\linewidth]{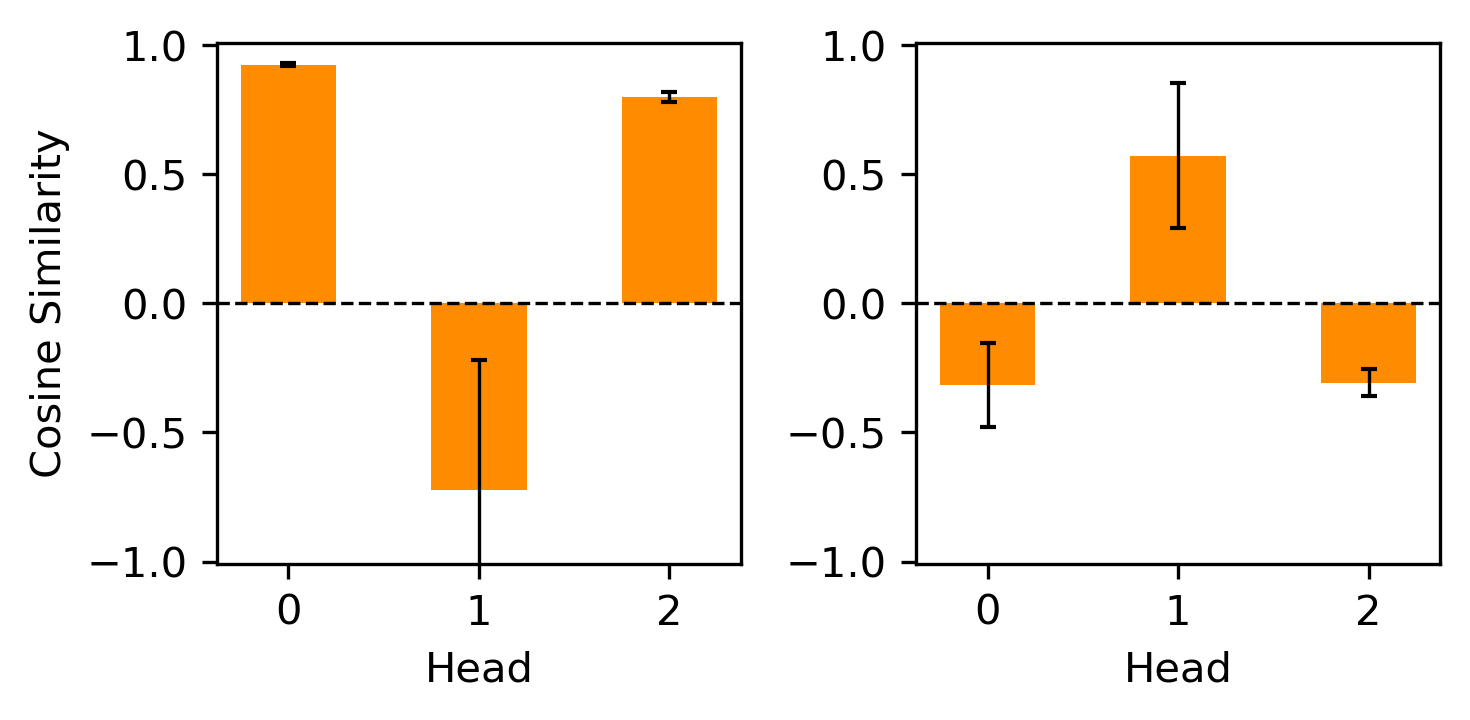}
    \caption{Cosine similarity between the actual outputs of each head and the outputs $f_{\text{subj}}(\X)$ (left) and $f_{\text{rel}}(\X)$ (right) from our construction in \cref{prop:pt}. Head $0$ output closely matches $f_{\text{subj}}(\X)$, while head~$1$ matches $f_{\text{rel}}(\X)$.} \vspace{-0.1in}
    \label{fig:cos_pt_a}
\end{minipage}
\end{figure}

\vspace{-1mm}
\textit{Validation of Pretraining Mechanism.} We first train the model using \Bio sequences (\cref{eq:const_pt_x}) with the next-token prediction objective. In \cref{fig:attn_pt_a}, we visualize the attention scores for each head across a \Bio sequence, and find that the heads specialize into distinct roles consistent with \cref{prop:pt}. We find that head $0$ attention score concentrates on the first subject token, and it outputs $\emb{\obj_{\bar{j}}}$, matching the role of  $g_{\text{subj}}(\X)$ in our construction, while head~$1$ attends to the most recent relation token, \textit{i.e.}, it outputs $\emb{\rel_{\ell_{\Ntr}}}$, consistent with $g_{\text{rel}}(\X)$. Further, in \cref{fig:cos_pt_a}, we compute the cosine similarity between the head outputs and theoretical outputs of the subject and relation heads specified by our construction. Specifically, we report the cosine similarity with $f_{\text{subj}}(\X)$, \textit{i.e.}, negative sum of all attributes not associated with the subject $\obj_{\bar{j}}$ (left subplot) and $f_{\text{rel}}(\X)$, \textit{i.e.}, sum of all attributes of type $\ell_{\Ntr}$ (right), averaged across several sequences. We find that head $0$ output closely matches $f_{\text{subj}}(\X)$, while head $1$ matches $f_{\text{rel}}(\X)$. We designate these heads as subject and relation heads, respectively. Together, these results present experimental validation of our construction for \Bio sequences.

\begin{wrapfigure}[12]{r}{0.51\linewidth}
    \centering
    \vspace{-1mm}
    \includegraphics[width=0.9\linewidth]{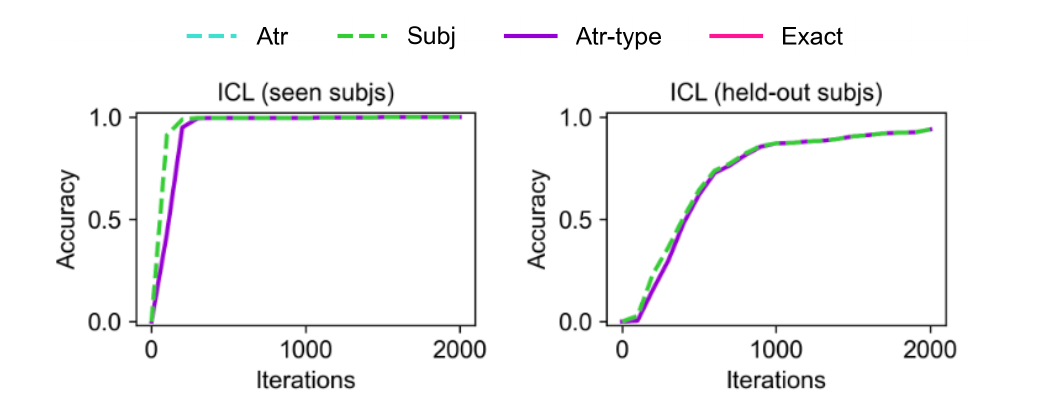}
    \caption{Finetuning the single-layer attention-only model with absolute positional encoding pretrained on \Bio sequences using \ICL sequences with a subset of subjects enables generalization on \ICL sequences with held-out query subjects.}
    \label{fig:attnonly_icl_acc_a}
\end{wrapfigure}
\textit{Validation of ICL Mechanism.} We finetune the model on \ICL sequences (\cref{eq:const_ft_x}) with the last-token prediction objective, using $50\%$ of the total subjects (\cref{fig:attnonly_icl_acc_a} shows that the model generalizes on held-out subjects). Visualizing the attention scores on an \ICL sequence in \cref{fig:attn_icl_a} reveals that the model repurposes its heads for the new task, consistent with Prop. \ref{prop:icl}. We find that the head $0$ (subject head) attends to the most recent/query subject token, while head $1$ (relation head) attends to the attribute tokens in the context, \textit{i.e.}, it outputs a combination of attributes of type $\ell_\star$. Further, \cref{fig:cos_icl_a} confirms that the outputs of these heads closely match the theoretical outputs specified by our construction for \ICL sequences: head $0$ (subject head) outputs negative sum of all attributes not associated with the subject $\obj_{j_{\Nft+1}}$ (left), while head $1$ (relation head) outputs the sum of all attributes of type $\ell_\star$ (right). 

\begin{figure}[h!]
\centering
\begin{minipage}{0.51\linewidth}
    \centering
    \includegraphics[width=0.9\linewidth]{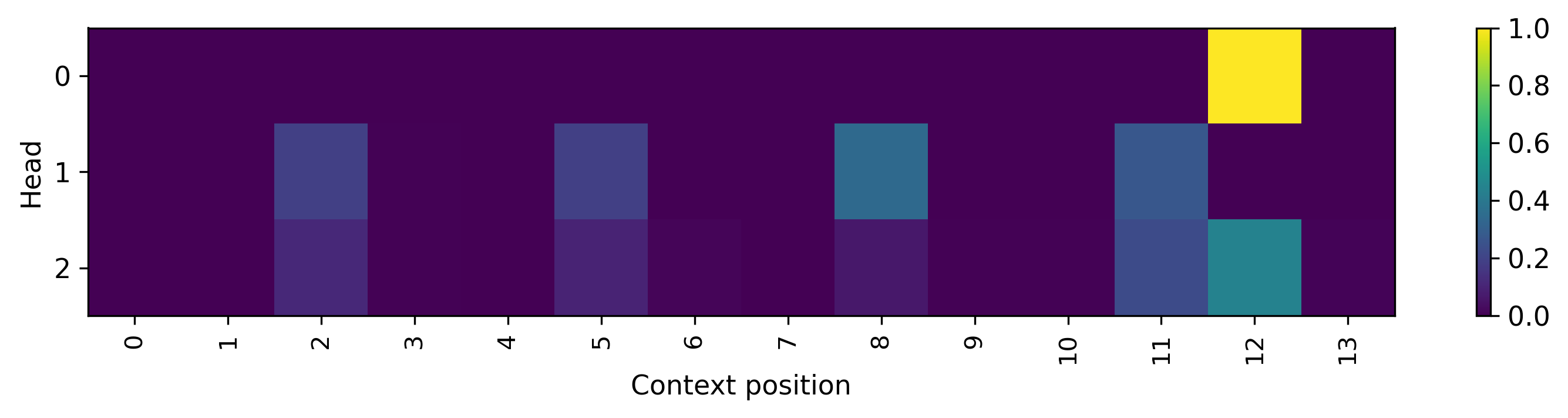}
    \caption{Attention scores for each head across an \ICL sequence at the end of finetuning. Head $0$ attends to the most recent subject, while head $1$ attends to the attribute tokens in the sequence, as predicted by \cref{prop:icl}.}\vspace{-0.1in}
    \label{fig:attn_icl_a}
\end{minipage}
\hfill
\begin{minipage}{0.47\linewidth}
    \centering
    \includegraphics[width=0.8\linewidth]{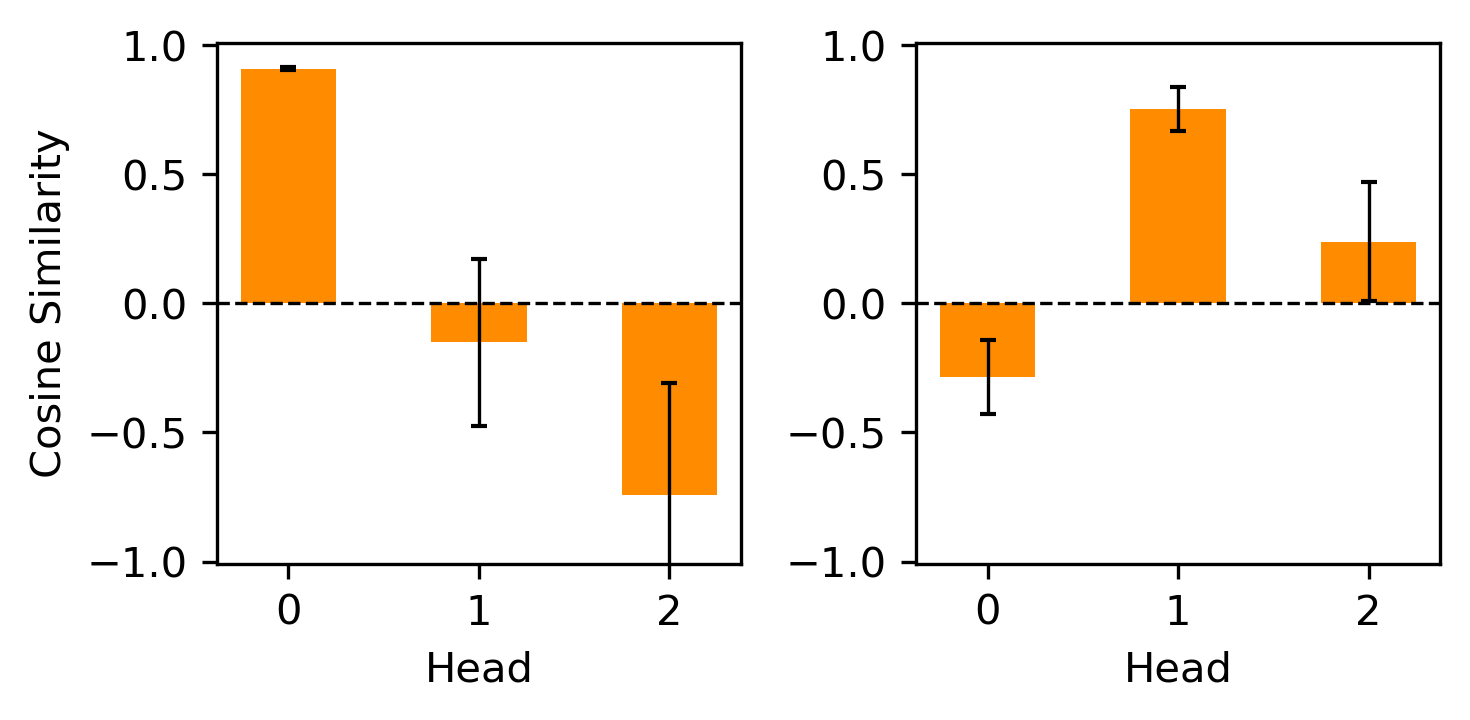}
    \caption{Cosine similarity between the actual outputs of each head and the outputs $f_{\text{subj}}(\X)$ (left) and $f_{\text{rel}}(\X)$ (right) from our construction in \cref{prop:icl}. Head $0$ output closely matches $f_{\text{subj}}(\X)$, while head~$1$ matches $f_{\text{rel}}(\X)$.}\vspace{-0.15in}
    \label{fig:cos_icl_a}
\end{minipage}
\end{figure}

%% file: main.bbl
\begin{thebibliography}{46}
\providecommand{\natexlab}[1]{#1}
\providecommand{\url}[1]{\texttt{#1}}
\expandafter\ifx\csname urlstyle\endcsname\relax
  \providecommand{\doi}[1]{doi: #1}\else
  \providecommand{\doi}{doi: \begingroup \urlstyle{rm}\Url}\fi

\bibitem[Ahn et~al.(2023)Ahn, Cheng, Daneshmand, and Sra]{ahn2023transformers}
Ahn, K., Cheng, X., Daneshmand, H., and Sra, S.
\newblock Transformers learn to implement preconditioned gradient descent for in-context learning.
\newblock In \emph{Thirty-seventh Conference on Neural Information Processing Systems}, 2023.

\bibitem[Aky{\"u}rek et~al.(2023)Aky{\"u}rek, Schuurmans, Andreas, Ma, and Zhou]{akyurek_icl}
Aky{\"u}rek, E., Schuurmans, D., Andreas, J., Ma, T., and Zhou, D.
\newblock What learning algorithm is in-context learning? {I}nvestigations with linear models.
\newblock In \emph{Int. Conference on Learning Representations}, 2023.

\bibitem[Allen-Zhu \& Li(2023)Allen-Zhu and Li]{allen2023physics}
Allen-Zhu, Z. and Li, Y.
\newblock Physics of language models: Part 3.1, knowledge storage and extraction.
\newblock \emph{arXiv preprint arXiv:2309.14316}, 2023.

\bibitem[Bai et~al.(2023)Bai, Chen, Wang, Xiong, and Mei]{bai2023transformers}
Bai, Y., Chen, F., Wang, H., Xiong, C., and Mei, S.
\newblock Transformers as statisticians: Provable in-context learning with in-context algorithm selection.
\newblock In \emph{Thirty-seventh Conference on Neural Information Processing Systems}, 2023.

\bibitem[Behnia et~al.(2025)Behnia, Deora, and Thrampoulidis]{behnia2025factsstatsimpactspretraining}
Behnia, T., Deora, P., and Thrampoulidis, C.
\newblock Facts in stats: Impacts of pretraining diversity on language model generalization.
\newblock \emph{arXiv preprint arXiv:2510.16096}, 2025.

\bibitem[Bhattamishra et~al.(2024)Bhattamishra, Patel, Blunsom, and Kanade]{bhattamishra2024understanding}
Bhattamishra, S., Patel, A., Blunsom, P., and Kanade, V.
\newblock Understanding in-context learning in transformers and {LLM}s by learning to learn discrete functions.
\newblock In \emph{The Twelfth International Conference on Learning Representations}, 2024.

\bibitem[Bietti et~al.(2023)Bietti, Cabannes, Bouchacourt, Jegou, and Bottou]{bietti2023birth}
Bietti, A., Cabannes, V., Bouchacourt, D., Jegou, H., and Bottou, L.
\newblock Birth of a transformer: A memory viewpoint.
\newblock \emph{Advances in Neural Information Processing Systems}, 36:\penalty0 1560--1588, 2023.

\bibitem[Brown et~al.(2020)Brown, Mann, Ryder, Subbiah, Kaplan, Dhariwal, Neelakantan, Shyam, Sastry, Askell, Agarwal, Herbert-Voss, Krueger, Henighan, Child, Ramesh, Ziegler, Wu, Winter, Hesse, Chen, Sigler, Litwin, Gray, Chess, Clark, Berner, McCandlish, Radford, Sutskever, and Amodei]{brown2020language}
Brown, T., Mann, B., Ryder, N., Subbiah, M., Kaplan, J.~D., Dhariwal, P., Neelakantan, A., Shyam, P., Sastry, G., Askell, A., Agarwal, S., Herbert-Voss, A., Krueger, G., Henighan, T., Child, R., Ramesh, A., Ziegler, D., Wu, J., Winter, C., Hesse, C., Chen, M., Sigler, E., Litwin, M., Gray, S., Chess, B., Clark, J., Berner, C., McCandlish, S., Radford, A., Sutskever, I., and Amodei, D.
\newblock Language models are few-shot learners.
\newblock In Larochelle, H., Ranzato, M., Hadsell, R., Balcan, M., and Lin, H. (eds.), \emph{Advances in Neural Information Processing Systems}, volume~33, pp.\  1877--1901. Curran Associates, Inc., 2020.

\bibitem[Carroll et~al.(2025)Carroll, Hoogland, Farrugia-Roberts, and Murfet]{carroll2025dynamicstransientstructureincontext}
Carroll, L., Hoogland, J., Farrugia-Roberts, M., and Murfet, D.
\newblock Dynamics of transient structure in in-context linear regression transformers.
\newblock \emph{arXiv preprint arXiv:2501.17745}, 2025.

\bibitem[Chen et~al.(2025)Chen, Bruna, and Bietti]{chendistributional}
Chen, L., Bruna, J., and Bietti, A.
\newblock Distributional associations vs in-context reasoning: A study of feed-forward and attention layers.
\newblock In \emph{The Thirteenth International Conference on Learning Representations}, 2025.

\bibitem[Chen et~al.(2024{\natexlab{a}})Chen, Sheen, Wang, and Yang]{chen2024trainingdynamicsmultiheadsoftmax}
Chen, S., Sheen, H., Wang, T., and Yang, Z.
\newblock Training dynamics of multi-head softmax attention for in-context learning: Emergence, convergence, and optimality.
\newblock \emph{arXiv preprint arXiv:2402.19442}, 2024{\natexlab{a}}.

\bibitem[Chen et~al.(2024{\natexlab{b}})Chen, Sheen, Wang, and Yang]{chen2024unveiling}
Chen, S., Sheen, H., Wang, T., and Yang, Z.
\newblock Unveiling induction heads: Provable training dynamics and feature learning in transformers.
\newblock \emph{arXiv preprint arXiv:2409.10559}, 2024{\natexlab{b}}.

\bibitem[D'Angelo et~al.(2025)D'Angelo, Croce, and Flammarion]{dangelo2025selective}
D'Angelo, F., Croce, F., and Flammarion, N.
\newblock Selective induction heads: How transformers select causal structures in context.
\newblock In \emph{The Thirteenth International Conference on Learning Representations}, 2025.

\bibitem[Deora et~al.(2025)Deora, Vasudeva, Behnia, and Thrampoulidis]{deora2025incontext}
Deora, P., Vasudeva, B., Behnia, T., and Thrampoulidis, C.
\newblock In-context occam{\textquoteright}s razor: How transformers prefer simpler hypotheses on the fly.
\newblock In \emph{Second Conference on Language Modeling}, 2025.

\bibitem[Dong et~al.(2026)Dong, Jiang, Zhu, and Ning]{dong2026understanding}
Dong, Y., Jiang, J., Zhu, Z., and Ning, X.
\newblock Understanding task vectors in in-context learning: Emergence, functionality, and limitations.
\newblock In \emph{The Fourteenth International Conference on Learning Representations}, 2026.
\newblock URL \url{https://openreview.net/forum?id=CLBVilFk7N}.

\bibitem[Edelman et~al.(2024)Edelman, Tsilivis, Edelman, Malach, and Goel]{statistical_induction_heads}
Edelman, E., Tsilivis, N., Edelman, B.~L., Malach, E., and Goel, S.
\newblock The evolution of statistical induction heads: In-context learning markov chains.
\newblock In \emph{The Thirty-eighth Annual Conference on Neural Information Processing Systems}, 2024.

\bibitem[Fu et~al.(2024)Fu, Chen, Jia, and Sharan]{fu2024transformers}
Fu, D., Chen, T.-Q., Jia, R., and Sharan, V.
\newblock Transformers learn to achieve second-order convergence rates for in-context linear regression.
\newblock \emph{Advances in Neural Information Processing Systems}, 37:\penalty0 98675--98716, 2024.

\bibitem[Garg et~al.(2022)Garg, Tsipras, Liang, and Valiant]{garg_icl}
Garg, S., Tsipras, D., Liang, P., and Valiant, G.
\newblock What can transformers learn in-context? a case study of simple function classes.
\newblock In Oh, A.~H., Agarwal, A., Belgrave, D., and Cho, K. (eds.), \emph{Advances in Neural Information Processing Systems}, 2022.

\bibitem[Han et~al.(2025)Han, Song, Gore, and Agrawal]{han2025emergence}
Han, S., Song, J., Gore, J., and Agrawal, P.
\newblock Emergence and effectiveness of task vectors in in-context learning: An encoder decoder perspective.
\newblock In \emph{Forty-second International Conference on Machine Learning}, 2025.

\bibitem[Hendel et~al.(2023)Hendel, Geva, and Globerson]{hendel2023incontextlearningcreatestask}
Hendel, R., Geva, M., and Globerson, A.
\newblock In-context learning creates task vectors.
\newblock \emph{arXiv preprint arXiv:2310.15916}, 2023.

\bibitem[Hong et~al.(2026)Hong, Vasudeva, Sharan, Rashtchian, Raghavan, and Panigrahy]{hong2026latent}
Hong, G.~Z., Vasudeva, B., Sharan, V., Rashtchian, C., Raghavan, P., and Panigrahy, R.
\newblock Latent concept disentanglement in transformer-based language models.
\newblock In \emph{The Fourteenth International Conference on Learning Representations}, 2026.

\bibitem[Karpathy()]{mingpt_karpathy}
Karpathy, A.
\newblock mingpt.
\newblock \url{https://github.com/karpathy/minGPT?tab=readme-ov-file} [Accessed: March 4, 2024].

\bibitem[Kawata et~al.(2025)Kawata, Song, Bietti, Nishikawa, Suzuki, Vaiter, and Wu]{kawata2025from}
Kawata, R., Song, Y., Bietti, A., Nishikawa, N., Suzuki, T., Vaiter, S., and Wu, D.
\newblock From shortcut to induction head: How data diversity shapes algorithm selection in transformers.
\newblock In \emph{The Thirty-ninth Annual Conference on Neural Information Processing Systems}, 2025.

\bibitem[Lin \& Lee(2024)Lin and Lee]{dual-operating-modes}
Lin, Z. and Lee, K.
\newblock Dual operating modes of in-context learning.
\newblock In \emph{Proceedings of the 41st International Conference on Machine Learning}, ICML'24. JMLR.org, 2024.

\bibitem[Liu et~al.(2024)Liu, Ye, Xing, and Zou]{liu_incontextvectors}
Liu, S., Ye, H., Xing, L., and Zou, J.
\newblock In-context vectors: making in context learning more effective and controllable through latent space steering.
\newblock In \emph{Proceedings of the 41st International Conference on Machine Learning}, ICML'24. JMLR.org, 2024.

\bibitem[Loshchilov \& Hutter(2019)Loshchilov and Hutter]{adamW}
Loshchilov, I. and Hutter, F.
\newblock Decoupled weight decay regularization.
\newblock \emph{arXiv preprint arXiv:1711.05101}, 2019.

\bibitem[Nichani et~al.(2024{\natexlab{a}})Nichani, Damian, and Lee]{nichani2024how}
Nichani, E., Damian, A., and Lee, J.~D.
\newblock How transformers learn causal structure with gradient descent.
\newblock In \emph{Forty-first International Conference on Machine Learning}, 2024{\natexlab{a}}.

\bibitem[Nichani et~al.(2024{\natexlab{b}})Nichani, Lee, and Bietti]{nichani2024understanding}
Nichani, E., Lee, J.~D., and Bietti, A.
\newblock Understanding factual recall in transformers via associative memories.
\newblock \emph{arXiv preprint arXiv:2412.06538}, 2024{\natexlab{b}}.

\bibitem[Olsson et~al.(2022)Olsson, Elhage, Nanda, Joseph, DasSarma, Henighan, Mann, Askell, Bai, Chen, Conerly, Drain, Ganguli, Hatfield-Dodds, Hernandez, Johnston, Jones, Kernion, Lovitt, Ndousse, Amodei, Brown, Clark, Kaplan, McCandlish, and Olah]{olsson2022context}
Olsson, C., Elhage, N., Nanda, N., Joseph, N., DasSarma, N., Henighan, T., Mann, B., Askell, A., Bai, Y., Chen, A., Conerly, T., Drain, D., Ganguli, D., Hatfield-Dodds, Z., Hernandez, D., Johnston, S., Jones, A., Kernion, J., Lovitt, L., Ndousse, K., Amodei, D., Brown, T., Clark, J., Kaplan, J., McCandlish, S., and Olah, C.
\newblock In-context learning and induction heads.
\newblock \emph{Transformer Circuits Thread}, 2022.
\newblock https://transformer-circuits.pub/2022/in-context-learning-and-induction-heads/index.html.

\bibitem[Pan et~al.(2023)Pan, Gao, Chen, and Chen]{pan-etal-2023-context}
Pan, J., Gao, T., Chen, H., and Chen, D.
\newblock What in-context learning {\textquotedblleft}learns{\textquotedblright} in-context: Disentangling task recognition and task learning.
\newblock In Rogers, A., Boyd-Graber, J., and Okazaki, N. (eds.), \emph{Findings of the Association for Computational Linguistics: ACL 2023}, pp.\  8298--8319, Toronto, Canada, July 2023. Association for Computational Linguistics.
\newblock \doi{10.18653/v1/2023.findings-acl.527}.

\bibitem[Park et~al.(2025)Park, Lubana, and Tanaka]{algorithmic_phases}
Park, C.~F., Lubana, E.~S., and Tanaka, H.
\newblock Algorithmic phases of in-context learning.
\newblock In \emph{The Thirteenth International Conference on Learning Representations}, 2025.

\bibitem[Rajaraman et~al.(2024)Rajaraman, Bondaschi, Makkuva, Ramchandran, and Gastpar]{rajaraman2024transformers}
Rajaraman, N., Bondaschi, M., Makkuva, A.~V., Ramchandran, K., and Gastpar, M.
\newblock Transformers on markov data: Constant depth suffices.
\newblock In \emph{The Thirty-eighth Annual Conference on Neural Information Processing Systems}, 2024.

\bibitem[Raventos et~al.(2023)Raventos, Paul, Chen, and Ganguli]{raventos2023pretraining}
Raventos, A., Paul, M., Chen, F., and Ganguli, S.
\newblock Pretraining task diversity and the emergence of non-bayesian in-context learning for regression.
\newblock In \emph{Thirty-seventh Conference on Neural Information Processing Systems}, 2023.

\bibitem[Singh et~al.(2023)Singh, Chan, Moskovitz, Grant, Saxe, and Hill]{singh2023transientnatureemergentincontext}
Singh, A.~K., Chan, S. C.~Y., Moskovitz, T., Grant, E., Saxe, A.~M., and Hill, F.
\newblock The transient nature of emergent in-context learning in transformers.
\newblock \emph{arXiv preprint arXiv:2311.08360}, 2023.

\bibitem[Singh et~al.(2024)Singh, Moskovitz, Hill, Chan, and Saxe]{singh2024needs}
Singh, A.~K., Moskovitz, T., Hill, F., Chan, S.~C., and Saxe, A.~M.
\newblock What needs to go right for an induction head? a mechanistic study of in-context learning circuits and their formation.
\newblock In \emph{International Conference on Machine Learning}, pp.\  45637--45662. PMLR, 2024.

\bibitem[Singh et~al.(2025)Singh, Moskovitz, Dragutinovic, Hill, Chan, and Saxe]{singh2025strategycoopetitionexplainsemergence}
Singh, A.~K., Moskovitz, T., Dragutinovic, S., Hill, F., Chan, S. C.~Y., and Saxe, A.~M.
\newblock Strategy coopetition explains the emergence and transience of in-context learning.
\newblock \emph{arXiv preprint arXiv:2503.05631}, 2025.

\bibitem[Todd et~al.(2024)Todd, Li, Sharma, Mueller, Wallace, and Bau]{todd2024function}
Todd, E., Li, M., Sharma, A.~S., Mueller, A., Wallace, B.~C., and Bau, D.
\newblock Function vectors in large language models.
\newblock In \emph{The Twelfth International Conference on Learning Representations}, 2024.

\bibitem[von Oswald et~al.(2022)von Oswald, Niklasson, Randazzo, Sacramento, Mordvintsev, Zhmoginov, and Vladymyrov]{vonOswald_icl}
von Oswald, J., Niklasson, E., Randazzo, E., Sacramento, J., Mordvintsev, A., Zhmoginov, A., and Vladymyrov, M.
\newblock Transformers learn in-context by gradient descent.
\newblock \emph{arXiv preprint arXiv:2212.07677}, 2022.

\bibitem[Wu et~al.(2023)Wu, Zou, Chen, Braverman, Gu, and Bartlett]{wu2023many}
Wu, J., Zou, D., Chen, Z., Braverman, V., Gu, Q., and Bartlett, P.~L.
\newblock How many pretraining tasks are needed for in-context learning of linear regression?
\newblock \emph{arXiv preprint arXiv:2310.08391}, 2023.

\bibitem[Xie et~al.(2022)Xie, Raghunathan, Liang, and Ma]{xie2022an}
Xie, S.~M., Raghunathan, A., Liang, P., and Ma, T.
\newblock An explanation of in-context learning as implicit bayesian inference.
\newblock In \emph{International Conference on Learning Representations}, 2022.

\bibitem[Yang et~al.(2025)Yang, Lin, Lee, Papailiopoulos, and Nowak]{yang2025taskvectorsincontextlearning}
Yang, L., Lin, Z., Lee, K., Papailiopoulos, D., and Nowak, R.
\newblock Task vectors in in-context learning: Emergence, formation, and benefit.
\newblock \emph{arXiv preprint arXiv:2501.09240}, 2025.

\bibitem[Yin \& Steinhardt(2025)Yin and Steinhardt]{yin2025attention}
Yin, K. and Steinhardt, J.
\newblock Which attention heads matter for in-context learning?
\newblock In \emph{Forty-second International Conference on Machine Learning}, 2025.

\bibitem[Zhang et~al.(2023)Zhang, Frei, and Bartlett]{zhang2023trained}
Zhang, R., Frei, S., and Bartlett, P.~L.
\newblock Trained transformers learn linear models in-context.
\newblock \emph{arXiv preprint arXiv:2306.09927}, 2023.

\bibitem[Zhang et~al.(2024)Zhang, Wu, and Bartlett]{zhang2024context}
Zhang, R., Wu, J., and Bartlett, P.
\newblock In-context learning of a linear transformer block: Benefits of the mlp component and one-step gd initialization.
\newblock \emph{Advances in Neural Information Processing Systems}, 37:\penalty0 18310--18361, 2024.

\bibitem[Zhang et~al.(2025)Zhang, Singh, Latham, and Saxe]{zhang2025trainingdynamicsincontextlearning}
Zhang, Y., Singh, A.~K., Latham, P.~E., and Saxe, A.
\newblock Training dynamics of in-context learning in linear attention.
\newblock \emph{arXiv preprint arXiv:2501.16265}, 2025.

\bibitem[Zucchet et~al.(2025)Zucchet, Bornschein, Chan, Lampinen, Pascanu, and De]{zucchet2025language}
Zucchet, N., Bornschein, J., Chan, S., Lampinen, A., Pascanu, R., and De, S.
\newblock How do language models learn facts? dynamics, curricula and hallucinations.
\newblock \emph{arXiv preprint arXiv:2503.21676}, 2025.

\end{thebibliography}
